\newcommand{\tr}{\mathrm{tr}}
\newcommand{\pian}[2]{\frac{\partial #1}{\partial #2}}
\begin{document}

\title{Towards Optimal One Pass Large Scale Learning with Averaged Stochastic Gradient Descent}
\author{\name Wei Xu \email emailweixu@fb.com \\
\addr Facebook, Inc. $^1$ \\
1601 S. California Ave \\
Palo Alto, CA 94304, USA }

\footnotetext[1]{Major part of the work was done when the author was at NEC Labs America, Inc.}

\editor{}

\date{}
\maketitle
\begin{abstract}
For large scale learning problems, it is desirable if we can
obtain the optimal model parameters by going through the data in
only one pass. \cite{Polyak92} showed that asymptotically the test
performance of the simple average of the parameters obtained by
stochastic gradient descent (SGD) is as good as that of the
parameters which minimize the empirical cost. However, to our
knowledge, despite its optimal asymptotic convergence rate,
averaged SGD (ASGD) received little attention in recent research
on large scale learning. One possible reason is that it may take a
prohibitively large number of training samples for ASGD to reach
its asymptotic region for most real problems. In this paper, we
present a finite sample analysis for the method of
\cite{Polyak92}. Our analysis shows that it indeed usually takes a
huge number of samples for ASGD to reach its asymptotic region for
improperly chosen learning rate. More importantly, based on our
analysis, we propose a simple way to properly set learning rate so
that it takes a reasonable amount of data for ASGD to reach its
asymptotic region. We compare ASGD using our proposed learning
rate with other well known algorithms for training large scale
linear classifiers. The experiments clearly show the superiority
of ASGD.
\end{abstract}
\begin{keywords}
stochastic gradient descent, large scale learning, support vector
machines, stochastic optimization
\end{keywords}

\section{Introduction}
For prediction problems, we want to find a function $f_\theta(x)$
with parameter $\theta$ to predict the value of the outcome
variable $y$ given an observed vector $x$. Typically, the problem
is formulated as an optimization problem:
\begin{eqnarray}
\theta_t^* = \arg \min_\theta \frac{1}{t} \sum_{i=1}^t
(L(f_\theta(x_i),y_i) + R(\theta)) \label{EqEmpiricalMin}
\end{eqnarray}
where $t$ is the number of data points, $\theta_t^*$ is the
parameter that minimize the empirical cost, $(x_i,y_i)$ are the
$i^{th}$ training example, $L(s,y)$ is a loss function which gives
small value if $s$ is a good prediction for $y$, and $R(\theta)$
is a regularization function for $\theta$ which typically gives
small value for small $\theta$. Some commonly used $L$ are:
$\max(0,1-ys)$ for support vector machine (SVM),
$\frac{1}{2}(\max(0,1-ys))^2$ for L2 SVM, and $\frac{1}{2}(y-s)^2$
for linear regression. Some commonly used regularization functions
are: L2 regularization $\frac{\lambda}{2}\|\theta\|^2$, and L1
regularization $\lambda \|\theta\|_1$.

For large scale machine learning problems, we need to deal with
optimization problems with millions or even billions of training
samples. The classical optimization techniques such as interior
point methods or conjugate gradient descent have to go through all
 data points to just evaluate the objective once. Not to say
that they need to go through the whole data set many times in
order to find the best $\theta$.

On the other hand, stochastic gradient descent (SGD) has been
shown to have great promise for large scale learning
\citep{Zhang04,Hazan06,Shalev-Shwartz07,Bottou08,Shalev-Shwartz09,Langford09}.
Let $d=(x,y)$ be one data sample,
$l(\theta,d)=L(f_\theta(x),y)+R(\theta)$ be the cost of $\theta$
for $d$, $g(\theta,\xi)=\pian{l(\theta,d)}{\theta}$ be the
gradient function, and $D_t=(d_1,\cdots,d_t)$ be all the training
samples at $t^{th}$ step. The SGD method updates $\theta$
according to its stochastic gradient:
\begin{eqnarray}
\theta_t = \theta_{t-1} - \gamma_t g(\theta_{t-1},d_t)
\label{EqSGD}
\end{eqnarray}
where $\gamma_t$ is learning rate at the $t^{th}$ step. $\gamma_t$
can be either a scalar or a matrix. Let the expected loss of
$\theta$ over test data be $\mathcal{E}(\theta)=E_d(l(\theta,d))$,
the optimal parameter be $\theta^*=\arg\min_\theta
\mathcal{E}(\theta)$, and the Hessian be $H=\left.
\pian{^2\mathcal{E}(\theta)}{\theta\partial
\theta^T}\right|_{\theta=\theta^*}$. Note that $\theta_t$ and
$\theta_t^*$ are random variables depending on $D_t$. Hence both
$\mathcal{E}(\theta_t)$ and $\mathcal{E}(\theta_t^*)$ are random
variables depending on $D_t$. If $\gamma_t$ is a scalar, the best
asymptotic convergence for the expected excess loss
$E_{D_t}(\mathcal{E}(\theta_t))-\mathcal{E}(\theta^*)$ is
$O(t^{-1})$, which is obtained by using
$\gamma_t=\gamma_0(1+\gamma_0\lambda_0t)^{-1}$, where $\lambda_0$
is the smallest eigenvalue of $H$ and $\gamma_0$ is some constant.
The asymptotic convergence rate of SGD can be potentially benefit
from using second order information
\citep{Bottou08,Schraudolph07,Amari00}. The optimal asymptotic
convergence rate is achieved by using matrix valued learning rate
$\gamma_t = \frac{1}{t}H^{-1}$. If this optimal matrix step size
is used, then asymptotically second order SGD is as good as
explicitly optimizing the empirical loss. More precisely, this
means that both
$tE_{D_t}(\mathcal{E}(\theta_t)-\mathcal{E}(\theta^*))$ and
$tE_{D_t}(\mathcal{E}(\theta_t^*)-\mathcal{E}(\theta^*))$ converge
to a same positive constant.

Since $H$ is unknown in advance, methods for adaptively estimating
$H$ is proposed \citep{Bottou05,Amari00}. However, for high
dimensional data sets, maintaining a full matrix $H$ is too
computationally expensive. Hence various methods for approximating
$H$ have been proposed
\citep{LeCun98,Schraudolph07,Roux08,Bordes09}. However, with the
approximated $H$, the optimal convergence cannot be guaranteed. It
is worth to point out that most of the existing analysis for
second order SGD is asymptotic, namely, that they do not tell how
much data is needed for the algorithm to reach their asymptotic
region.

In order to accelerate the convergence speed of SGD, averaged
stochastic gradient (ASGD) was proposed in \cite{Polyak92}. For
ASGD, the running average $\bar{\theta}_t=\frac{1}{t}\sum_{j=1}^t
\theta_j$ of the parameters obtained by SGD is used as the
estimator for $\theta^*$. \cite{Polyak92} showed a very nice
result that $\bar{\theta}_t$ converges to $\theta^*$ as good as
full second order SGD, which means that if there are enough
training samples, ASGD can obtain the parameter as good as the
empirical optimal parameter $\theta_t^*$ in just one pass of data.
And another advantage of ASGD is that, unlike second order SGD,
ASGD is extremely easy to implement. \cite{Zhang04,Nemirovski09}
gave some nice non-asymptotic analysis for ASGD with a fixed learning
rate. However, the convergence bounds obtained by
\cite{Zhang04,Nemirovski09} are far less appealing than that of
\cite{Polyak92}.

Despite its nice properties, ASGD receives little attention in
recent research for online large scale learning. The reason for
the lack of interest in ASGD might be that its potential good
convergence has not been realized by researchers in real
applications. Our analysis shows the cause of this may due to the
fact the ASGD needs a prohibitively large amount of data to reach
asymptotics if learning rate is chosen arbitrarily.

A typical choice for the learning rate $\gamma_t$ is to make it
decease as fast as $\Theta(t^{-c})$ for some constant $c$. In this
paper, we assume a particular form of learning rate schedule which
satisfies this condition,
 \begin{equation} \label{EqGammaSchedule} \gamma_t=\gamma_0(1+a\gamma_0 t)^{-c} \end{equation}
where $\gamma_0$, $a$ and $c$ are some constants. Based on this
form of learning rate schedule, we provide non-asymptotic analysis
of ASGD. Our analysis shows that $\gamma_0$ and $a$ should to be
properly set according to the curvature of the expected cost
function. $c$ should be a problem independent constant. With our
recipe for setting the learning rate, we show that ASGD
outperforms SGD if the data size is large enough for SGD to reach
its asymptotic region.


To demonstrate the effectiveness of ASGD with the proposed
learning rate schedule, we apply ASGD for training linear
classification and regression models. We compare ASGD with other
prominent large scale SVM solvers on several benchmark tasks. Our
experimental results show the clear advantage of ASGD.

In the rest of the paper, for matrices $X$ and $Y$, $X\le Y$ means
$Y-X$ is positive semi-definite, $\|x\|_A$ is defined as
$\sqrt{x^T A x}$. We will assume $\gamma_t=\gamma_0(1+a\gamma_0
t)^{-c}$ for some constant $\gamma_0>0$, $a>0$ and $0\le c \le 1$
in all the theorems and lemmas. Through out this paper we denote
$\Delta_t=\theta_t-\theta^*$ and
$\bar{\Delta}_t=\bar{\theta}_t-\theta^*$. To help the reader focus
on the main idea, we put most proofs to the Appendix.

The paper is organized as follows: Section \ref{SecLinear}
establish some results on stochastic linear equation; Section
\ref{SecRegression} extends the result to ASGD for quadratic loss
functions; Section \ref{SecNonquadratic} works on general
non-quadratic loss functions; Section \ref{SecImplementation}
discusses some implementation issues; Section \ref{SecExperiments}
shows experimental results; Section \ref{SecConclusion} concludes
the paper; and Appendix includes all the proofs.

\section{Stochastic Linear Equation}
\label{SecLinear}

To motivate the problem, we first take a close look at the SGD
update (\ref{EqSGD}). Let $\bar{g}(\theta)=E(g(\theta,d))$ and the
first order Taylor expansion of $\bar{g}(\theta)$ around
$\theta^*$ be $A\theta-b$, where $A=\left.
\pian{\bar{g}(\theta)}{\theta}\right|_{\theta=\theta^*}$ and
$b=A\theta^*-\bar{g}(\theta^*)=A\theta^*$. Then
$g(\theta_{t-1},d)$ can be decomposed as:
\begin{eqnarray*}
g(\theta_{t-1},d)&=& (A\theta_{t-1}-b) + g(\theta^*,d) + (g(\theta_{t-1},d)-g(\theta^*,d)-\bar{g}(\theta_{t-1})) + (\bar{g}(\theta_{t-1})-A\theta_{t-1}+b)  \\
&=& (A\theta_{t-1}-b) + \xi_t^{(1)} + \xi_t^{(2)}  + \xi_t^{(3)}
\end{eqnarray*}
where $\xi_t^{(1)}=g(\theta^*,d_t)$,
$\xi_t^{(2)}=g(\theta_{t-1},d_t)-g(\theta^*,d_t)-\bar{g}(\theta_{t-1})$
and $\xi_t^{(3)}=\bar{g}(\theta_{t-1})-A\theta_{t-1}+b$. So the
SGD update (\ref{EqSGD}) can be re-written as
\begin{eqnarray}
\theta_t=\theta_{t-1}-\gamma_t (A\theta_{t-1}-b + \xi_t^{(1)} + \xi_t^{(2)}  +
\xi_t^{(3)}) \label{EqSGDDecomposed}
\end{eqnarray}
It is easy to see that $\xi_t^{(1)}$ is martingale with respect to
$d_t$, i.e., $E(\xi_t^{(1)}|d_1,\cdots,d_{t-1})=0$, and has
identical distribution for different $t$. $\xi_t^{(2)}$ is also
martingale with respect to $d_t$. However, as we will see in later
section, its magnitude depends on $\theta_{t-1}-\theta^*$. If
$g(\theta,d)$ is smooth, we have
$\xi_t^{(2)}=O(\|\theta_{t-1}-\theta^*\|)$. For smooth
$\bar{g}(\theta)$, we have
$\xi_t^{(3)}=o(\|\theta_{t-1}-\theta^*\|)$. Both $\xi_t^{(2)}$ and
$\xi_t^{(3)}$ are asymptotically negligible if suitable conditions
are met. We also note that $\xi_t^{(3)}=0$ for quadratic
$l(\theta,\xi)$.

By the above analysis, we first consider the following simple
stochastic approximation procedure which ignores $\xi_t^{(2)}$ and
$\xi_t^{(3)}$:
\begin{eqnarray}
&& \theta_t=\theta_{t-1}-\gamma_t (A\theta_{t-1}-b+\xi_t) \label{EqSGDLinear}\\
&& \bar{\theta}_t=\frac{1}{t}\sum_{i=1}^t \theta_i \label{EqLinearAvg}
\end{eqnarray}
where $A$ is a positive definite matrix with the smallest
eigenvalue $\lambda_0$ and the largest eigenvalue $\lambda_1$,
$\xi_t$ is martingale difference process, i.e.,
$E(\xi_t|\xi_1,\cdots,\xi_{t-1})=0$, the variance of $\xi_t$ is
$E(\xi_t\xi_t^T) = S$. We will see that this algorithm can be used
to find the root $\theta^*$ of equation $A\theta=b$
\begin{theorem}
\label{ThmLinearAvg} If $\gamma_0 \lambda_1 \le 1$ and
$(2c-1)a<\lambda_0$, then the estimator $\bar{\theta}_t$ in
(\ref{EqLinearAvg}) satisfies:
\begin{eqnarray*}
t E(\|\bar{\theta}_t-\theta^*\|_A^2) &\le& \tr(A^{-1} S) +
\frac{(2c_0+c_0^2)(1+a \gamma_0 t)^{c-1}}{c} \tr(A^{-1} S) +
\frac{(1 + c_0)^2 }{\gamma_0^2 t}\|\theta_0-\theta^*\|_{A^{-1}}^2
\end{eqnarray*}
where
\[ c_0=\frac{ac(1+ac\gamma_0)}{(\lambda_0-\max(0,2c-1)a)}
\]
\end{theorem}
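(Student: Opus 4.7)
The plan is to follow the classical Polyak--Juditsky decomposition, adapted to the finite-sample, decreasing step-size setting. Starting from (\ref{EqSGDLinear}) I rearrange the recursion as $A\Delta_{i-1} = \gamma_i^{-1}(\Delta_{i-1}-\Delta_i) - \xi_i$. Summing over $i=1,\ldots,t$ and applying Abel summation to the telescoping piece $\sum_i \gamma_i^{-1}(\Delta_{i-1}-\Delta_i)$ produces the identity
\begin{equation*}
t A \bar{\Delta}_t \;=\; (\gamma_1^{-1}I - A)\Delta_0 \;-\; (\gamma_t^{-1}I - A)\Delta_t \;+\; \sum_{i=1}^{t-1}\bigl(\gamma_{i+1}^{-1}-\gamma_i^{-1}\bigr)\Delta_i \;-\; \sum_{i=1}^t \xi_i .
\end{equation*}
Since $t^2 \|\bar{\Delta}_t\|_A^2 = \|tA\bar{\Delta}_t\|_{A^{-1}}^2$, the task reduces to bounding the expected $A^{-1}$-norm squared of the right-hand side.

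The noise sum is handled by martingale orthogonality: the conditional mean-zero property of $\xi_i$ and $E(\xi_i\xi_i^T)=S$ give $E\|\sum_{i=1}^t \xi_i\|_{A^{-1}}^2 = t\,\tr(A^{-1}S)$, which on dividing by $t$ yields the leading $\tr(A^{-1}S)$ of the bound. The initial-condition term uses $\gamma_0\lambda_1 \le 1$, which forces $0 \preceq \gamma_1^{-1}I - A \preceq \gamma_1^{-1}I$; since this matrix commutes with $A$, the contribution is at most $\gamma_1^{-2}\|\Delta_0\|_{A^{-1}}^2$, and absorbing the factor $(1+a\gamma_0)^{2c}$ into $(1+c_0)^2$ gives the last term $(1+c_0)^2\|\theta_0-\theta^*\|_{A^{-1}}^2/(\gamma_0^2 t)$.

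The crux is controlling the drift sum and the boundary term $\Delta_t$, both of which require a preliminary second-moment estimate of the form $E\|\Delta_i\|^2 \le O(\gamma_i)\tr(A^{-1}S) + (\text{exponentially decaying initial piece})$. Such a bound is obtained by iterating the scalar recursion $E\|\Delta_i\|^2 \le \|I-\gamma_i A\|^2 E\|\Delta_{i-1}\|^2 + \gamma_i^2\tr(S)$, where $\gamma_0\lambda_1 \le 1$ ensures $\|I-\gamma_i A\| \le 1$. The hypothesis $(2c-1)a<\lambda_0$ is precisely what is needed for the induction on $E\|\Delta_i\|^2/\gamma_i$ to close: under the schedule (\ref{EqGammaSchedule}) one has $\gamma_{i+1}^{-1}-\gamma_i^{-1} \sim ac(1+a\gamma_0 i)^{c-1}$, and this rate must be strictly dominated by the contraction rate $\lambda_0$ coming from $A$. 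The sharp proportionality constant that emerges is exactly $c_0=ac(1+ac\gamma_0)/(\lambda_0-\max(0,2c-1)a)$. Feeding this second-moment estimate into the drift sum via Cauchy--Schwarz, together with the integral comparison $\sum_i (\gamma_{i+1}^{-1}-\gamma_i^{-1})^2 \gamma_i \le C(1+a\gamma_0 t)^{c-1}/c$, and then grouping the cross terms with the noise sum (completing the square, where $(1+c_0)^2 - 1 = 2c_0+c_0^2$ naturally appears), produces the middle term $(2c_0+c_0^2)(1+a\gamma_0 t)^{c-1}\tr(A^{-1}S)/c$.

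The main obstacle is the second-moment lemma for $\Delta_i$ and the bookkeeping of constants: one must verify that $(2c-1)a<\lambda_0$ is exactly the threshold ensuring $c_0$ is finite (for $c\le 1/2$ it is automatic, but for larger $c$ the step sizes decay too slowly and the drift in $\gamma_i^{-1}$ nearly cancels the $A$-contraction), and combine the four pieces above so that the final expression matches the stated form without loose factors. Once the second-moment lemma and the Abel-summation identity are in place, the remainder is integral comparison and polynomial bookkeeping on $\gamma_t$.
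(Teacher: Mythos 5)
Your decomposition is the classical Polyak--Juditsky identity and is genuinely different from the paper's route. The paper fully unrolls the recursion so that $\bar{\Delta}_t = \frac{1}{\gamma_0 t}(\bar{X}_0^t-\gamma_0 I)\Delta_0 + \frac{1}{t}\sum_j \bar{X}_j^t \xi_j$, where $\bar{X}_j^t=\sum_{i=j}^t\gamma_j X_{j+1}^i$ is a \emph{deterministic} matrix; it then proves the two-sided operator bound $(I-X_j^t)A^{-1}\le\bar{X}_j^t\le(1+c_0(1+a\gamma_0 j)^{c-1})A^{-1}$ (Lemma \ref{Lemma1}, which is where $(2c-1)a<\lambda_0$ actually enters, via $\kappa=1-\max(0,2c-1)a/\lambda_0>0$ in Lemma \ref{LemmaGamma}), and kills every cross term by martingale orthogonality. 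The entire middle term $(2c_0+c_0^2)(1+a\gamma_0 t)^{c-1}/c$ comes from expanding $(1+c_0(1+a\gamma_0 j)^{c-1})^2$ inside the noise sum; no second-moment estimate for $\Delta_i$ appears anywhere in the proof of Theorem \ref{ThmLinearAvg}, and the cross term $E((I^{(0)})^TAI^{(1)})$ vanishes exactly because $I^{(0)}$ is deterministic.

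The gap in your version is the cross term between $\sum_i\xi_i$ and the drift and boundary terms $\sum_i(\gamma_{i+1}^{-1}-\gamma_i^{-1})\Delta_i$ and $\gamma_t^{-1}\Delta_t$. These are \emph{not} orthogonal to the noise sum, since $\Delta_i$ contains $\xi_1,\dots,\xi_i$. If you handle them by Cauchy--Schwarz or the $L^2$ triangle inequality, a drift term whose second moment is a factor $\epsilon=O((1+a\gamma_0 t)^{c-1})$ smaller than the noise term contributes a cross term of relative size $2\sqrt{\epsilon}=O((1+a\gamma_0 t)^{(c-1)/2})$, not $O((1+a\gamma_0 t)^{c-1})$; ``completing the square'' gives $(1+\sqrt{\epsilon})^2=1+2\sqrt{\epsilon}+\epsilon$ and the $2\sqrt{\epsilon}$ dominates. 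You would therefore prove a correction of order $t^{-(1-c)/2}$ rather than the stated $t^{-(1-c)}$, which in turn weakens the sample-size corollaries that are the point of the paper. To recover the stated rate you must evaluate the cross terms exactly by substituting $\Delta_i=X_1^i\Delta_0+\sum_{k\le i}\gamma_k X_{k+1}^i\xi_k$ and collecting the coefficient of each $\xi_j$ across all the sums --- at which point you have reconstructed $\bar{X}_j^t$ and are running the paper's argument. A secondary inaccuracy: $(2c-1)a<\lambda_0$ is not the closure condition for an induction on $E\|\Delta_i\|^2/\gamma_i$; its role is to keep the exponent $\kappa$ in Lemma \ref{LemmaGamma} positive so that the growth of $\gamma_k^{-1}-\gamma_{k-1}^{-1}$ is dominated by the contraction $(1-\lambda_0\gamma_k)$, making $c_0$ finite.
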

The immediate conclusion from Theorem \ref{ThmLinearAvg} is the
asymptotic convergence bound of $\bar{\theta}_t$.
\begin{corollary} \label{CoralloryLinearAymptoticConvergence} $\bar{\theta}_t$ in (\ref{EqLinearAvg}) satisfies
\[ t E(\|\bar{\theta}_t-\theta^*\|_A^2) \le \tr(A^{-1}S)+O(t^{-(1-c)}) \]
\end{corollary}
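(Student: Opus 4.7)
The plan is to derive the corollary directly by extracting the dominant asymptotic behavior of each of the two residual terms in the bound of Theorem~\ref{ThmLinearAvg}. Since the corollary only asserts a big-$O$ statement as $t\to\infty$, all constants (namely $\gamma_0$, $a$, $c$, $c_0$, $\tr(A^{-1}S)$, and $\|\theta_0-\theta^*\|_{A^{-1}}^2$) can be absorbed into the implicit constant of the $O(\cdot)$.

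First, I would examine the middle term, $\frac{(2c_0+c_0^2)(1+a\gamma_0 t)^{c-1}}{c}\tr(A^{-1}S)$. The only $t$-dependent factor is $(1+a\gamma_0 t)^{c-1}$. Since $a\gamma_0>0$ and $c-1 \le 0$, this factor is bounded above by $(a\gamma_0)^{c-1} t^{c-1}$ for all $t\ge 1$, so the middle term is $O(t^{-(1-c)})$.

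Next I would handle the last term, $\frac{(1+c_0)^2}{\gamma_0^2 t}\|\theta_0-\theta^*\|_{A^{-1}}^2$, which is $O(t^{-1})$. Because the hypothesis of Theorem~\ref{ThmLinearAvg} includes $0\le c\le 1$, we have $1-c\le 1$, hence $t^{-1}\le t^{-(1-c)}$ for every $t\ge 1$. Therefore this third term is also $O(t^{-(1-c)})$.

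Combining the two bounds with the leading $\tr(A^{-1}S)$ term from Theorem~\ref{ThmLinearAvg} yields the stated inequality. There is no genuine obstacle here: the corollary is essentially a repackaging of Theorem~\ref{ThmLinearAvg}, and the only thing to verify is that $t^{-1}$ is indeed dominated by $t^{-(1-c)}$ under the assumption $c\ge 0$, so that both residual terms can be collected into a single $O(t^{-(1-c)})$ remainder.
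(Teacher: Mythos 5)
Your proposal is correct and matches the paper's intent: the paper states this corollary as an immediate consequence of Theorem~\ref{ThmLinearAvg} without writing out a proof, and your derivation (bounding $(1+a\gamma_0 t)^{c-1}$ by $O(t^{-(1-c)})$ and noting $t^{-1}\le t^{-(1-c)}$ for $0\le c\le 1$) is exactly the reasoning being left implicit.
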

The above bound is consistent with Theorem 1 in \cite{Polyak92}
and is the best possible asymptotic convergence rate that can be
achieved by any algorithms \citep{Fabian73}. However, we are more
interested in the non-asymptotic behavior of $\bar{\theta}_t$.
\begin{corollary}
If we choose $a=\lambda_0$, it takes $t=O((\lambda_0
\gamma_0)^{-1})$ samples for $\bar{\theta}_t$ in
(\ref{EqLinearAvg}) to reach the asymptotic region. And at this
point, $\bar{\theta}_t$ begins to become better than $\theta_t$.
\end{corollary}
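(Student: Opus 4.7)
The plan is to substitute $a=\lambda_0$ directly into the bound of Theorem~\ref{ThmLinearAvg} and inspect which of the three terms controls the time-to-asymptotic. First I would compute $c_0$ under $a=\lambda_0$: the denominator $\lambda_0-\max(0,2c-1)a$ becomes $\lambda_0(1-\max(0,2c-1))$, so that $c_0=\frac{c(1+\lambda_0 c\gamma_0)}{1-\max(0,2c-1)}$. Since $c$ is intended to be a problem-independent constant in $(0,1)$ and $\lambda_0\gamma_0$ is small by the condition $\gamma_0\lambda_1\le 1$, this $c_0$ is an $O(1)$ constant; in particular all the dimensionless prefactors in Theorem~\ref{ThmLinearAvg} are $O(1)$.

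Next I would read off the time-to-asymptotic. The bound now reads
\[
tE(\|\bar{\theta}_t-\theta^*\|_A^2)\le \tr(A^{-1}S)\bigl[1+O\bigl((1+\lambda_0\gamma_0 t)^{c-1}\bigr)\bigr]+\frac{O(1)}{\gamma_0^2 t}\|\theta_0-\theta^*\|_{A^{-1}}^2.
\]
The second (bracketed) correction becomes smaller than a constant multiple of the leading $\tr(A^{-1}S)$ exactly once $\lambda_0\gamma_0 t\gtrsim 1$, i.e.\ $t=\Omega((\lambda_0\gamma_0)^{-1})$; beyond that point it decays like $t^{c-1}$. The third, initial-condition, term decays as $1/t$ and is typically the smaller of the two for reasonable $\|\theta_0-\theta^*\|$. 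Hence the transient dies out on the timescale $t\asymp(\lambda_0\gamma_0)^{-1}$, which is the asserted threshold. The crucial point is that with $a=\lambda_0$ the contraction factor $(1+a\gamma_0 t)$ inside the middle term matches the rate at which the bias of $\theta_t$ itself contracts, so no smaller choice of $a$ could give a faster entry into the asymptotic regime.

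For the comparison with $\theta_t$, I would compare the asymptotic error of SGD on the same linear model (\ref{EqSGDLinear}) to the ASGD bound just obtained. At $t\asymp(\lambda_0\gamma_0)^{-1}$ we have $\gamma_t=\gamma_0(1+\lambda_0\gamma_0 t)^{-c}=\Theta(\gamma_0)$, and the standard fixed-step SGD variance calculation gives $E(\|\theta_t-\theta^*\|_A^2)=\Theta(\gamma_0\,\tr(S))$ (up to $A$-dependent factors), whereas the ASGD bound gives $E(\|\bar{\theta}_t-\theta^*\|_A^2)=\Theta(\lambda_0\gamma_0\,\tr(A^{-1}S))$. Since $\lambda_0 A^{-1}\preceq I$ implies $\lambda_0\tr(A^{-1}S)\le\tr(S)$, the averaged estimator is no worse once $t$ crosses this threshold; thereafter ASGD improves as $1/t$ while SGD improves only as $\gamma_t\sim t^{-c}$, so $\bar{\theta}_t$ strictly dominates.

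The main obstacle I anticipate is making the ``becomes better than $\theta_t$'' statement quantitative: it requires either citing or reproving a matching asymptotic variance bound for the plain SGD iterate $\theta_t$ under the same learning-rate schedule. The ASGD side is supplied cleanly by Theorem~\ref{ThmLinearAvg}, but the SGD side needs an independent recursion (solving $E(\Delta_t\Delta_t^T)=(I-\gamma_t A)E(\Delta_{t-1}\Delta_{t-1}^T)(I-\gamma_t A)+\gamma_t^2 S$ to leading order). Everything else is routine substitution and big-$O$ bookkeeping once $c_0=O(1)$ has been established.
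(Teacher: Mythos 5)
Your proposal is correct and follows essentially the same route as the paper: substitute $a=\lambda_0$ into Theorem~\ref{ThmLinearAvg}, note $c_0=O(1)$, observe that the transient terms fall below $\tr(A^{-1}S)$ once $t\asymp(\lambda_0\gamma_0)^{-1}$, and conclude the comparison with SGD via $\lambda_0\tr(A^{-1}S)\le\tr(S)$. The only minor difference is the SGD baseline --- you invoke the fixed-step stationary variance $\Theta(\gamma_0\tr(S))$, whereas the paper compares against the best decaying schedule ($a=\lambda_0$, $c=1$), whose proof it likewise omits as routine; both reduce to the same trace inequality, and the gap you flag (needing an independent variance recursion for plain SGD) is exactly the step the paper leaves unproved.
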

\begin{proof}
Let $t=\frac{K}{\lambda_0 \gamma_0}$, we have
\begin{equation}
E(\|\bar{\Delta}_t\|_A^2) \le \frac{(1 + c_0)^2 }{K^2}
\|\Delta_0\|_A^2 + \frac{\lambda_0 \gamma_0}{K}
\left(1+\frac{(2c_0+c_0^2)(1+K)^{c-1}}{c} \right)  \tr(A^{-1} S)
\label{EqAvgRate}
\end{equation}
On the other hand, the best possible convergence for $\theta_t$ is
obtained with $a=\lambda_0$ and $c=1$:
\begin{equation}
E\left(\|\Delta_t\|_A^2\right) \le
\frac{\|\Delta_0\|_A^2}{(1+K)^2} +\frac{\gamma_0 \tr(S)}{1+K}
\label{EqRate}
\end{equation}
We omit the proof of (\ref{EqRate}), which is similar to that of
Theorem \ref{ThmLinearAvg}. A related (but not exactly same) result
can be found in section 2.1 of \cite{Nemirovski09}. From (\ref{EqAvgRate}) and
(\ref{EqRate}) we can see that both $\theta_t$ and
$\bar{\theta}_t$ need $t=O((\lambda_0 \gamma_0)^{-1})$ to reach
their asymptotic region. However, at this point, $\bar{\theta}_t$
begins to become better than $\theta_t$ because $\lambda_0
\tr(A^{-1}S) \le \tr(S)$.
\end{proof}

\begin{corollary} \label{CorollarySampleSize2}
It takes
$t=\Omega\left(\left(\frac{a}{\lambda_0}\right)^{\frac{c}{1-c}}(\lambda_0
\gamma_0)^{-1}\right)$ samples for $\bar{\theta}_t$ in
(\ref{EqLinearAvg}) to reach the asymptotic region.
\end{corollary}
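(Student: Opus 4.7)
The plan is to read off the assertion directly from Theorem \ref{ThmLinearAvg}, by determining the smallest $t$ for which the non-leading pieces of the bound on $tE(\|\bar{\Delta}_t\|_A^2)$ cease to dominate the asymptotic constant $\tr(A^{-1}S)$. The two non-leading pieces are the slowly-decaying correction term $(2c_0+c_0^2)(1+a\gamma_0 t)^{c-1}\tr(A^{-1}S)/c$ and the initial-condition term $(1+c_0)^2\|\Delta_0\|_{A^{-1}}^2/(\gamma_0^2 t)$; entering the asymptotic region means that both are $O(\tr(A^{-1}S))$.

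First I would pin down the scaling of $c_0$ in the regime in which the corollary is interesting (i.e., $a$ much larger than $\lambda_0$). Under the Theorem's hypothesis $(2c-1)a<\lambda_0$, and assuming $a\gamma_0$ and $c$ are bounded (so $1+ac\gamma_0=O(1)$) while $a$ stays bounded away from $\lambda_0/(2c-1)$, the definition $c_0=ac(1+ac\gamma_0)/(\lambda_0-\max(0,2c-1)a)$ gives $c_0=\Theta(a/\lambda_0)$, with constants depending only on $c$. This reduces the bound to a one-parameter inequality in $a/\lambda_0$.

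Next I would demand that the correction term be $O(\tr(A^{-1}S))$. Since the $O(1/t)$ factor in the initial-condition piece decays faster than the $O(t^{c-1})$ factor of the correction, the correction term is the binding constraint as we approach the asymptotic regime. Imposing $(1+a\gamma_0 t)^{1-c}=\Omega(c_0)$ (the borderline at which $c_0(a\gamma_0 t)^{c-1}$, the leading behavior of $(2c_0+c_0^2)(1+a\gamma_0 t)^{c-1}$ near the threshold, becomes $O(1)$) yields $a\gamma_0 t=\Omega(c_0^{1/(1-c)})=\Omega((a/\lambda_0)^{1/(1-c)})$. Dividing through and rewriting $1/(a\gamma_0)=(\lambda_0/a)\cdot 1/(\lambda_0\gamma_0)$ produces
\[
t=\Omega\!\left(\frac{(a/\lambda_0)^{1/(1-c)-1}}{\lambda_0\gamma_0}\right)=\Omega\!\left(\Bigl(\frac{a}{\lambda_0}\Bigr)^{\!c/(1-c)}(\lambda_0\gamma_0)^{-1}\right),
\]
which is the claim. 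A quick sanity check against Corollary 2 (the case $a=\lambda_0$) recovers $t=\Omega((\lambda_0\gamma_0)^{-1})$, as expected.

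The main subtlety I anticipate is choosing the right power of $c_0$ in Step 2: the bracketed polynomial $(2c_0+c_0^2)$ could naively suggest $c_0^2$ is the governing factor, which would produce the exponent $(1+c)/(1-c)$ rather than $c/(1-c)$. The exponent $c/(1-c)$ is the right one because we only require $\Omega(\cdot)$, and it is picked up from the cross term $c_0(a\gamma_0 t)^{c-1}$ first becoming bounded; the rest is routine algebra in re-expressing $a\gamma_0$ as $(a/\lambda_0)\cdot\lambda_0\gamma_0$.
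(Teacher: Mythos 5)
Your proposal is correct and follows essentially the same route as the paper: both isolate the correction term $(2c_0+c_0^2)(1+a\gamma_0 t)^{c-1}\tr(A^{-1}S)/c$ from Theorem \ref{ThmLinearAvg}, require it to be at most $\tr(A^{-1}S)$ (keeping only the linear-in-$c_0$ part, which suffices for an $\Omega(\cdot)$ lower bound), use $c_0=\Theta(a/\lambda_0)$, and solve $(1+a\gamma_0 t)^{1-c}=\Omega(c_0)$ for $t$. Your final algebra rewriting $1/(a\gamma_0)$ as $(\lambda_0/a)(\lambda_0\gamma_0)^{-1}$ to land on the exponent $c/(1-c)$ matches the paper's computation exactly.
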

\begin{proof}
In order for $\bar{\theta}_t$ to reach its asymptotic region, we
need at least the second term of the right hand side of the bound
in Theorem \ref{ThmLinearAvg} to be less than $\tr(A^{-1}S)$, which
is to say
\[ 2\frac{c_0 (1+a\gamma_0t)^{c-1} }{c} \le 1 \]
Hence
\[ t \ge \frac{1}{a\gamma_0} \left(\frac{2c_0}{c}\right)^{\frac{1}{1-c}} = \left(\frac{2a}{\lambda_0}\right)^{\frac{c}{1-c}}(\lambda_0
\gamma_0)^{-1} \]
\end{proof}

By Corollary \ref{CorollarySampleSize2}, we should limit $a$ in
order to have fast convergence. For the linear problem
(\ref{EqSGDLinear}), we should always use $a=0$. If we use some
arbitrary value such as 1 for $a$, although $\bar{\theta}_t$ still
has asymptotic optimal convergence according to \cite{Polyak92},
but it needs much more samples to reach the asymptotic region in
situations where $\lambda_0$ is very small. For the general SGD
update (\ref{EqSGDDecomposed}), we need to trade-off against the
convergence of $\xi^{(2)}$ and  $\xi^{(3)}$. Hence $a$ should not
be 0. In general, $a$ should be a constant factor times of
$\lambda_0$.

\section{Regression Problem}
\label{SecRegression} In this section, we will analyze the
convergence for regression problems. As we noted in section
\ref{SecLinear}, the SGD update can be decomposed as
(\ref{EqSGDDecomposed}), where $\xi_t^{(3)}=0$ for quadratic loss
of linear regression. As in the proof of Theorem \ref{ThmLinearAvg},
$\bar{\Delta}_t$ can be written as:
\begin{eqnarray*}
 \bar{\Delta}_t = \frac{1}{\gamma_0 t} \bar{X}_0^t \Delta_0 + \frac{1}{t} \sum_{j=1}^t \bar{X}_j^t \xi_j^{(1)} + \frac{1}{t} \sum_{j=1}^t \bar{X}_j^t \xi_j^{(2)}  = I^{(0)} + I^{(1)} + I^{(2)}
\end{eqnarray*}
We already have a bound for $\|I^{(0)}\|_A$ and $\|I^{(1)}\|_A$ in Theorem \ref{ThmLinearAvg}. Now we work on $I^{(2)}$. We will make two assumptions:
\begin{eqnarray}
&& E\left(\left.\| \xi_j^{(2)}\|_{A^{-1}}^2\right|\theta_{j-1}\right)  \le c_1 \|\Delta_{j-1}\|_A^2  \label{A_Xi2Bound} \\
&& \sum_{i=j}^t E\left(\left.\|\Delta_t\|_A^2  \right|\theta_{j-1}
\right) \le c_2 \|\Delta_{j-1}\|_A^2 + c_3 \sum_{i=j}^t  \gamma_t
\label{A_DeltaBound}
\end{eqnarray}
(\ref{A_Xi2Bound}) is related to the continuity of $g(\theta,d)$
and the distribution of $y$. (\ref{A_DeltaBound}) is related to
the convergence of standard SGD. A bound similar to
(\ref{A_DeltaBound}) can be found in section 3.1 of \cite{Hazan06}. Using these
assumptions, we can bound $E\|I^{(2)}\|_A^2$:
\begin{lemma} \label{LemmaRegression}
With Assumption (\ref{A_Xi2Bound}) (\ref{A_DeltaBound}) , we have
\begin{eqnarray}
 t E\|I^{(2)}\|_A^2 \le (1+c_0)^2 c_1 \left(\frac{1+c_2}{t} \|\Delta_0\|_A^2 + \frac{c_3 \gamma_0}{1-c}(1+a\gamma_0t)^{-c}
 \right) \label{EqI2Bound}
\end{eqnarray}
\end{lemma}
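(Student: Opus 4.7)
The plan is to exploit the martingale structure of $\xi_j^{(2)}$ together with the operator bound on $\bar{X}_j^t$ already established in the proof of Theorem~\ref{ThmLinearAvg}, and then close the loop using the two assumptions.

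\textbf{Step 1: Orthogonality in expectation.} The matrix $\bar{X}_j^t$ depends only on $A$ and on the deterministic learning-rate schedule, so it is non-random. Since $E(\xi_j^{(2)}\mid d_1,\ldots,d_{j-1})=0$, the sequence $\{\bar X_j^t\xi_j^{(2)}\}_{j=1}^t$ is a martingale difference sequence with respect to the natural filtration, also under the $A$-inner product. Expanding $\|I^{(2)}\|_A^2=\frac{1}{t^2}\sum_{i,j}(\bar X_i^t\xi_i^{(2)})^TA(\bar X_j^t\xi_j^{(2)})$ and taking expectations annihilates the off-diagonal terms, leaving
\[
 tE\|I^{(2)}\|_A^2 \;=\; \frac{1}{t}\sum_{j=1}^t E\bigl\|\bar X_j^t \xi_j^{(2)}\bigr\|_A^2.
\]

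\textbf{Step 2: Reuse the operator bound on $\bar X_j^t$.} The analysis of the $\xi_j^{(1)}$ term in the proof of Theorem~\ref{ThmLinearAvg} already yields an operator-norm estimate of the form $\|\bar X_j^t v\|_A \le (1+c_0)\|v\|_{A^{-1}}$ for every $v$ (this is exactly what makes the $\xi_j^{(1)}$ contribution reduce to $\tr(A^{-1}S)$ up to the $(1+c_0)^2$ factor). Applying this pointwise and then invoking Assumption~(\ref{A_Xi2Bound}) after conditioning on $\theta_{j-1}$ gives
\[
 E\bigl(\|\bar X_j^t \xi_j^{(2)}\|_A^2 \,\big|\, \theta_{j-1}\bigr)
  \;\le\; (1+c_0)^2\, E\bigl(\|\xi_j^{(2)}\|_{A^{-1}}^2 \,\big|\, \theta_{j-1}\bigr)
  \;\le\; (1+c_0)^2 c_1 \,\|\Delta_{j-1}\|_A^2.
\]
Taking an outer expectation and summing over $j$ reduces the problem to bounding $\sum_{j=1}^t E\|\Delta_{j-1}\|_A^2$.

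\textbf{Step 3: Close the loop via Assumption~(\ref{A_DeltaBound}) and the schedule.} Applying Assumption~(\ref{A_DeltaBound}) at $j=1$ (so conditioning on the deterministic $\theta_0$) gives $\sum_{i=1}^{t-1} E\|\Delta_i\|_A^2 \le c_2\|\Delta_0\|_A^2 + c_3\sum_{i=1}^{t-1}\gamma_i$, which combined with the $i=0$ term yields
\[
 \sum_{j=1}^t E\|\Delta_{j-1}\|_A^2 \;\le\; (1+c_2)\,\|\Delta_0\|_A^2 + c_3\sum_{j=1}^t \gamma_j.
\]
A standard integral comparison for $\gamma_j=\gamma_0(1+a\gamma_0 j)^{-c}$ bounds $\sum_j \gamma_j$ by $\tfrac{\gamma_0 t (1+a\gamma_0 t)^{-c}}{1-c}$ up to lower-order terms, and substituting back produces the two contributions of the claimed bound.

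\textbf{Expected main obstacle.} Steps 1 and 3 are essentially bookkeeping. The delicate point is Step 2: one has to verify that the $\|\cdot\|_{A^{-1}}\to\|\cdot\|_A$ operator bound extracted from the proof of Theorem~\ref{ThmLinearAvg} is tight enough to be reused here with the same constant $(1+c_0)$, rather than a worse one coming from recombining $A$-norms. If that bound is available essentially for free from the earlier argument, the rest of the proof is a one-page calculation; if a separate spectral argument on $\bar X_j^t$ is needed, that is where the real work lies.
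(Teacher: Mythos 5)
Your proposal follows essentially the same route as the paper's proof: martingale orthogonality kills the cross terms, the bound $\bar{X}_j^t \le (1+c_0)A^{-1}$ from Lemma~\ref{Lemma1} (all matrices here commute with $A$, so it transfers directly to $A(\bar{X}_j^t)^2 \le (1+c_0)^2 A^{-1}$) handles your Step~2 concern for free, and Assumptions~(\ref{A_Xi2Bound})--(\ref{A_DeltaBound}) plus the comparison $\sum_{j}\gamma_j \le \frac{(1+a\gamma_0 t)^{1-c}-1}{a(1-c)} \le \frac{\gamma_0 t}{1-c}(1+a\gamma_0 t)^{-c}$ finish it exactly as in the paper. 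The "delicate point" you flag is not an obstacle since the needed operator estimate is already the content of Lemma~\ref{Lemma1}.
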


With the above lemma, we can obtain the following asymptotic
convergence result:
\begin{corollary} For quadratic loss, with assumption (\ref{A_Xi2Bound}) (\ref{A_DeltaBound}), $\bar{\theta}_t$ satisfies
\[ tE\|\bar{\theta}_t-\theta^*\|_A^2 \le \tr(A^{-1}S) + O\left(t^{-c/2}\right)+O\left(t^{-(1-c)}\right) \]
\end{corollary}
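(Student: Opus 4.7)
The idea is to combine Theorem~\ref{ThmLinearAvg} and Lemma~\ref{LemmaRegression} through the decomposition $\bar{\Delta}_t = I^{(0)}+I^{(1)}+I^{(2)}$ introduced at the start of Section~\ref{SecRegression}, which is exact for quadratic loss because $\xi_t^{(3)}\equiv 0$. The sum $I^{(0)}+I^{(1)}$ is precisely the averaged error that would arise from the purely linear iteration (\ref{EqSGDLinear}) driven by the martingale noise $\xi_t^{(1)}=g(\theta^*,d_t)$ alone (whose covariance is $S$), so Theorem~\ref{ThmLinearAvg} applies to it verbatim and yields
\[ tE\|I^{(0)}+I^{(1)}\|_A^2 \le \tr(A^{-1}S)+O(t^{-(1-c)})+O(t^{-1}), \]
while Lemma~\ref{LemmaRegression} gives
\[ tE\|I^{(2)}\|_A^2 \le O(t^{-c})+O(t^{-1}). \]

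To assemble these into a bound on $tE\|\bar{\Delta}_t\|_A^2$, I would apply Minkowski's inequality in the $L^2(\|\cdot\|_A)$ sense,
\[ \sqrt{E\|\bar{\Delta}_t\|_A^2} \le \sqrt{E\|I^{(0)}+I^{(1)}\|_A^2}+\sqrt{E\|I^{(2)}\|_A^2}, \]
then square and multiply by $t$. This produces three contributions: the bound on $I^{(0)}+I^{(1)}$, the bound on $I^{(2)}$, and a cross term $2\sqrt{tE\|I^{(0)}+I^{(1)}\|_A^2}\cdot\sqrt{tE\|I^{(2)}\|_A^2}$. The first factor of the cross term is $O(1)$ (bounded by $\sqrt{\tr(A^{-1}S)}$ plus lower order), and the second factor is $\sqrt{O(t^{-c})}=O(t^{-c/2})$, so the cross term is of order $O(t^{-c/2})$.

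Collecting everything, $tE\|\bar{\Delta}_t\|_A^2 \le \tr(A^{-1}S)+O(t^{-(1-c)})+O(t^{-c/2})+O(t^{-c})+O(t^{-1})$. Since $0<c<1$, the term $O(t^{-c/2})$ dominates both $O(t^{-c})$ and $O(t^{-1})$, which collapses the remainder to the claimed $O(t^{-c/2})+O(t^{-(1-c)})$.

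The main subtlety I expect is the cross term. A crude use of $(a+b)^2\le 2a^2+2b^2$ would lose the sharp constant $\tr(A^{-1}S)$ in front of the leading term, so Minkowski is essential precisely because it preserves that constant; the price is the sub-optimal $t^{-c/2}$ rate (rather than $t^{-c}$) in the error. A secondary care point is to justify that Theorem~\ref{ThmLinearAvg} indeed governs $I^{(0)}+I^{(1)}$, which relies on facts already recorded in Section~\ref{SecLinear}: that $\xi_t^{(1)}$ is a martingale difference sequence with covariance $S$, and that the linearization $A\theta-b$ of $\bar g$ is exact when $l(\theta,d)$ is quadratic.
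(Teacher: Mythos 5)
Your proposal is correct and follows essentially the same route as the paper: the paper's proof is exactly the Minkowski inequality applied to the decomposition $\bar{\Delta}_t=I^{(0)}+I^{(1)}+I^{(2)}$, with the individual bounds (\ref{EqI0Bound}), (\ref{EqI1Bound}) and Lemma~\ref{LemmaRegression}, and your cross-term accounting (yielding the $O(t^{-c/2})$ term) is precisely the computation the paper leaves implicit. Grouping $I^{(0)}+I^{(1)}$ and citing Theorem~\ref{ThmLinearAvg} for the pair, rather than bounding the two pieces separately, is an immaterial difference.
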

\begin{proof}
Note that
\[ (E\|\bar{\Delta}_t\|_A^2)^{1/2} \le
(E\|I^{(0)}\|_A^2)^{1/2}+(E\|I^{(1)}\|_A^2)^{1/2}+(E\|I^{(2)}\|_A^2)^{1/2}
\]
The corollary follows by applying (\ref{EqI0Bound}),
(\ref{EqI1Bound}) and Lemma \ref{LemmaRegression}.
\end{proof}
The best convergence rate is obtained when $c=2/3$. Now we take a close look at the constant factor $c_1$ in
assumption (\ref{A_Xi2Bound}) to have a better understanding of
the non-asymptotic behavior of $t E\|I^{(2)}\|_A^2$.
\begin{lemma}\label{LemmaC1}
For ridge regression $l(\theta,d)=\frac{1}{2}(\theta^T x - y)^2$,
if $\|x\|\le M$, then
\[ E\left(\left.\| \xi_j^{(2)}\|_{A^{-1}}^2\right|\theta_{j-1}\right)  \le \frac{M}{\lambda_0} \|\Delta_{j-1}\|_A^2 \]
\end{lemma}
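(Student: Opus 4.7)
The plan is to exploit the fact that for quadratic loss, the gradient is linear in $\theta$, so $\xi_j^{(2)}$ admits a clean closed form, and then bound the resulting quadratic in $\Delta_{j-1}$ by controlling a single matrix expectation.

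First I would compute the ingredients explicitly. For $l(\theta,d)=\tfrac12(\theta^T x - y)^2$ we have $g(\theta,d)=(\theta^T x - y)x$ and $\bar g(\theta)=E[xx^T]\theta - E[yx]$, so $A=E[xx^T]$ and $\bar g(\theta^*)=0$, i.e.\ $\bar g(\theta_{j-1}) = A\Delta_{j-1}$. Substituting into the definition of $\xi_j^{(2)}$ cancels the $y_j$ terms and gives the key identity
\[
\xi_j^{(2)} \;=\; g(\theta_{j-1},d_j)-g(\theta^*,d_j)-\bar g(\theta_{j-1}) \;=\; (x_j x_j^T - A)\,\Delta_{j-1}.
\]

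Next I would expand the $A^{-1}$-norm and take the conditional expectation. Since $\Delta_{j-1}$ is measurable given $\theta_{j-1}$,
\[
E\!\left(\|\xi_j^{(2)}\|_{A^{-1}}^2 \,\middle|\, \theta_{j-1}\right) \;=\; \Delta_{j-1}^T \,Q\, \Delta_{j-1}, \qquad Q := E\bigl[(xx^T-A)A^{-1}(xx^T-A)\bigr].
\]
Expanding, the cross terms telescope because $E[xx^T]=A$, leaving $Q = E\bigl[(x^T A^{-1} x)\,xx^T\bigr] - A$. Using $A \succeq \lambda_0 I$ we get $A^{-1} \preceq \lambda_0^{-1} I$, so the scalar $x^T A^{-1} x \le \|x\|^2/\lambda_0 \le M^2/\lambda_0$ almost surely, which gives $E[(x^T A^{-1} x)xx^T] \preceq (M^2/\lambda_0)\,A$ and therefore $Q \preceq (M^2/\lambda_0)A - A \preceq (M^2/\lambda_0)A$.

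Finally I would conclude by plugging this matrix inequality back in:
\[
\Delta_{j-1}^T Q\,\Delta_{j-1} \;\le\; \tfrac{M^2}{\lambda_0}\,\Delta_{j-1}^T A\,\Delta_{j-1} \;=\; \tfrac{M^2}{\lambda_0}\,\|\Delta_{j-1}\|_A^2,
\]
which matches the stated bound (modulo the $M$ vs.\ $M^2$ that I would flag; the derivation above gives $M^2/\lambda_0$). The only real obstacle is the matrix-expectation step $E[(x^T A^{-1} x)xx^T] \preceq (M^2/\lambda_0)A$; the rest is a direct algebraic unfolding of $\xi_j^{(2)}$ that is special to the quadratic loss.
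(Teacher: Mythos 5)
Your proof is correct and follows essentially the same route as the paper: both derive $\xi_j^{(2)}=(xx^T-A)\Delta_{j-1}$, reduce the conditional expectation to the matrix $E[xx^TA^{-1}xx^T]-A$, and bound it via $A^{-1}\preceq\lambda_0^{-1}I$. The $M$ versus $M^2$ discrepancy you flag is real but is a typo in the lemma's hypothesis rather than in your argument --- elsewhere (Lemma \ref{LemmaMaxGamma} and Table \ref{TblData}) the paper uses $M$ as a bound on $\|x\|^2$, under which convention the stated constant $M/\lambda_0$ is exactly what your derivation yields.
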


Assuming $\|x\|=M$, Lemma \ref{LemmaMaxGamma} in the Appendix
shows that $\|\Delta_t\|^2$ will diverge if learning rate is
greater than $\frac{2}{M}$. So $\gamma_0\le \frac{2}{M}$ and
$c_1\le \frac{M}{\lambda_0}$. Plugging these bounds for $c_1$ and
$\gamma_0$ into Lemma \ref{LemmaRegression}, we have the following
for $t=\frac{K}{\lambda_0\gamma_0}$,
\[ E\|I^{(2)}\|_A^2 \le 2(1+c_0)^2 \left( \frac{(1+c_2) \lambda_0 \gamma_0 \|\Delta_0\|_A^2}{K^2}+\frac{c_3  \gamma_0 }{(1-c)K(1+K)^c} \right) \]
Note that the best possible SGD error bound is
$\frac{\|\Delta_0\|_A^2}{(1+K)^2}+\frac{c_3 \gamma_0}{1+K}$ with
$a=\lambda_0$ and $c=1$. We see that $E\|I^{(2)}\|_A^2$ is negligible compared to
the error of SGD if $t > O((\lambda_0 \gamma_0)^{-1})$. Together
with the analysis in Section \ref{SecLinear}, we conclude that
ASGD begins to outperform SGD after $t>O((\lambda_0
\gamma_0)^{-1})$. The conclusion we draw in this section applies
not only to the case of $y$ with constant norm. Similar conclusion
can be drawn if $y$ is normally distributed or if each dimension
of $y$ is independently distributed, and/or if L2 regularization is used.

Based on above
analysis, for linear regression problems, we propose to use the
following values for (\ref{EqGammaSchedule}) to calculate the
learning rate: $ \gamma_0=1/M$, $a=\lambda_0$, $c=2/3$. We will
see that in the next section for general non-quadratic loss,
optimal $c$ is different since we need to further consider the
convergence of $\xi_t^{(3)}$.

\section{Non-quadratic loss} \label{SecNonquadratic}
For non-quadratic loss, we need to analyze the contribution of
$\xi^{(3)}$ to the error. We need the following two additional
assumptions:
\begin{eqnarray}
&& E\left(\left.\| \xi_j^{(3)} \|_{A^{-1}}\right|\theta_{j-1}\right) \le c_4 \|\theta_{j-1}-\theta^*\|_A^2 \label{A_Xi3Bound}\\
&& \sum_{i=1}^t E(\|\Delta_t\|_A^4 ) \le c_5 \|\Delta_0\|_A^4 +
c_6 \sum_{i=1}^t  \gamma_t \label{A_DeltaBound2}
\end{eqnarray}
Similar to (\ref{A_Xi2Bound}), (\ref{A_Xi3Bound}) is related to
the continuity of $g(\theta,d)$ and the distribution of $x$ and
$y$. Similar to (\ref{A_DeltaBound}), (\ref{A_DeltaBound2}) is
related to the convergence of standard SGD. We note that the
asymptotic normality of $\theta_t$ \citep{Fabian68} suggests that
assumption (\ref{A_DeltaBound2}) is reasonable.

\begin{lemma} \label{LemmaNonQuadratic}
With Assumption (\ref{A_Xi2Bound}) (\ref{A_DeltaBound})
(\ref{A_Xi3Bound}) and (\ref{A_DeltaBound2}) , we have
\begin{eqnarray}
t E\|I^{(3)}\|_A^2 \le  \frac{(1+c_0)^2c_4^2}{t}\left((1+2c_2)c_5
\|\Delta_0\|_A^4 + (2c_2  c_3 \|\Delta_0\|_A^2 +(1+2c_2)c_6)
\gamma_1^t + c_3^2 (\gamma_1^t)^2 \right) \nonumber
\end{eqnarray}
where $\gamma_1^t=\sum_{s=1}^t \gamma_s$.
\end{lemma}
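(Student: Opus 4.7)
The decomposition $\bar{\Delta}_t = I^{(0)} + I^{(1)} + I^{(2)} + I^{(3)}$ that underlies the proof of Theorem~\ref{ThmLinearAvg} adds the term $I^{(3)} = \frac{1}{t}\sum_{j=1}^t \bar{X}_j^t \xi_j^{(3)}$ in the non-quadratic setting. Unlike $\xi^{(1)}$ and $\xi^{(2)}$, the quantity $\xi_j^{(3)} = \bar{g}(\theta_{j-1}) - A\theta_{j-1} + b$ is a deterministic function of $\theta_{j-1}$, hence not a martingale difference, so cross terms do not vanish in $E\|I^{(3)}\|_A^2$. The plan is therefore to abandon orthogonality from the outset, bound $\|I^{(3)}\|_A$ by the triangle inequality, then square and expand the resulting sum of $\|\Delta_{j-1}\|_A^2$ factors.

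Concretely, the first step is to record the operator estimate $\|\bar{X}_j^t v\|_A \le (1+c_0)\|v\|_{A^{-1}}$ already established in the proof of Theorem~\ref{ThmLinearAvg}: it is exactly the statement $\|A\bar{X}_j^t\|_{\mathrm{op}} \le 1+c_0$ that produces the $(1+c_0)^2\|\Delta_0\|_{A^{-1}}^2/(\gamma_0 t)^2$ initial-condition term there. Combining this with assumption~(\ref{A_Xi3Bound}), which since $\xi_j^{(3)}$ is $\theta_{j-1}$-measurable collapses to the pointwise bound $\|\xi_j^{(3)}\|_{A^{-1}} \le c_4 \|\Delta_{j-1}\|_A^2$, yields
\[ \|I^{(3)}\|_A \le \frac{1}{t}\sum_{j=1}^t \|\bar{X}_j^t \xi_j^{(3)}\|_A \le \frac{(1+c_0)c_4}{t}\sum_{j=1}^t \|\Delta_{j-1}\|_A^2. \]
Squaring and taking expectation reduces the lemma to controlling $E\bigl[(\sum_{j=1}^t \|\Delta_{j-1}\|_A^2)^2\bigr]$ using only the SGD-style recurrences~(\ref{A_DeltaBound}) and~(\ref{A_DeltaBound2}).

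For this expected square I would write $a_j = \|\Delta_{j-1}\|_A^2$ and expand $(\sum_j a_j)^2 = \sum_j a_j^2 + 2\sum_{i<j} a_i a_j$. The diagonal term $\sum_j E\|\Delta_{j-1}\|_A^4$ is handled directly by~(\ref{A_DeltaBound2}), producing the $c_5\|\Delta_0\|_A^4$ and $c_6 \gamma_1^t$ contributions. For the off-diagonal part, condition on $\theta_{i-1}$ for the smaller index, pull $\|\Delta_{i-1}\|_A^2$ outside the inner expectation, and invoke~(\ref{A_DeltaBound}) to bound $\sum_{j>i} E[\|\Delta_{j-1}\|_A^2 \mid \theta_{i-1}] \le c_2 \|\Delta_{i-1}\|_A^2 + c_3 \gamma_1^t$. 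This turns the double sum into $c_2\sum_i E\|\Delta_{i-1}\|_A^4 + c_3 \gamma_1^t \sum_i E\|\Delta_{i-1}\|_A^2$, which closes by another application of~(\ref{A_DeltaBound2}) and by~(\ref{A_DeltaBound}) taken at $j=1$ (where the conditioning on $\theta_0$ is vacuous).

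The hard part will be purely combinatorial: matching the target coefficients $(1+2c_2)c_5$, $2c_2 c_3$, $(1+2c_2)c_6$, and $c_3^2$ forces one to be careful about whether the $\|\Delta_0\|^2$ and $\|\Delta_0\|^4$ boundary terms are absorbed into the $c_2, c_5$ constants or kept separate, and about which side of the inner/outer conditioning picks up the factor of $\gamma_1^t$ versus the tail $\sum_{k\ge i}\gamma_k$. No new inequality beyond the operator estimate from Theorem~\ref{ThmLinearAvg} and assumptions~(\ref{A_Xi2Bound})--(\ref{A_DeltaBound2}) is required; the remaining work is bookkeeping.
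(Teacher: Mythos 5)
Your proposal is correct and follows essentially the same route as the paper: bound $\bar{X}_j^t$ by $(1+c_0)A^{-1}$, reduce $\xi_j^{(3)}$ to $c_4\|\Delta_{j-1}\|_A^2$ via (\ref{A_Xi3Bound}), expand the square into diagonal and cross terms, and close with (\ref{A_DeltaBound2}) on the diagonal and with (\ref{A_DeltaBound}) conditioned on the earlier iterate on the cross terms. The one bookkeeping point you flag is real: replacing the tail $\sum_{k>j}\gamma_k$ by $\gamma_1^t$ before summing over $j$ yields $2c_3^2(\gamma_1^t)^2$ rather than the stated $c_3^2(\gamma_1^t)^2$; to recover the exact constant you must swap the summation order and use $\sum_{k}\gamma_k\gamma_1^{k-1}\le\frac{1}{2}(\gamma_1^t)^2$, which is what the paper does.
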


\begin{corollary}
For non-quadratic loss, with assumption (\ref{A_Xi2Bound})
(\ref{A_DeltaBound}) (\ref{A_Xi3Bound}) and (\ref{A_DeltaBound2}),
if $c>\frac{1}{2}$, then $\bar{\theta}_t$ satisfies
\[ tE\|\bar{\theta}_t-\theta^*\|_A^2 \le \tr(A^{-1}S) + O\left(t^{-(c-1/2)}\right)+O\left(t^{-(1-c)}\right) \]
\end{corollary}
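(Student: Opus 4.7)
The plan is to mirror the proof of the quadratic corollary, but with one extra error term to track. Writing the SGD update in the decomposed form \eqref{EqSGDDecomposed} and unrolling the averaged iterate, I would obtain
\[
\bar{\Delta}_t = I^{(0)} + I^{(1)} + I^{(2)} + I^{(3)},
\qquad I^{(3)} = \frac{1}{t}\sum_{j=1}^t \bar{X}_j^t \xi_j^{(3)},
\]
exactly the representation used in Section \ref{SecRegression}, except the $\xi^{(3)}$ piece is no longer zero. The three previously bounded pieces are controlled by the estimates already used in the proof of the regression corollary: Theorem \ref{ThmLinearAvg} (via the separate $I^{(0)}$ and $I^{(1)}$ bounds \eqref{EqI0Bound} and \eqref{EqI1Bound}) and Lemma \ref{LemmaRegression}. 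The only new ingredient is Lemma \ref{LemmaNonQuadratic}, which bounds $tE\|I^{(3)}\|_A^2$ in terms of $\gamma_1^t=\sum_{s=1}^t \gamma_s$.

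Next I would estimate $\gamma_1^t$ under the schedule \eqref{EqGammaSchedule}. Comparing the sum to the integral and using $c<1$,
\[
\gamma_1^t \;=\; \sum_{s=1}^t \gamma_0 (1+a\gamma_0 s)^{-c}
\;=\; \Theta\!\left(t^{1-c}\right).
\]
Substituting into Lemma \ref{LemmaNonQuadratic}, the three summands behave as $O(1/t)$, $O(\gamma_1^t/t)=O(t^{-c})$, and $O((\gamma_1^t)^2/t)=O(t^{1-2c})$. Under the hypothesis $c>\tfrac{1}{2}$, the last is the dominant one, giving
\[
t\,E\|I^{(3)}\|_A^2 \;=\; O\!\left(t^{-(2c-1)}\right),
\qquad \bigl(t\,E\|I^{(3)}\|_A^2\bigr)^{1/2} \;=\; O\!\left(t^{-(c-1/2)}\right).
\]

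The last step assembles the pieces via Minkowski,
\[
\sqrt{t\,E\|\bar{\Delta}_t\|_A^2}\;\le\;\sum_{k=0}^{3}\sqrt{t\,E\|I^{(k)}\|_A^2},
\]
then squares and keeps the leading terms. The $I^{(0)}+I^{(1)}$ contribution contributes $\tr(A^{-1}S)$ plus an $O(t^{-(1-c)})$ correction; Lemma \ref{LemmaRegression} contributes $O(t^{-c/2})$; the new $I^{(3)}$ contributes $O(t^{-(c-1/2)})$. Cross terms between a $\sqrt{\tr(A^{-1}S)}$ factor and any of the vanishing terms produce the same orders again, and cross terms between two vanishing pieces are strictly lower order.

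The one subtlety, and where I expect the main bookkeeping effort to lie, is recognizing that for $c\in(\tfrac{1}{2},1)$ the exponent ordering $-(c-1/2) > -c/2 > -(1-c)$ or $-(1-c)>\ldots$ depending on $c$ means the $O(t^{-c/2})$ rate from $I^{(2)}$ is absorbed by the slower $O(t^{-(c-1/2)})$ rate from $I^{(3)}$, so that only the latter survives in the final bound. After this simplification one obtains
\[
t\,E\|\bar{\theta}_t-\theta^*\|_A^2 \;\le\; \tr(A^{-1}S) + O\!\left(t^{-(c-1/2)}\right) + O\!\left(t^{-(1-c)}\right),
\]
as claimed. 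The actual obstacle is not a technical inequality — the moment bounds are all supplied by earlier lemmas — but making the order-of-magnitude comparisons carefully enough to justify the simplification.
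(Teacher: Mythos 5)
Your proposal is correct and follows essentially the same route as the paper: the decomposition $\bar{\Delta}_t=I^{(0)}+I^{(1)}+I^{(2)}+I^{(3)}$, the Minkowski-type inequality on $(E\|\cdot\|_A^2)^{1/2}$, and the bounds (\ref{EqI0Bound}), (\ref{EqI1Bound}), Lemma \ref{LemmaRegression} and Lemma \ref{LemmaNonQuadratic}, with $\gamma_1^t=\Theta(t^{1-c})$ making the $(\gamma_1^t)^2/t$ term dominant in Lemma \ref{LemmaNonQuadratic}. The order-of-magnitude bookkeeping you spell out (in particular that $O(t^{-c/2})$ is absorbed into $O(t^{-(c-1/2)})$ since $c\le 1$) is exactly what the paper leaves implicit.
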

\begin{proof}
Note that
\[ (E\|\bar{\Delta}_t\|_A^2)^{1/2} \le
(E\|I^{(0)}\|_A^2)^{1/2}+(E\|I^{(1)}\|_A^2)^{1/2}+(E\|I^{(2)}\|_A^2)^{1/2}+(E\|I^{(3)}\|_A^2)^{1/2}
\]
The corollary follows by applying (\ref{EqI0Bound}),
(\ref{EqI1Bound}), Lemma \ref{LemmaRegression} and Lemma
\ref{LemmaNonQuadratic}.
\end{proof}
The best convergence rate is obtained when $c=3/4$, which is
different from that for quadratic loss.
\section{Implementation} \label{SecImplementation}

In this section, we discuss how we implement ASGD for linear
models $f_\theta(x)=\theta^Tx$ with L2 regularization. The running
average can be recursively updated by
$\bar{\theta}_t=(1-\frac{1}{t})\bar{\theta}_{t-1}+\frac{1}{t}\theta_t$,
which is very easy to implement. However, for sparse data sets,
this can be very costly compared to SGD since $\theta_t$ is
typically a dense vector. Consider the following average
procedure:
\begin{eqnarray*}
\theta_t=(1-\lambda \gamma_t) \theta_{t-1} - \gamma_t g_t
\mbox{\quad,\quad} \bar{\theta}_t=(1-\eta_t)\bar{\theta}_{t-1} +
\eta_t \theta_t
\end{eqnarray*}
where $\lambda$ is the L2 regularization coefficient,
$g_t=\pian{L(\theta_{t-1}^T
x_t,y_t)}{\theta_{t_1}}=L_s(\theta_{t-1}^T x_t,y_t) x_t$, and
$\eta_t$ is the rate of averaging. Hence $g_t$ is sparse when
$x_t$ is sparse. We want to take the advantage of the sparsity of
$x_t$ for updating $\theta_t$ and $\bar{\theta}_t$. Let
\[ \alpha_t=\frac{1}{\prod_{i=1}^t (1-\lambda \gamma_i)} \mbox{\quad,\quad} \beta_t=\frac{1}{\prod_{i=1}^t (1-\eta_i)} \mbox{\quad,\quad} u_t=\alpha_t \theta_t \mbox{\quad,\quad} \bar{u}_t=\beta_t \bar{\theta}_t \]
\begin{eqnarray*}
\end{eqnarray*}

After some manipulation, we get the following:
\begin{eqnarray*}
u_t&=&u_{t-1} - \alpha_t \gamma_t g_t \\
\bar{u}_t &=& \bar{u}_{t-1} + \beta_t \eta_t \theta_t = \bar{u}_0 + \sum_{i=1}^t \frac{\beta_i \eta_i}{\alpha_i} u_i \\
&=& \bar{u}_0 + \sum_{i=1}^t \frac{\beta_i \eta_i}{\alpha_i} \left(u_t + \sum_{j=i+1}^t \alpha_j \gamma_j g_j\right) \\
&=& \bar{u}_0 + u_t \sum_{i=1}^t  \frac{\eta_i \beta_i}{\alpha_i} + \sum_{j=1}^t \left(\sum_{i=1}^{j-1} \frac{\eta_i \beta_i}{\alpha_i} \right) \alpha_j \gamma_j g_j\\
\end{eqnarray*}
Now define $\tau_t = \sum_{i=1}^t \frac{\eta_i \beta_i}{\alpha_i}$ and $\hat{u}_t = \hat{u}_{t-1}+ \tau_{t-1} \alpha_t \gamma_t g_t$ with $\hat{u}_0 = \bar{u}_0$, we get
\begin{eqnarray*}
\bar{u}_t &=& \bar{u}_0 + \tau_t u_t + \sum_{j=1}^t \tau_{t-1} \alpha_j \gamma_j g_j =  \tau_t u_t + \hat{u}_t
\end{eqnarray*}
Hence we obtain the following efficient algorithm for updating
$\bar{\theta}_t$:

\begin{algorithm}
\caption{Sparse ASGD} \label{AlgSparseASGD}
\begin{algorithmic}
\STATE $ \alpha_0=1 \mbox{\quad,\quad} \beta_0=1
\mbox{\quad,\quad} \tau_0=0 \mbox{\quad,\quad} u_0=\bar{\theta}_0
\mbox{\quad,\quad} \hat{u}_0=\bar{\theta}_0$

\WHILE{ $t\le T$ }

\STATE $g_t=L_s(\frac{1}{\alpha_{t-1}} u_{t-1}^T x_t,y_t) x_t$

\STATE $\alpha_t=\frac{\alpha_{t-1}}{1-\lambda \gamma_t} $

\STATE $\beta_t=\frac{\beta_{t-1}}{1-\eta_t}$

\STATE $u_t=u_{t-1} - \alpha_t \gamma_t g_t$

\STATE $\hat{u}_t=\hat{u}_{t-1}+\tau_{t-1} \alpha_t \gamma_t g_t$

\STATE $\tau_t=\tau_{t-1}+\frac{\eta_t \beta_t}{\alpha_t} $

\ENDWHILE
\end{algorithmic}
\end{algorithm}
At any step of the algorithm, $\bar{\theta}_t$ can be obtained by
$\bar{\theta}_t=\frac{\bar{u}_t}{\beta_t}=\frac{\tau_t
u_t+\hat{u}_t}{\beta_t}$. Note that in Algorithm
\ref{AlgSparseASGD}, none of the operations involves two dense
vectors. Thus the number of operations per sample is $O(Z)$,
where $Z$ is the number of non-zero elements in $x$.

From Theorem \ref{ThmLinearAvg} we can see that if
$\|\Delta_0\|_{A^{-1}}^2$ is large compared to $\tr(A^{-1}S)$,
then the error is dominated by $I^{(0)}$ at the beginning. This
can happen if noise is small compared to $\|\Delta_0\|$. It is
possible to further improve the performance of ASGD by discarding
$\theta_t$ from averaging during the initial period of training.
We want to find a point $t_0$ whereafter averaging becomes
beneficial. For this, we maintain an exponential moving average
$\hat{\theta}_t=0.99 \hat{\theta}_{t-1}+ 0.01 \theta_t$ and
compare the moving average of the empirical loss of
$\hat{\theta}_t$ and $\theta_t$. Once $\hat{\theta}_t$ is better
than $\theta_t$, we begin the ASGD procedure.

\section{Experiments} \label{SecExperiments}
In this section, we provide 3 sets of experiments. The first
experiment illustrate the importance of learning rate scheduling
for ASGD. The second experiment illustrates the asymptotic optimal
convergence of ASGD. In the third set of experiments, we apply
ASGD on many public benchmark data sets and compare it with
several state of the art algorithms.

\subsection{Effect of learning rate scheduling}
Our first experiment is used to show how different learning rate
schedule affects the convergence of ASGD using a synthetic
problem. The exemplar optimization problem is $\min_\theta E_x
((\theta-x)^TA(\theta-x))$, where $A$ is a symmetric 100x100
matrix with eigenvalues $[1, 1, 1, 0.02 \cdots 0.02]$ and $x$
follows normal distribution with zero mean and unit covariance. It
can be shown that the optimal $\theta$ is $\theta^*=0$. Figure
\ref{FigToy3} shows the excess risk
$\mathcal{E}(\theta_t)-\mathcal{E}(\theta^*)$ of the solution vs.
number of training samples $t$. We note that in this particular
example the excess risk is simply $\theta_t^T A \theta_t$. For the
good example of ASGD (ASGD in the figure), we use our proposed
learning rate schedule $\gamma_t=(1+ 0.02 t)^{-2/3}$ according to
Section \ref{SecRegression}. For a bad example of ASGD (ASGD\_BAD
in the figure), we use $\gamma_t=(1+t)^{-1/2}$, which looks simple
and also has optimal asymptotic convergence according to Corollary
\ref{CoralloryLinearAymptoticConvergence}. Figure \ref{FigToy3}
also shows the performance of standard SGD using learning rate
schedule $\gamma_t=(1+0.02 t)^{-1}$ and batch method
$\theta_t=\frac{1}{t}\sum_{j=1}^t x_t$. We see that both ASGD and
ASGD\_BAD eventually outperforms SGD and come close to the batch
method. However, it takes only a few thousands example for ASGD to
get to the asymptotic region, while it takes hundreds of thousands
of examples for ASGD\_BAD. This huge difference illustrates the
significant role of learning rate scheduling for ASGD.

\begin{figure}[ht]
  \centering
\epsfxsize=10cm \epsfysize=5.5cm
\epsfbox{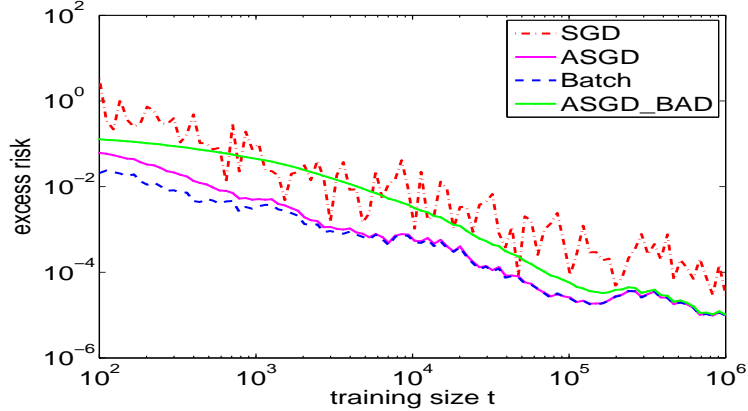}
\caption{\label{FigToy3} ASGD with proposed learning rate schedule (ASGD) and an arbitrarily chosen learning rate schedule (ASGD\_BAD).}
\end{figure}

\subsection{Asymptotic optimal convergence}

Our second experiment is used to show the asymptotic optimality of
ASGD for linear regression. For this purpose, we generate
synthetic regression problem $y=x^T \theta^*+\epsilon$, where $x$
is $N=100$ dimensional vector following Gaussian distribution with
zero mean and covariance $A$, the eigenvalues of $A$ are evenly
spread from 0.01 to 1, $\theta^*$ is a vector with all dimension
equal to 1, $\epsilon$ follows Gaussian distribution with zero
mean and unit variance. We compare ASGD with SGD and batch method.
We use $\gamma_0=1/\tr(A)$ for both ASGD and SGD. For batch
method, we simply calculate $\theta_t$ as $\theta_t=(\sum_{i=1}^t
x_i x_i^T)^{-1} \sum_{i=1}^t x_i y_i$. Figure \ref{FigToy} shows
the excess risk $\mathcal{E}(\theta_t)-\mathcal{E}(\theta^*)$ of
the solution vs. number of training samples $t$. As the figure
shows, after about $10^4$ examples, the accuracy of ASGD starts to
be close to batch solution while the solution of SGD remains more
than 10 times worse than ASGD. Note that although ASGD and batch
solution has similar accuracy, ASGD is considerably fast than
batch method since ASGD only need $O(N)$ computation per sample
while batch method need $O(N^2)$ computation per sample.
\begin{figure}[ht]
  \centering
\epsfxsize=10cm \epsfysize=5.5cm
\epsfbox{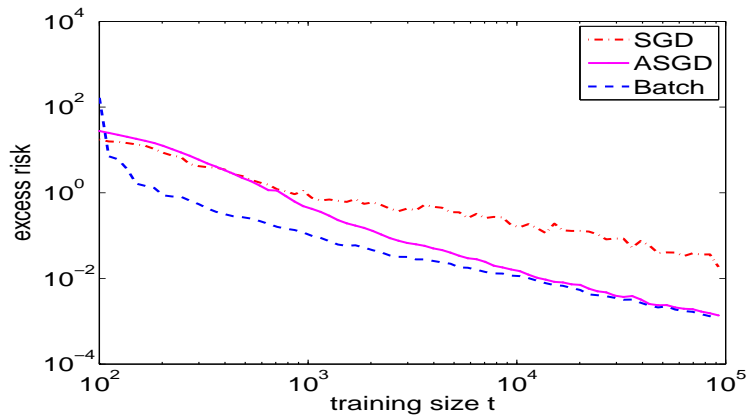}
\caption{\label{FigToy} Compare ASGD with batch method.}
\end{figure}

\subsection{Experiments on benchmark data sets}
In the third set of experiments, we compare ASGD with several
other algorithms for training large scale linear models: online
limited-memory BFGS (oLBFGS) of \cite{Schraudolph07}, stochastic
gradient descent (SGD2) of \cite{Bottou07}, dual coordinate
descent (LIBLINEAR) of \cite{Fan08}, Pegasos of
\cite{Shalev-Shwartz07} and SGDQN of \cite{Bordes09}. We performed
extensive evaluation of ASGD on many data sets. Due to space
limit, we only show detailed results on four tasks in this paper.
COVTYPE is the detection of class 2 among 7 forest cover types
(Blackard et al). All dimensions are normalized between 0 and 1.
DELTA is a synthetic data set from the PASCAL Large Scale
Challenge \citep{Sonnenburg08}. We use the default data
preprocessing provided by the challenge organizers. RCV1 is the
classification of documents belonging to class CCAT in RCV1 text
data set \citep{Lewis04}. We use the same preprocessing as
provided in \cite{Bottou07}. MNIST9 is the classification of digit
9 against all other digits in MNIST digit image data set
\citep{LeCun98}. For this task, we generate our own image feature
vectors for recognition. The experiments for these four tasks use
squared hinge loss $L(s,y)=\frac{1}{2}(\max(0,1-ys))^2$ with $L2$
regularization $R(\theta)=\frac{\lambda}{2}\|\theta\|_2^2$. Since
$\lambda_0$ is unknown, we use the regularization coefficient
$\lambda$ as $\lambda_0$, which is a lower bound for true
$\lambda_0$. Table \ref{TblData} summarizes the data sets, where
$M$ is the $\max \|x\|^2$ calculated from 1000 samples, $t_0$ is
the point where average begins (See Section
\ref{SecImplementation}). Figure \ref{FigComparison} shows the
test error rate (left), elapsed time (middle) and test cost
(right) at different points within first two passes of training
data.

We also include more experimental results on data sets from Pascal
Large Scale Challenge. However, to save space, we only show
figures for test error rate. All experiments use the default data
preprocessing provided by the challenge organizers. Table
\ref{TblData2} summarize the data sets. Figure \ref{FigSet1} and
Figure \ref{FigSet2} shows result for L2 SVM, logistic regression
and SVM. LIBLINEAR is not included in the figures for logistic
regression because the dual coordinate descent method used by
LIBLINEAR cannot solve logistic regression. Although the theory of
ASGD only applies to smooth cost functions, we also include the
results of SVM to satisfy the possible curiosity of some readers.

As we can see from the figures, ASGD clearly outperforms all other
5 algorithms in terms accuracy in most of the data sets. In fact,
for most of the data sets, ASGD reaches good performance with only
one pass of data, while many other algorithms still perform poorly
at that point. The only exception is the \emph{beta} data set,
where all methods performs equally bad because the two classes in
this data set are not linearly separable. Moreover, the
performance of the other 5 methods tend to be more volatile, while
performance of ASGD is more robust due to average. In terms of
time spent on one pass of data, ASGD is similar to the other
methods except oLBFGS, which means that ASGD needs less time to
reach similar test performance compared to the other methods.
Another interesting point is that although the current theory of
ASGD is based on the assumption that cost function is smooth, as
shown in the figures, ASGD also works pretty well with non-smooth
loss such as hinge loss.
\begin{table}[htbp]
  \centering
\caption{\label{TblData} Data Set Summary}
\begin{tabular}{|l|l|l|l|l|l|l|l|l|}
\hline
       & description          & type   & dim & train size & test size & $\lambda$   & $M$ & $t_0$ \\
\hline
covtype& forest cover type    & sparse & 54        & 500k       & 81k     & $10^{-6}$ & 6.8 & 100 \\
delta  & synthetic data       & dense  & 500       & 400k       & 50k    & $10^{-2}$ & $3.8\times 10^3$ & 100 \\
rcv1   & text data            & sparse & 47153     & 781k       & 23k    & $10^{-5}$ & 1 & 781 \\
mnist9 & digit image features & dense  & 2304      & 50k        & 10k    & $10^{-3}$ & $2.1\times 10^4$ & 128 \\
\hline
\end{tabular}
\end{table}

\begin{figure}[ht]
\begin{tabular}{ccc}
\epsfxsize=4.7cm \epsfysize=4.4cm
\epsfbox{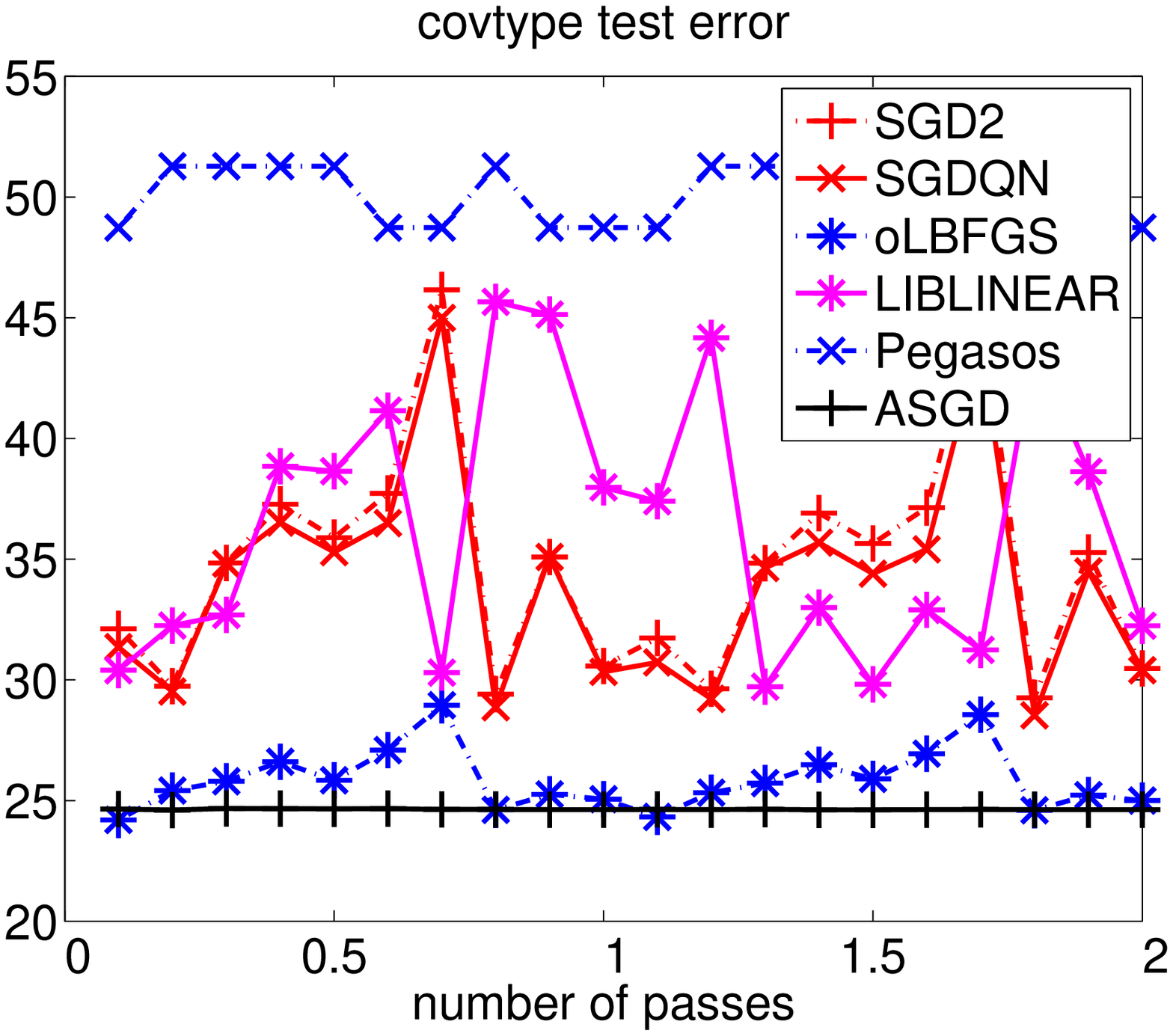} &
\epsfxsize=4.7cm \epsfysize=4.4cm
\epsfbox{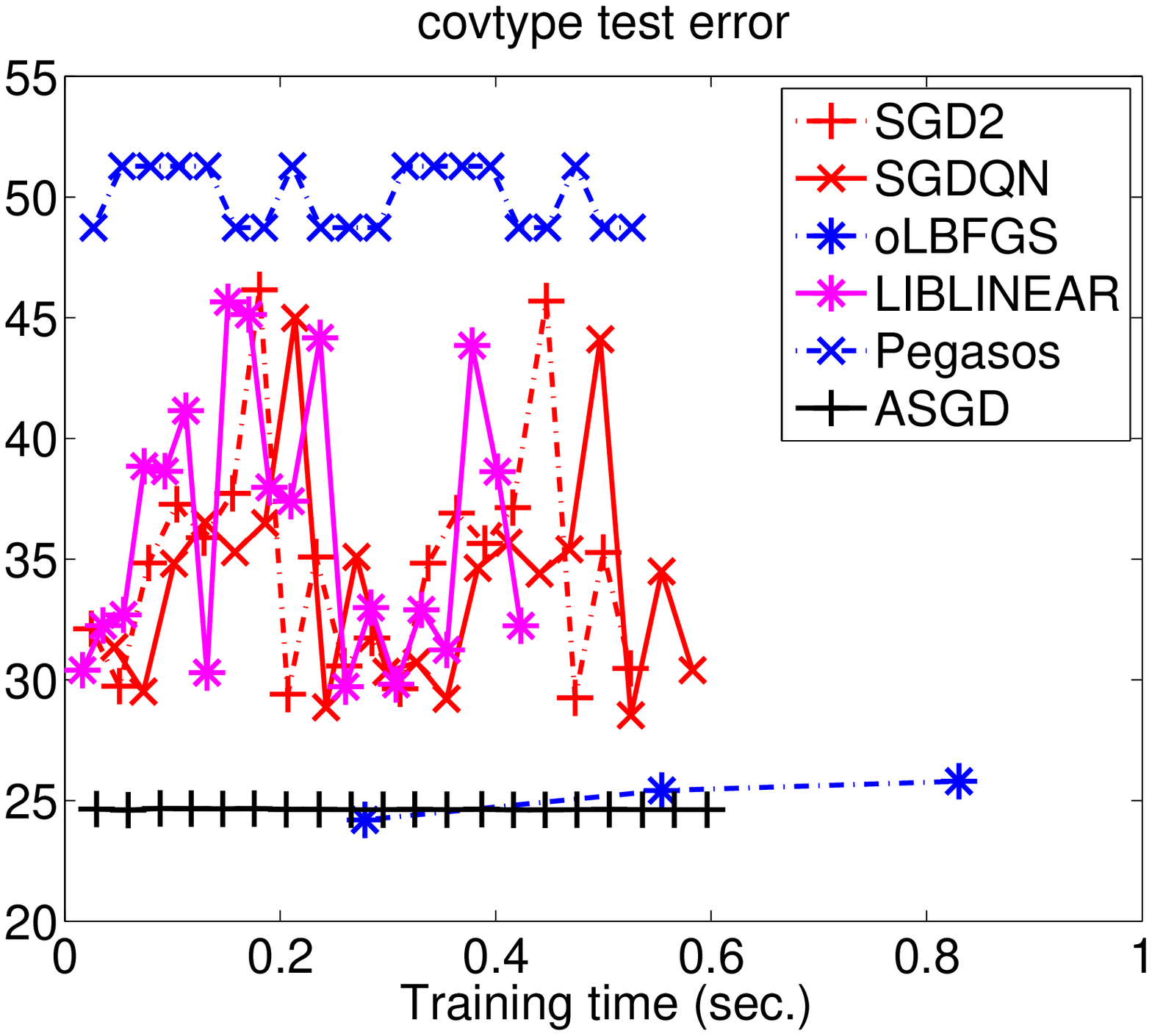} &
\epsfxsize=4.7cm \epsfysize=4.4cm
\epsfbox{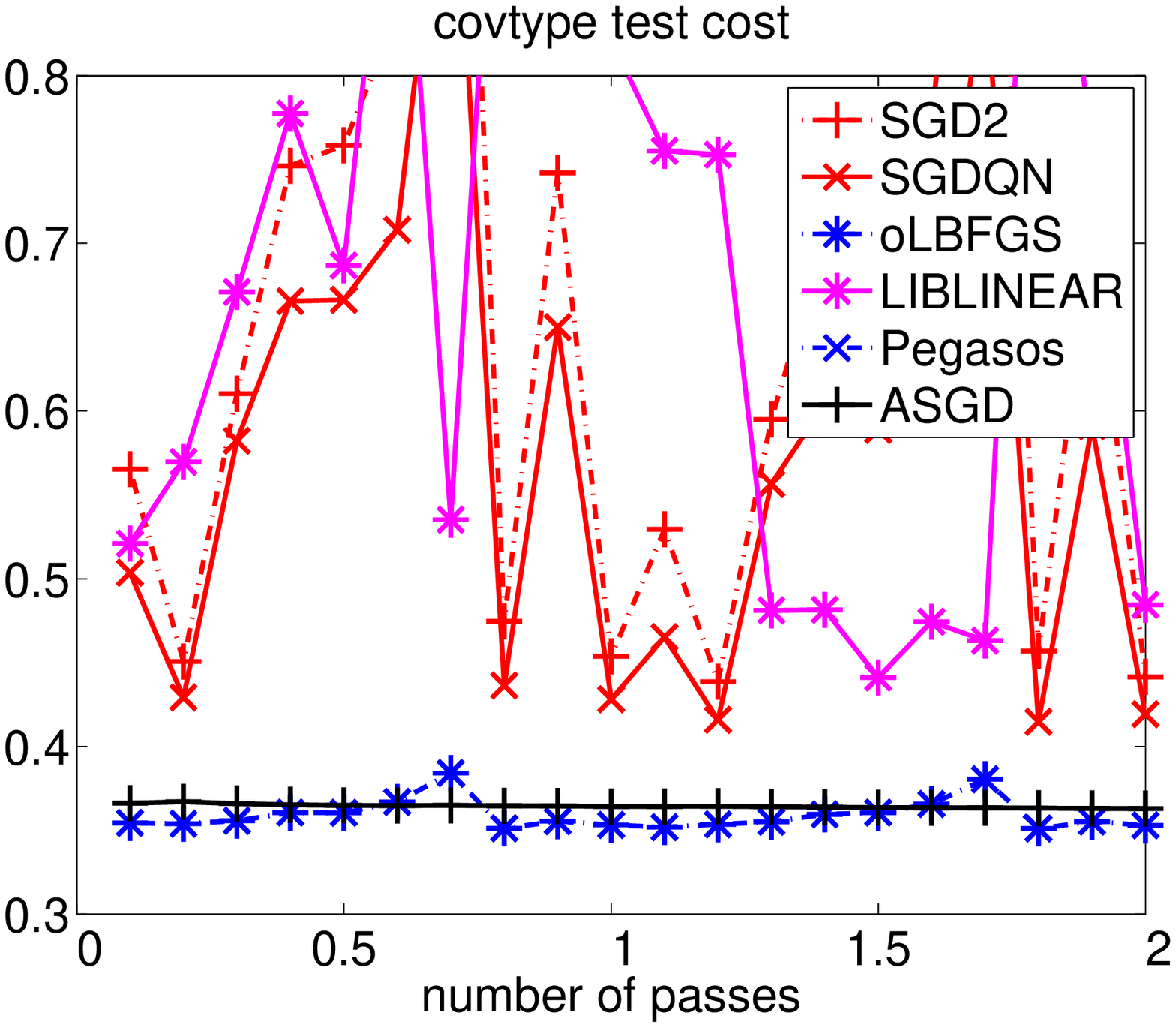} \\

\epsfxsize=4.7cm \epsfysize=4.4cm
\epsfbox{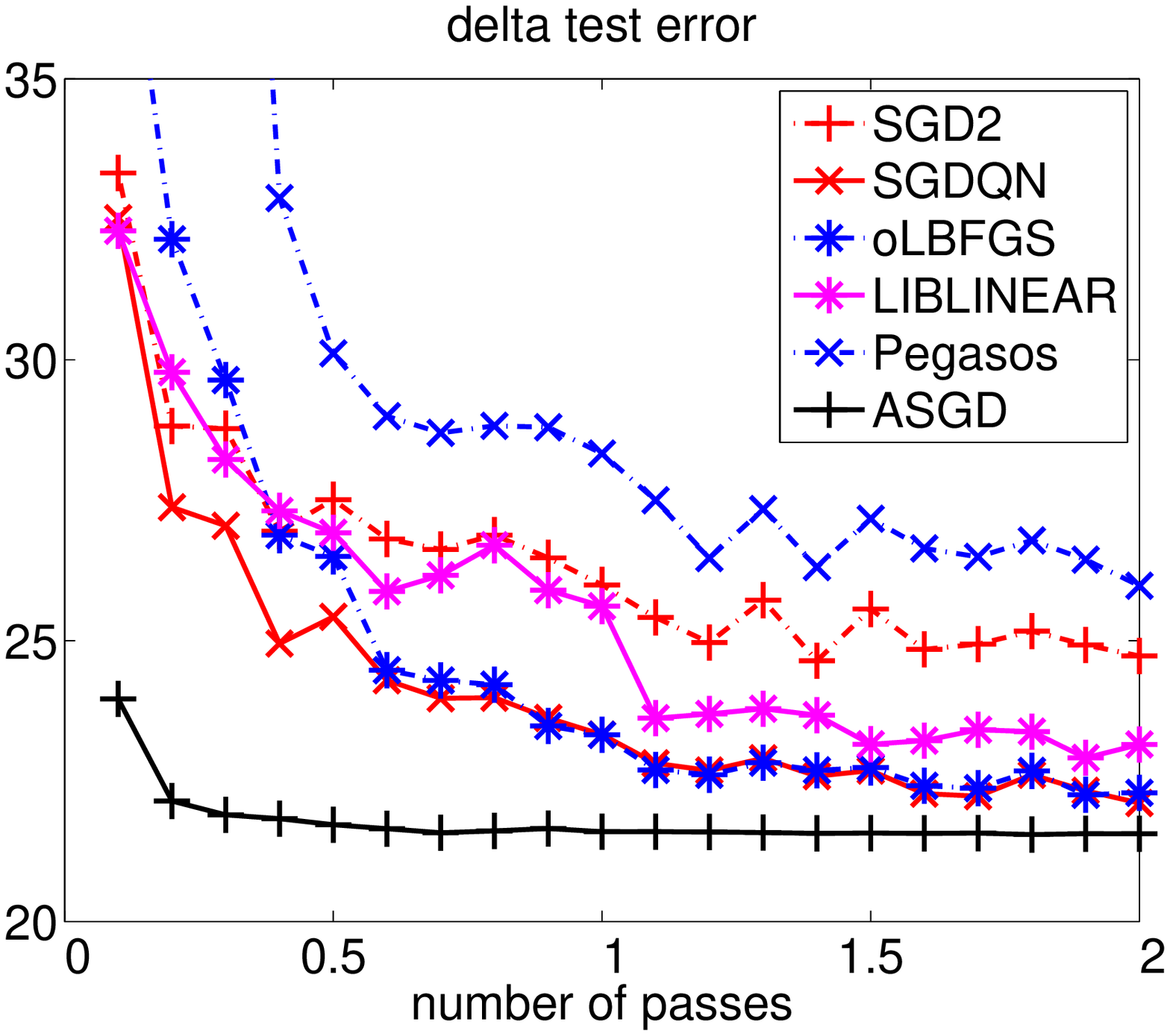} &
\epsfxsize=4.7cm \epsfysize=4.4cm
\epsfbox{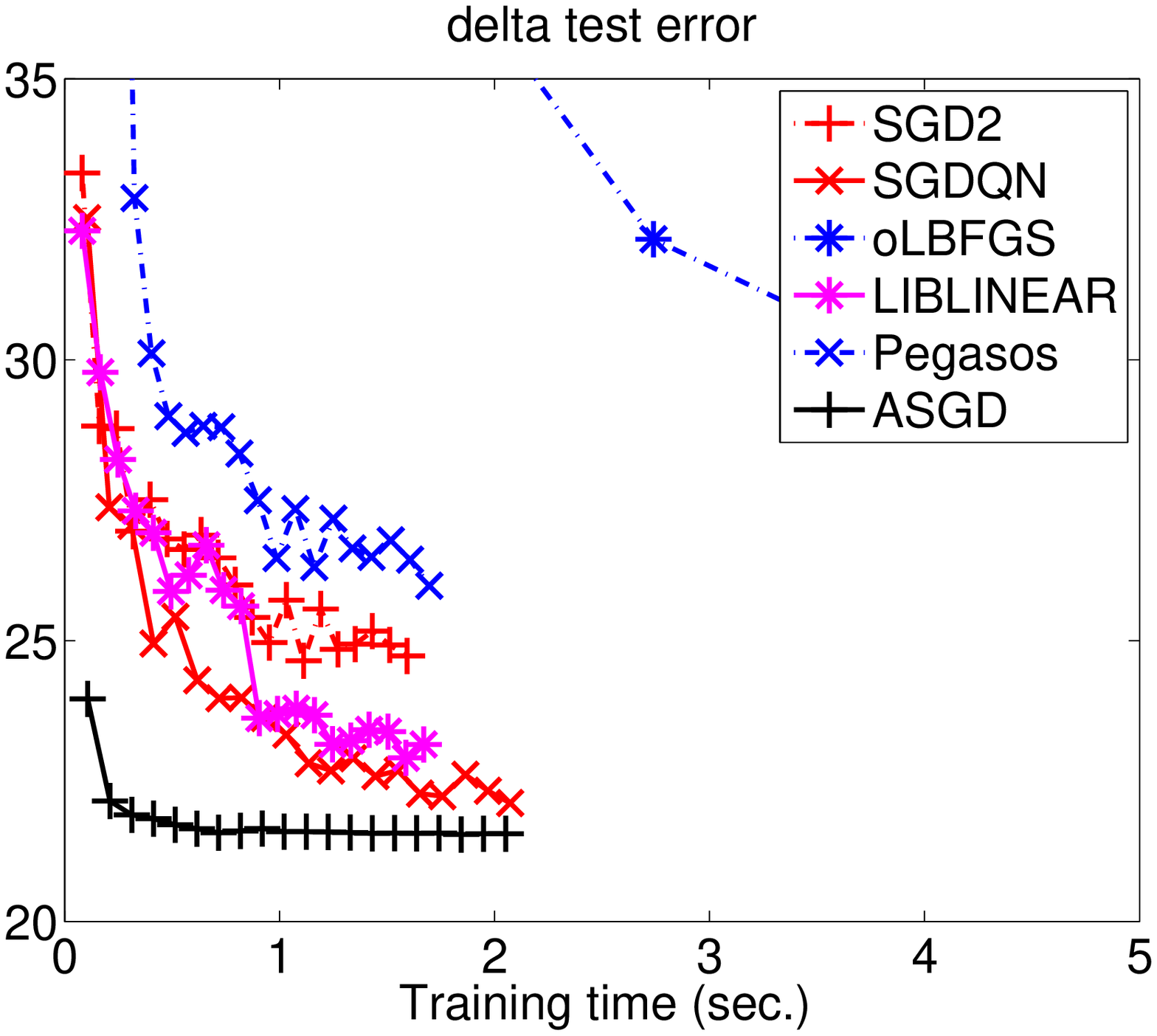} &
\epsfxsize=4.7cm \epsfysize=4.4cm
\epsfbox{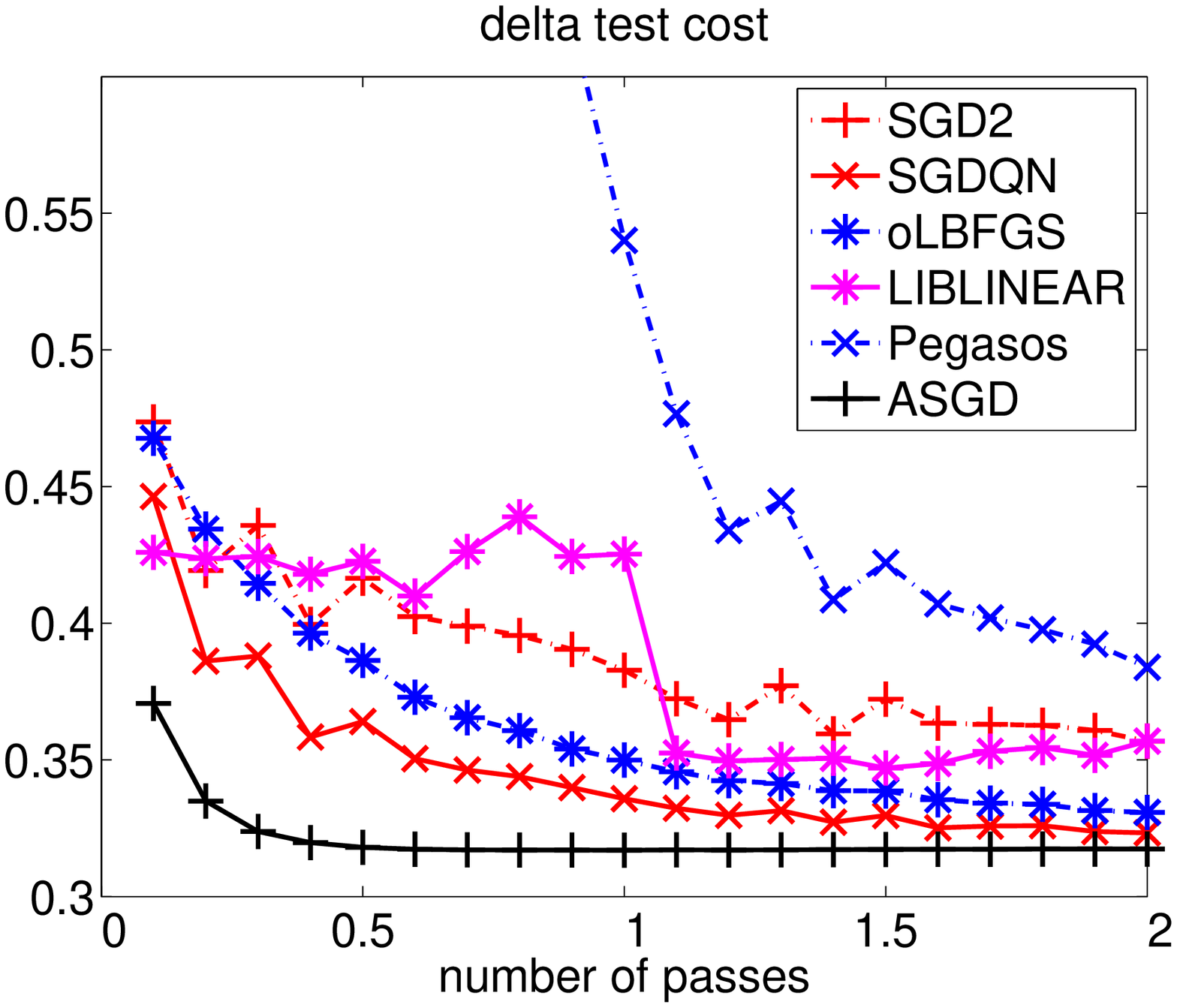} \\

\epsfxsize=4.7cm \epsfysize=4.4cm
\epsfbox{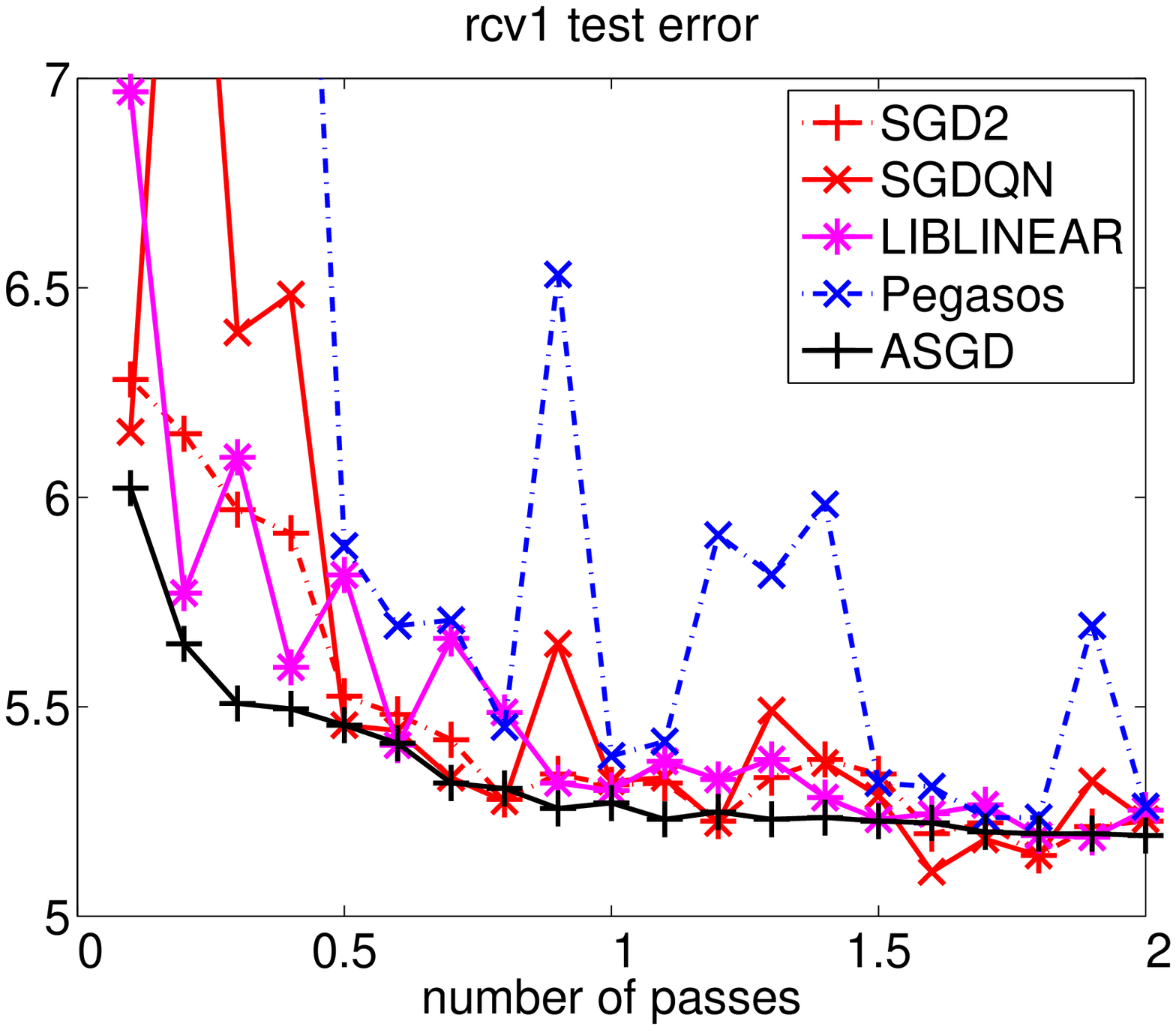} &
\epsfxsize=4.7cm \epsfysize=4.4cm
\epsfbox{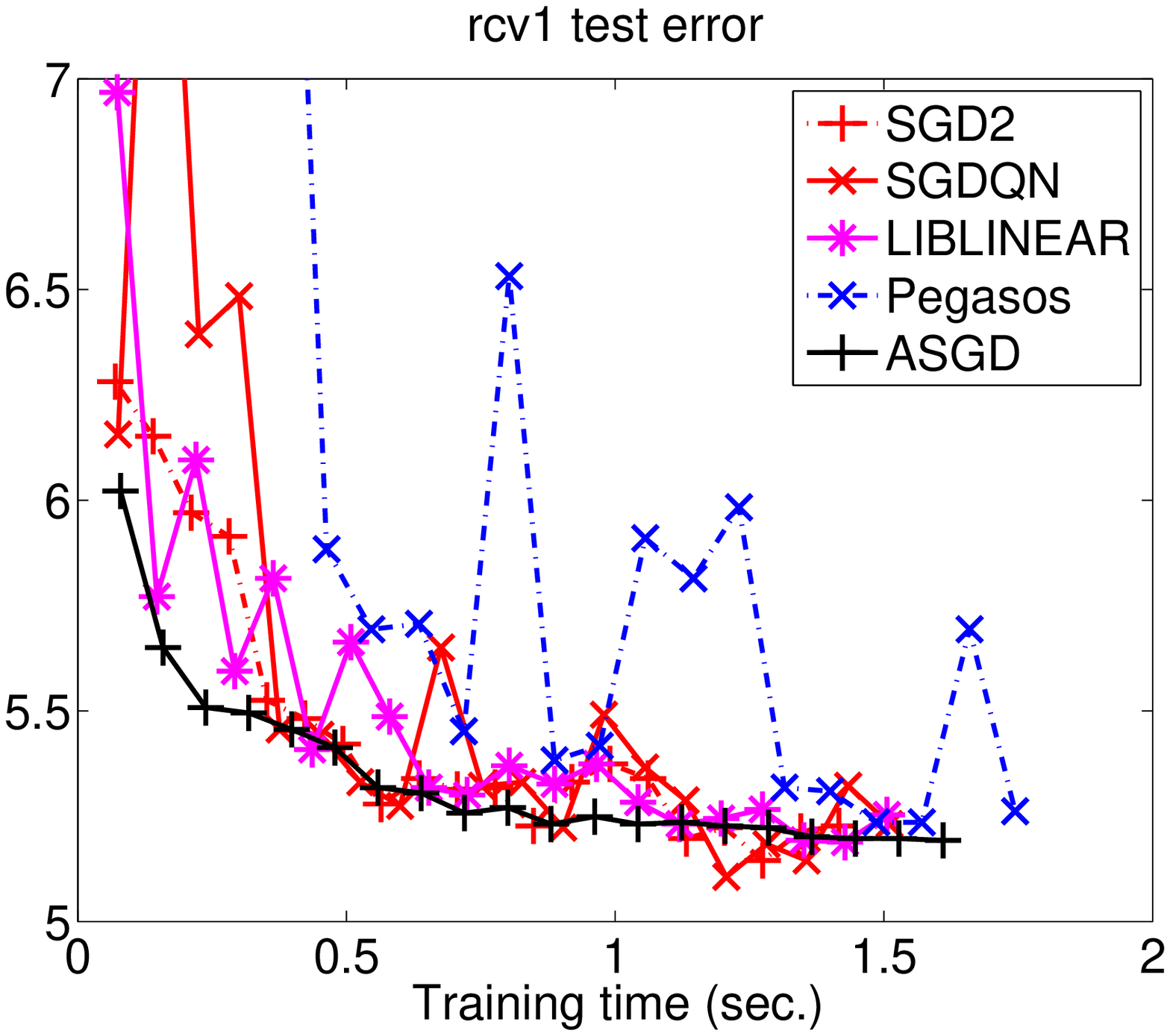} &
\epsfxsize=4.7cm \epsfysize=4.4cm
\epsfbox{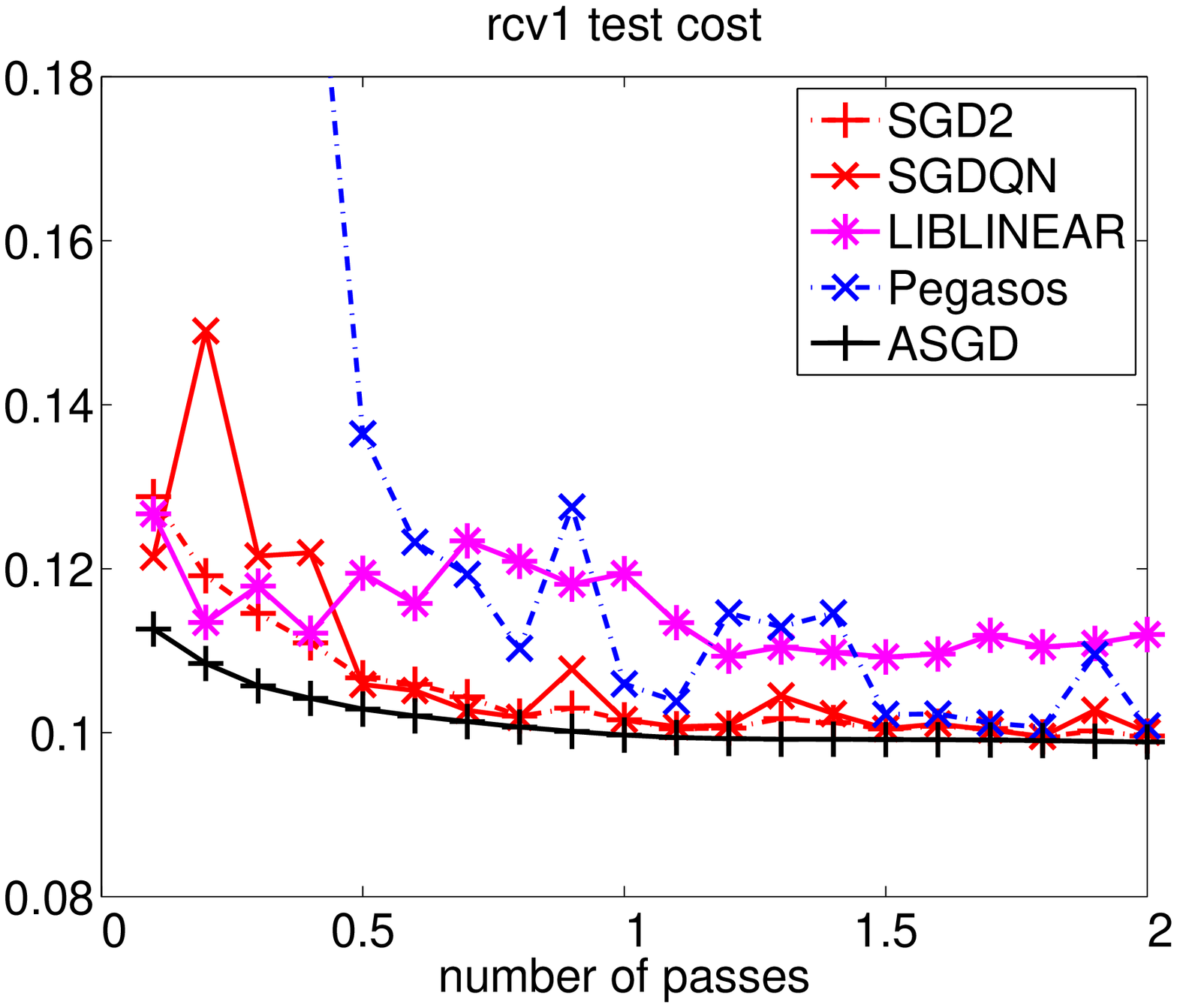} \\

\epsfxsize=4.6cm \epsfysize=4.4cm
\epsfbox{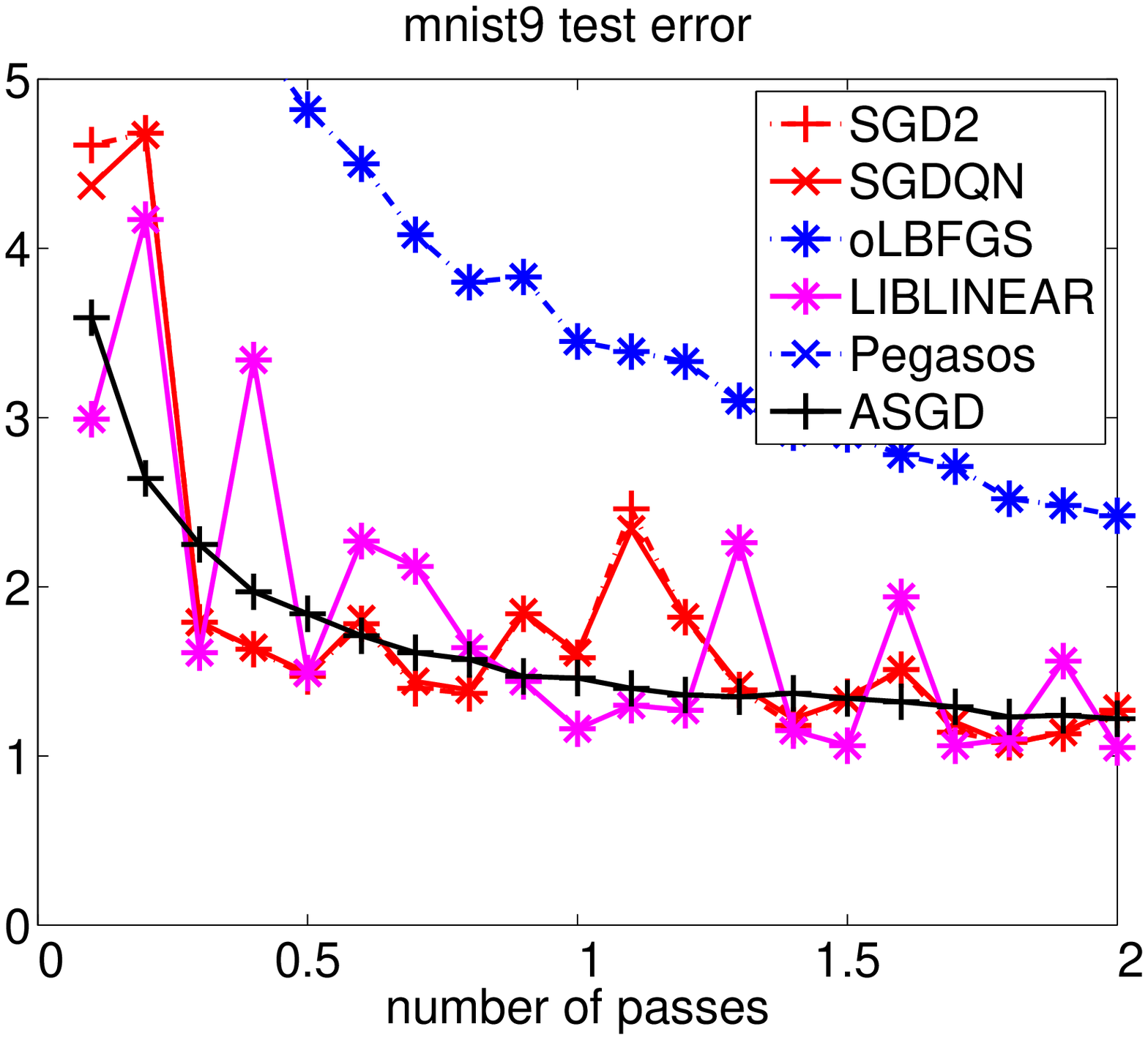} &
\epsfxsize=4.6cm \epsfysize=4.4cm
\epsfbox{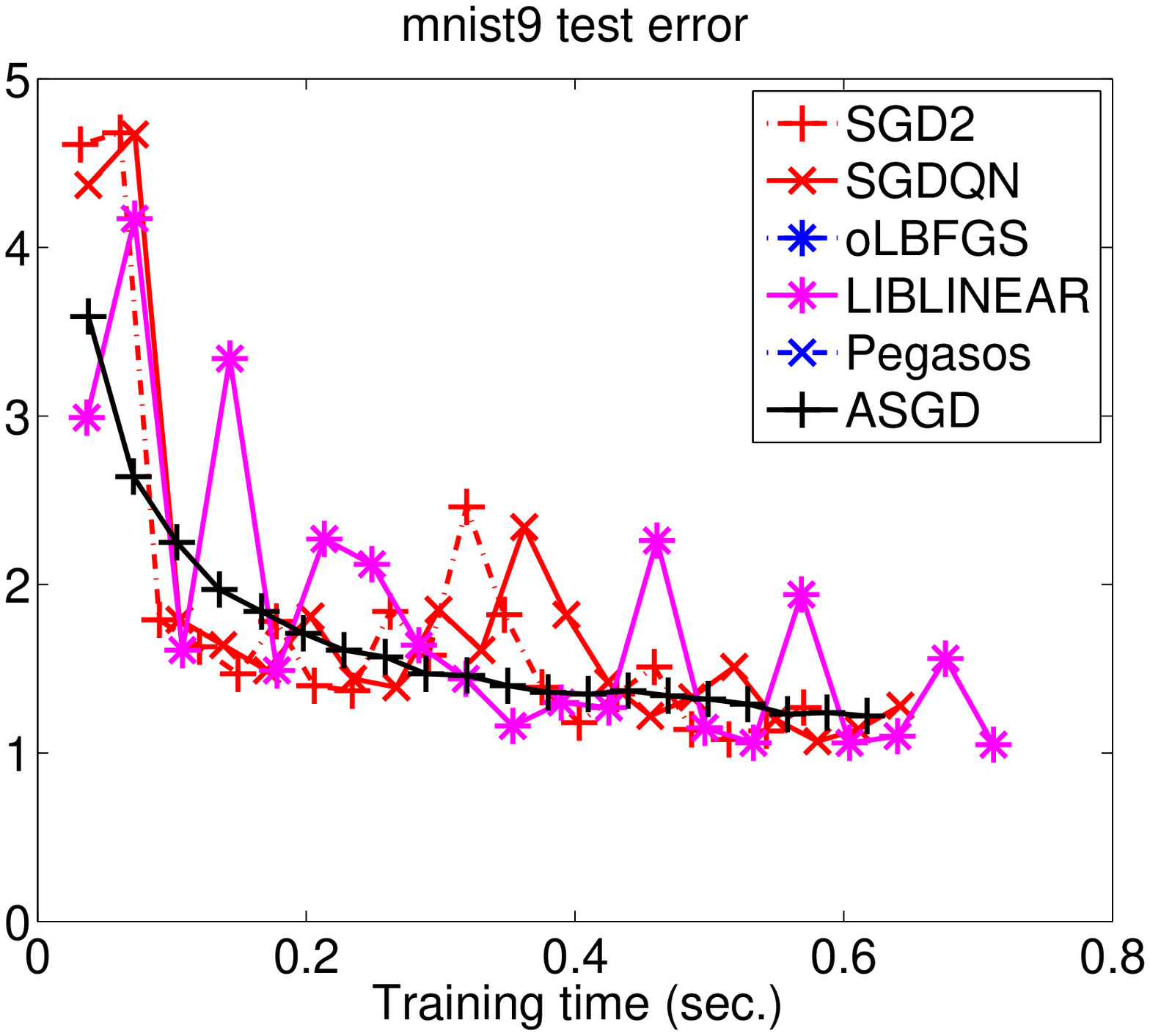} &
\epsfxsize=4.6cm \epsfysize=4.4cm
\epsfbox{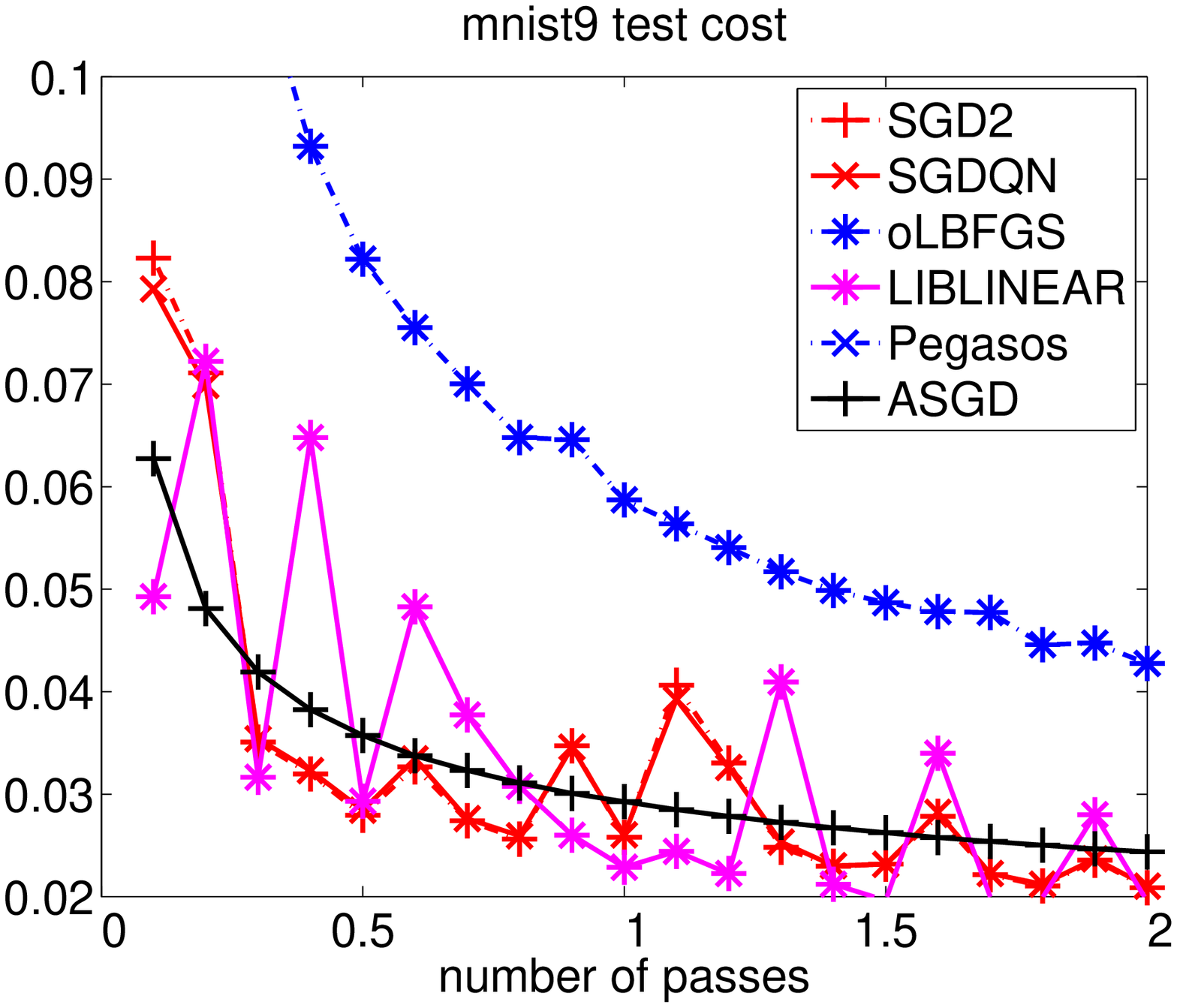} \\
\end{tabular}
\caption{\label{FigComparison} Left: Test error (\%) vs. number of passes. Middle: Test error vs. training time. Right: Test cost vs. number of passes.}
\end{figure}

\begin{table}[htbp]
  \centering
\caption{\label{TblData2} Data Set Summary}
\begin{tabular}{|l|l|l|l|l|l|l|l|}
\hline
       & description          & type   & dim & train size & test size & $\lambda$   & $M$ \\
\hline
alpha  & synthetic data       & dense  & 500       & 400k       & 50k    & $10^{-5}$ & 1 \\
beta   & synthetic data       & dense  & 500       & 400k       & 50k    & $10^{-4}$ & 1 \\
gamma  & synthetic data       & dense  & 500       & 400k       & 50k    & $10^{-3}$ & $2.5\times 10^3$ \\
epsilon& synthetic data       & dense  & 2000      & 400k       & 50k    & $10^{-5}$ & 1   \\
zeta   & synthetic data       & dense  & 2000      & 400k       & 50k    & $10^{-5}$ & 1   \\
fd         & character image  & dense  &   900 & 1000k &  470k & $10^{-5}$ & 1        \\
ocr        & character image  & dense  &  1156 & 1000k &  500k & $10^{-5}$ & 1        \\
dna        & DNA sequence     & sparse &   800 & 1000k & 1000k & $10^{-3}$ & 200    \\
\hline
\end{tabular}
\end{table}


\begin{figure}[ht]
\begin{tabular}{ccc}
\epsfxsize=4.7cm \epsfysize=4.5cm
\epsfbox{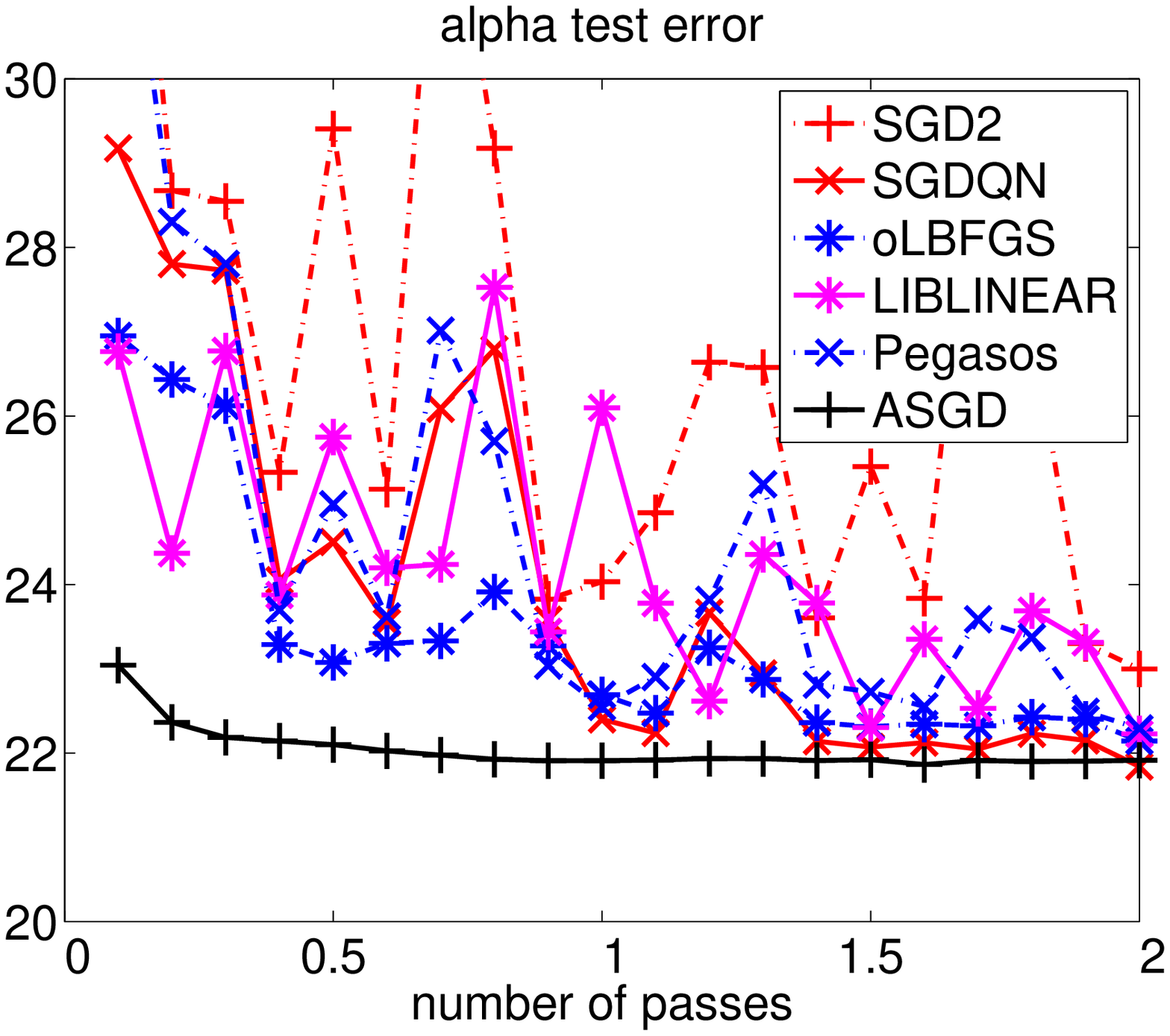} &
\epsfxsize=4.7cm \epsfysize=4.5cm
\epsfbox{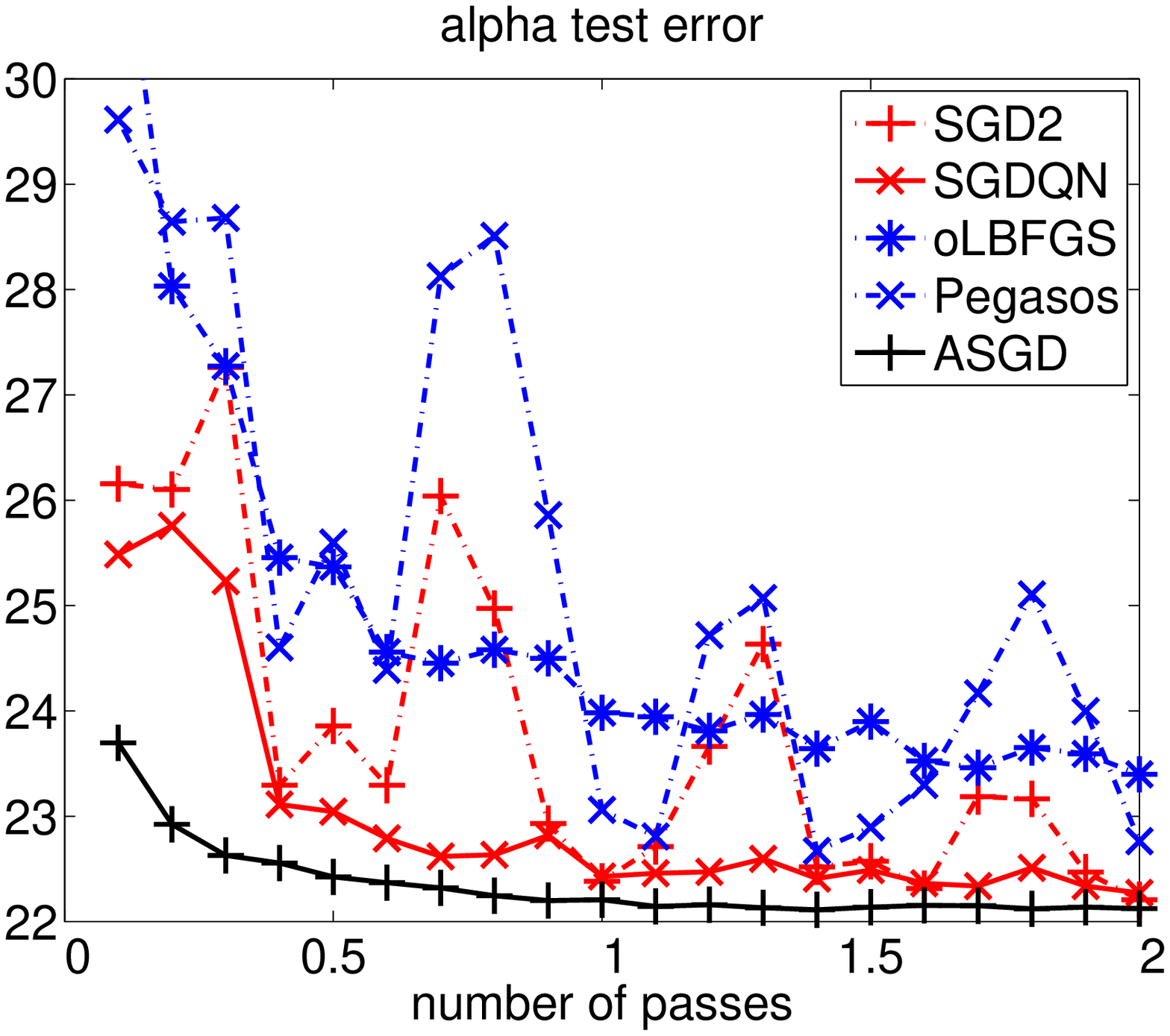} &
\epsfxsize=4.7cm \epsfysize=4.5cm
\epsfbox{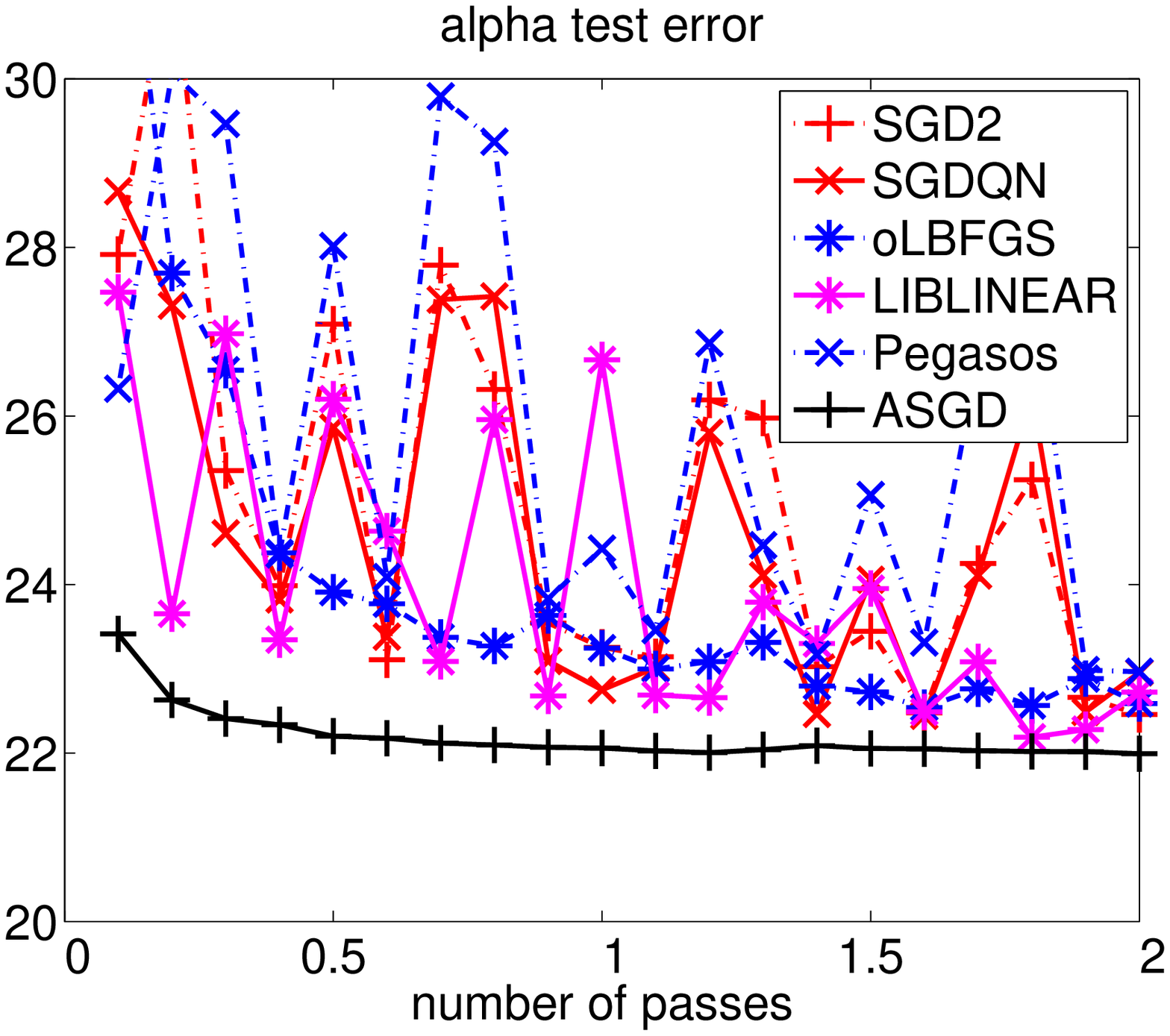} \\

\epsfxsize=4.7cm \epsfysize=4.5cm
\epsfbox{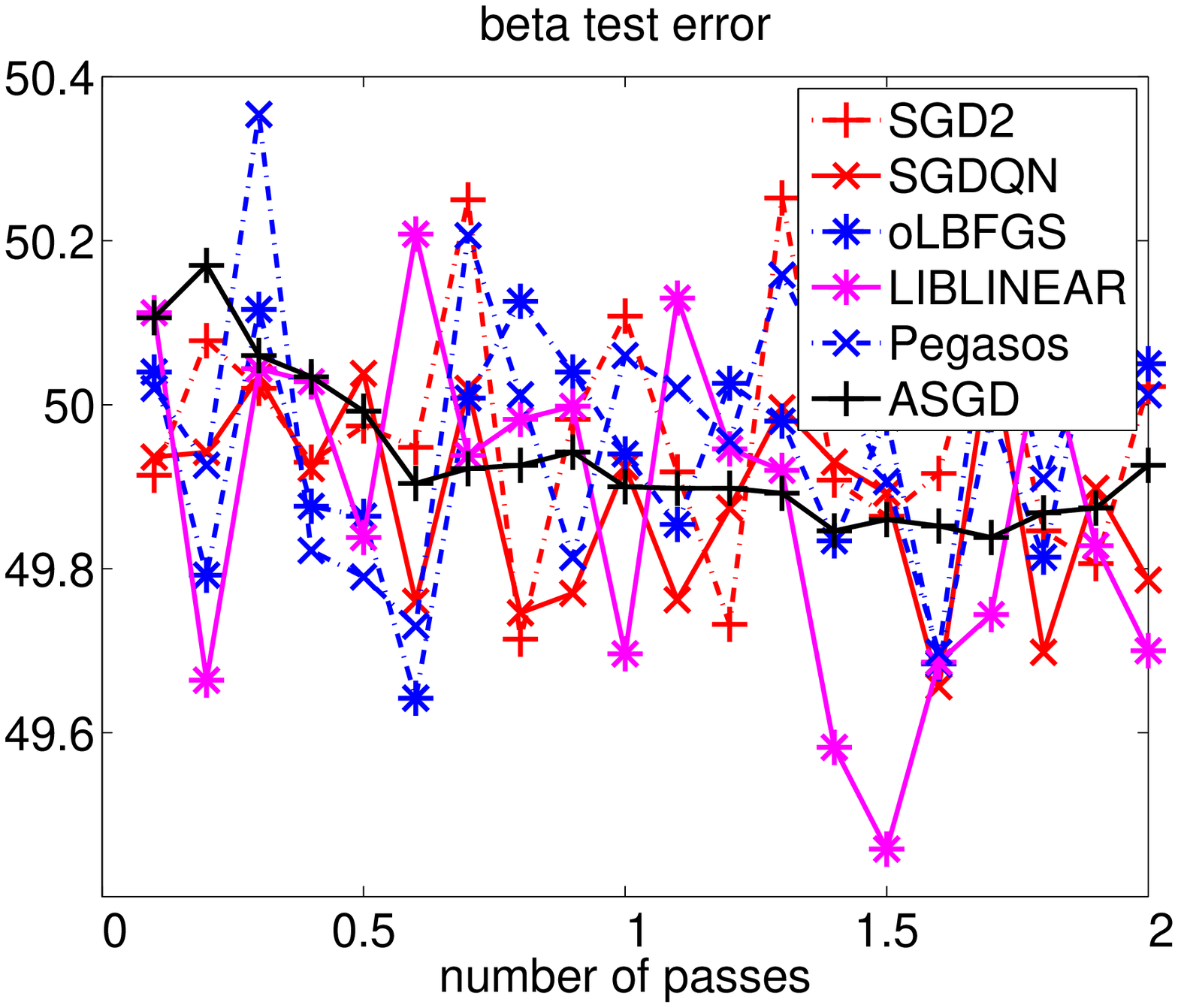} &
\epsfxsize=4.7cm \epsfysize=4.5cm
\epsfbox{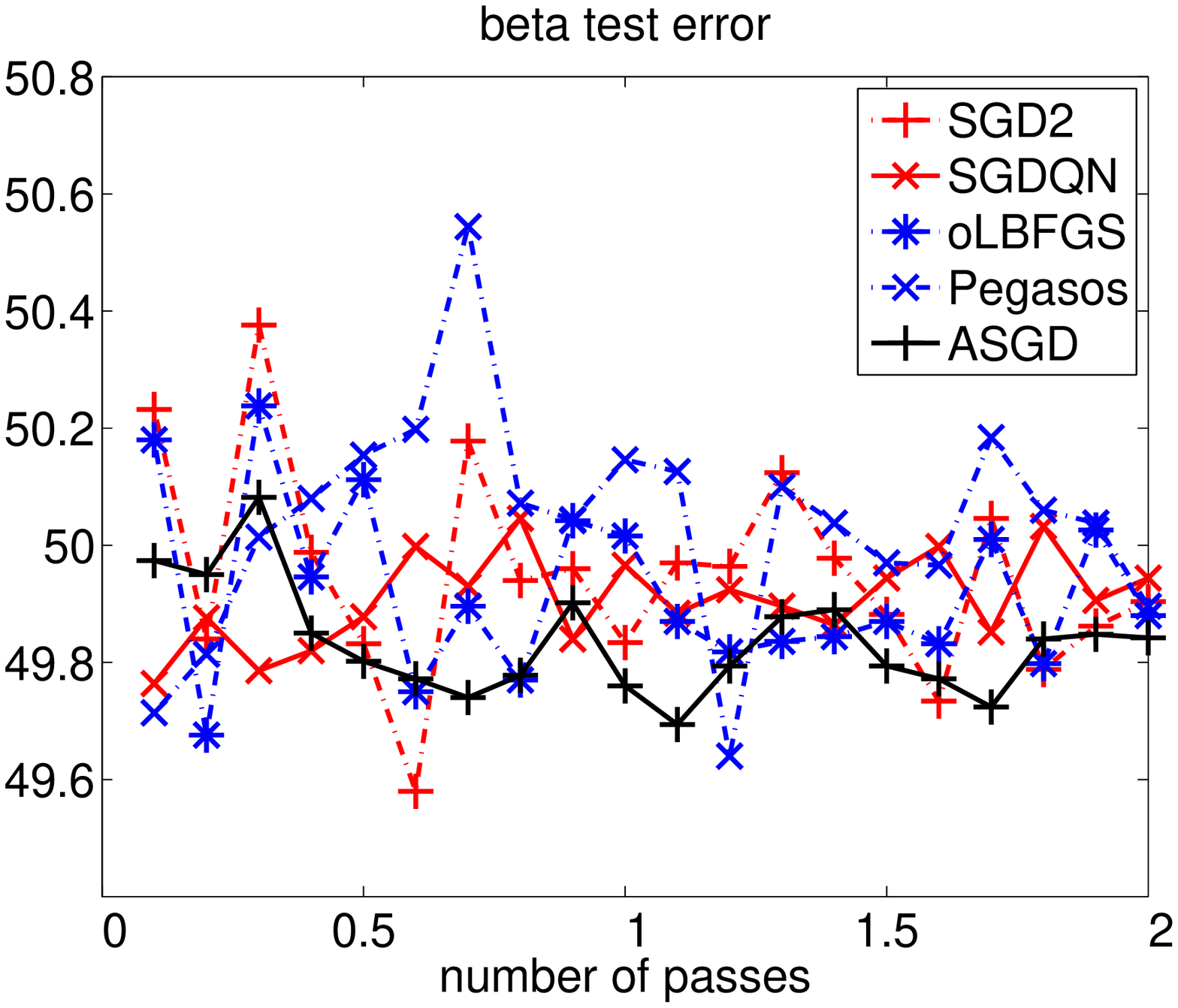} &
\epsfxsize=4.7cm \epsfysize=4.5cm
\epsfbox{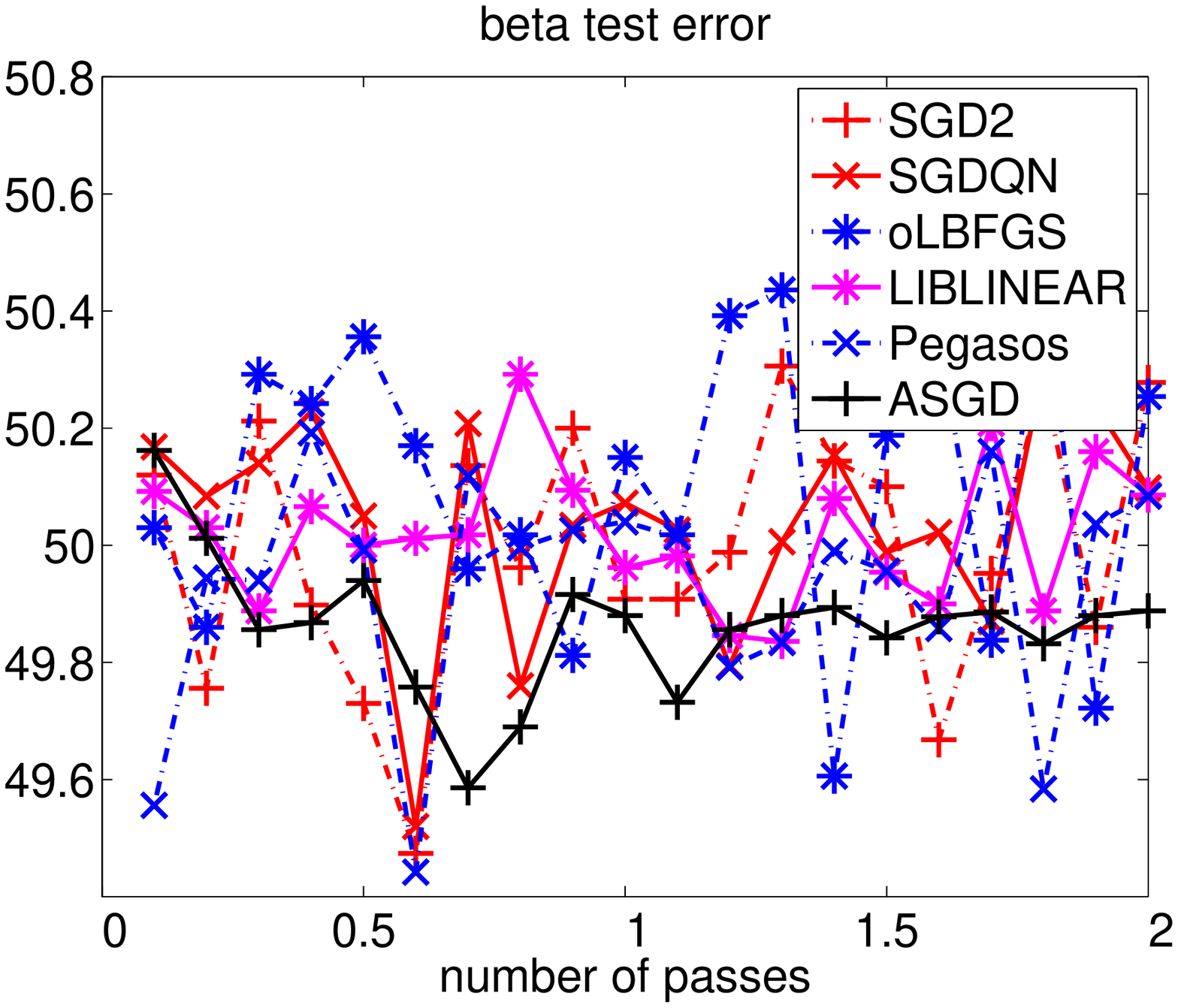} \\

\epsfxsize=4.7cm \epsfysize=4.5cm
\epsfbox{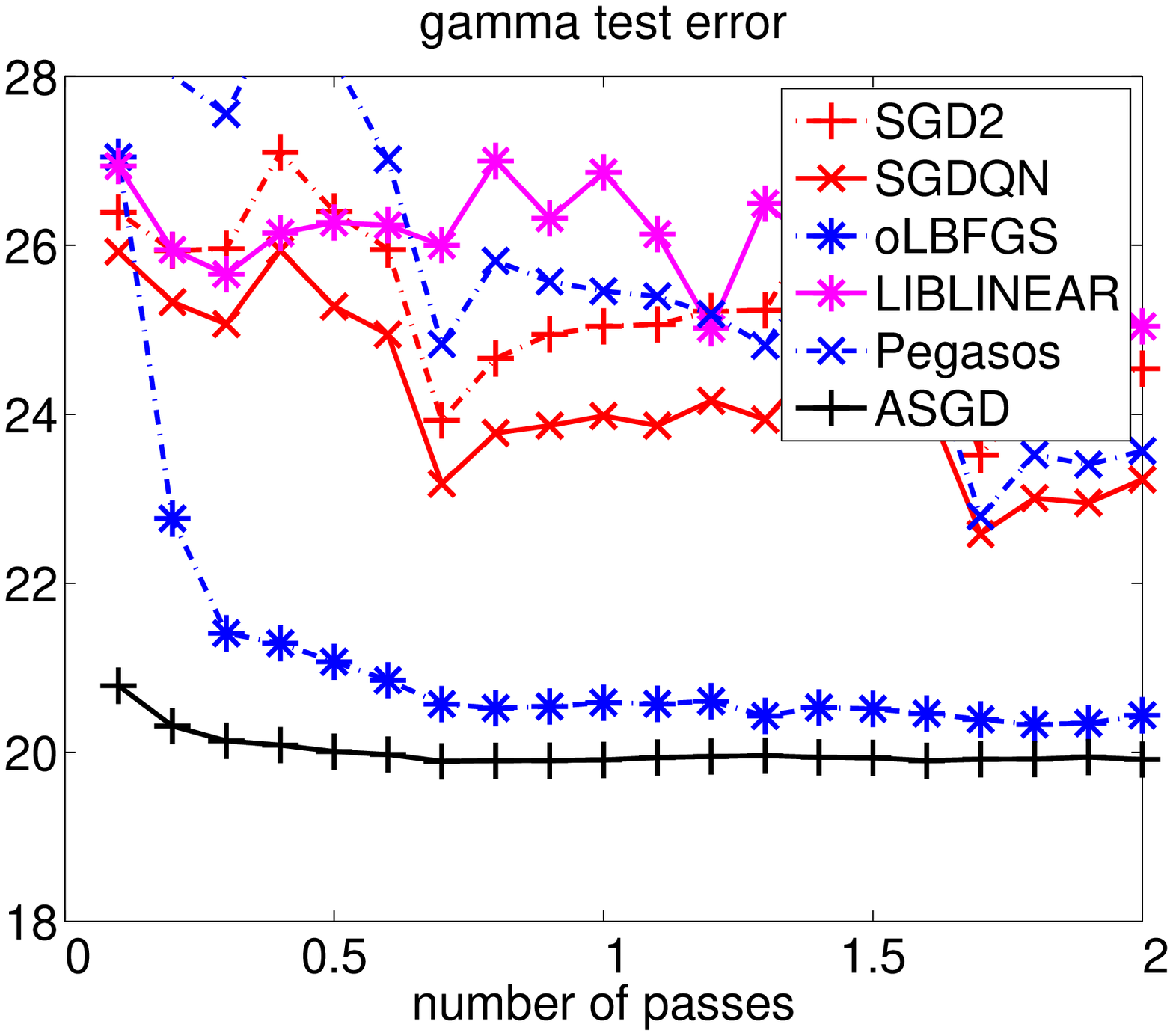} &
\epsfxsize=4.7cm \epsfysize=4.5cm
\epsfbox{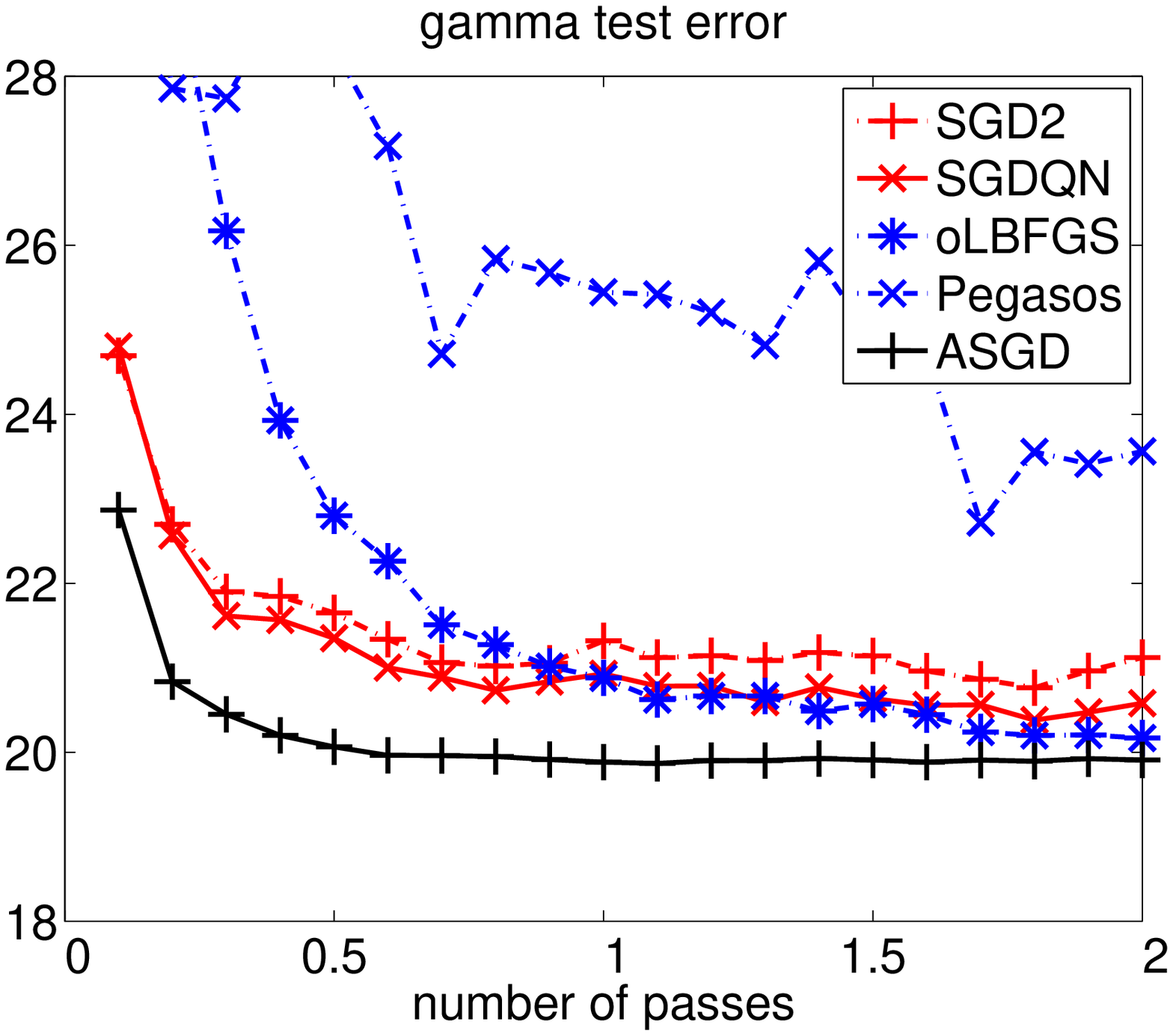} &
\epsfxsize=4.7cm \epsfysize=4.5cm
\epsfbox{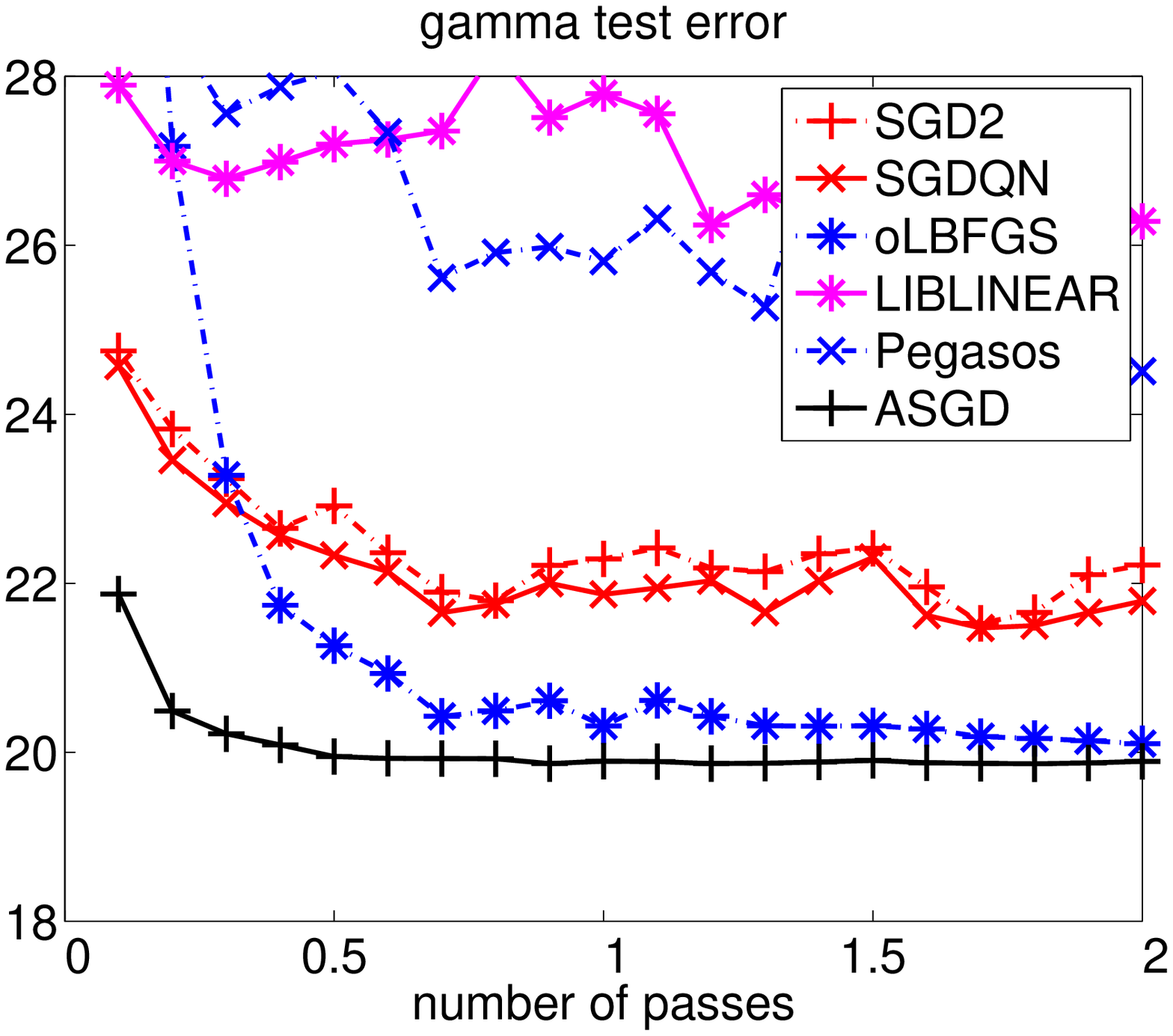} \\

\epsfxsize=4.7cm \epsfysize=4.5cm
\epsfbox{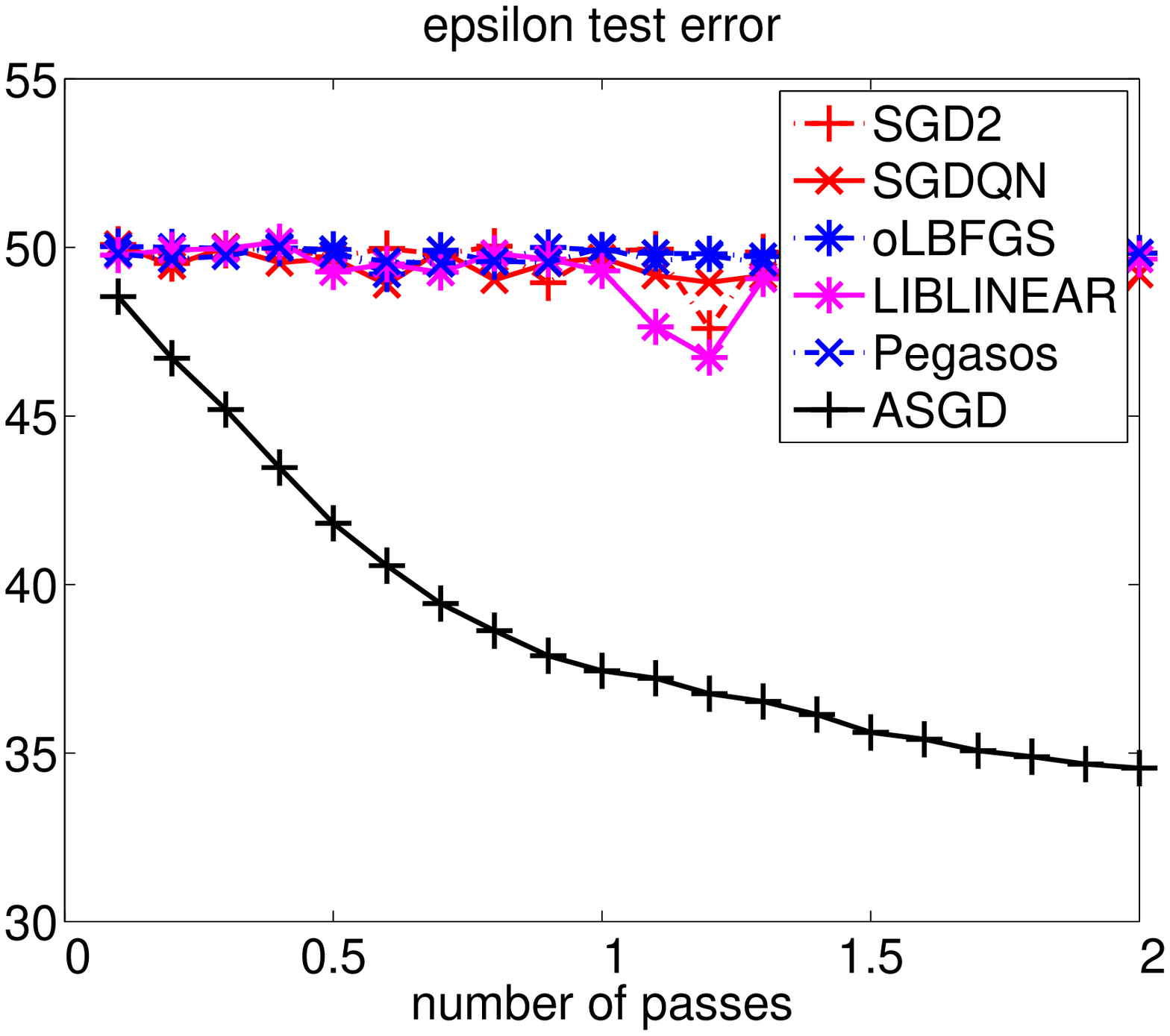} &
\epsfxsize=4.7cm \epsfysize=4.5cm
\epsfbox{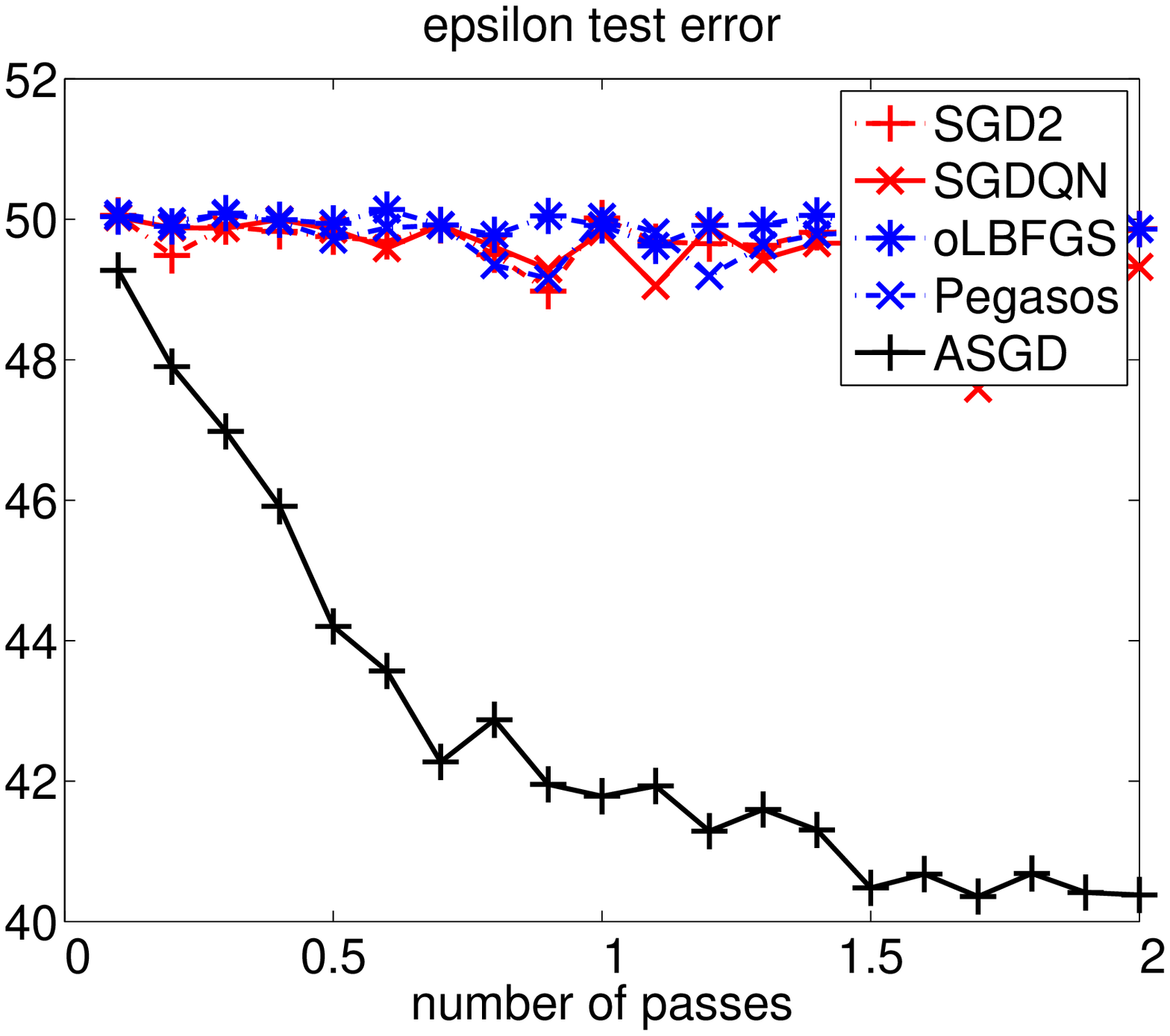} &
\epsfxsize=4.7cm \epsfysize=4.5cm
\epsfbox{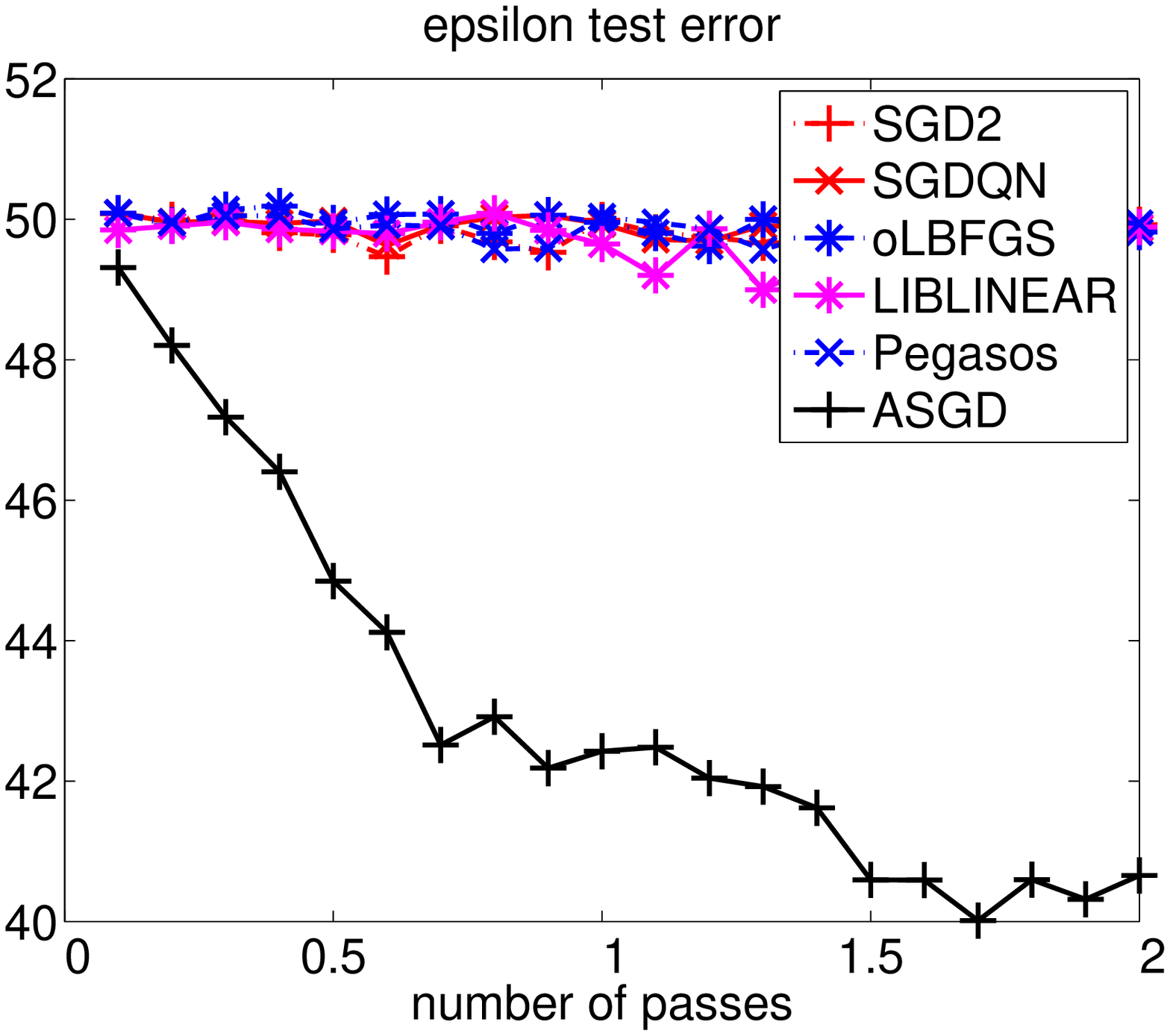} \\
\end{tabular}

\caption{\label{FigSet1} Test error (\%) vs. number of passes.
Left: L2SVM; Middle: logistic regression; Right: SVM.}
\end{figure}

\begin{figure}[ht]
\begin{tabular}{ccc}
\epsfxsize=4.7cm \epsfysize=4.5cm
\epsfbox{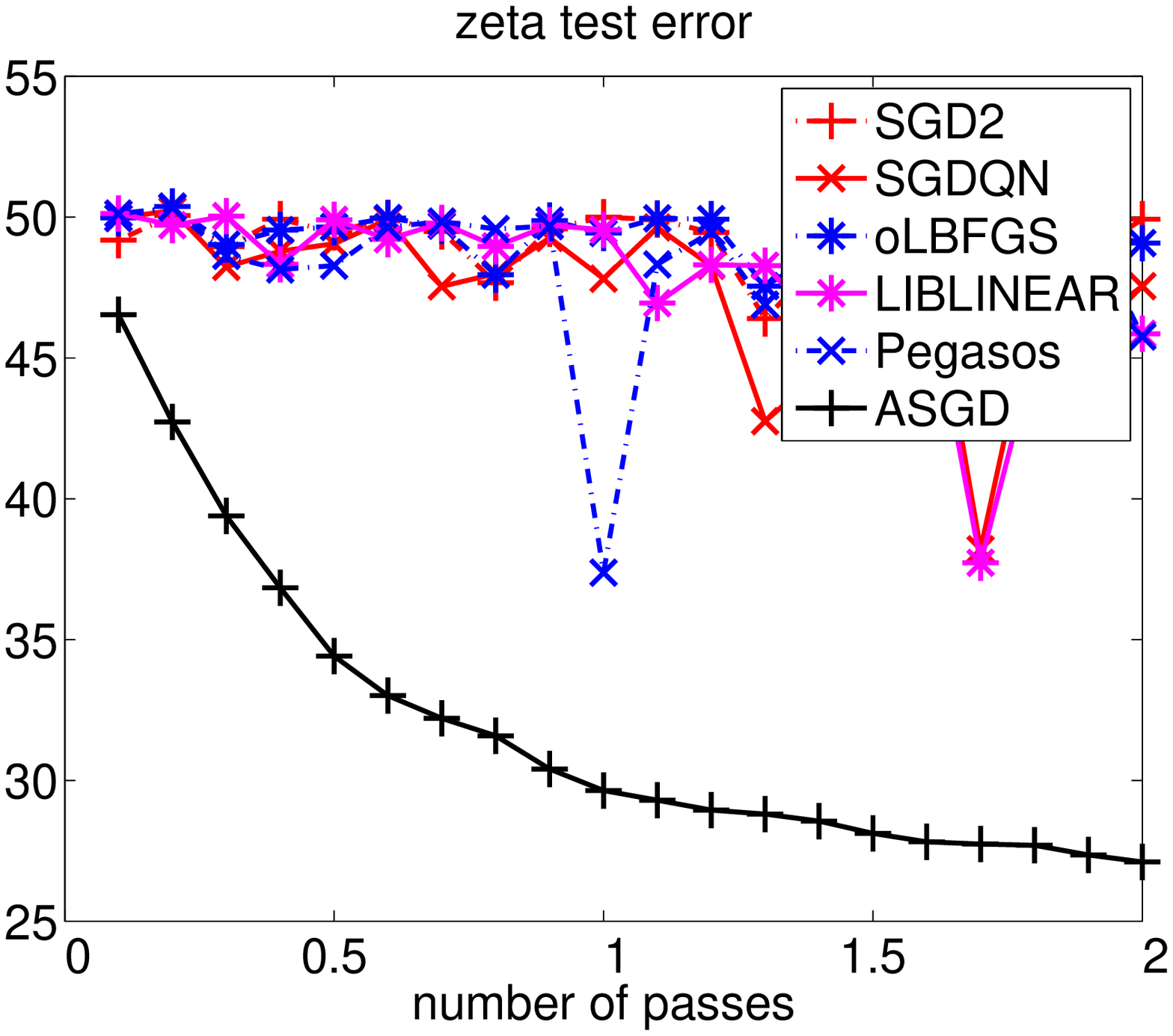} &
\epsfxsize=4.7cm \epsfysize=4.5cm
\epsfbox{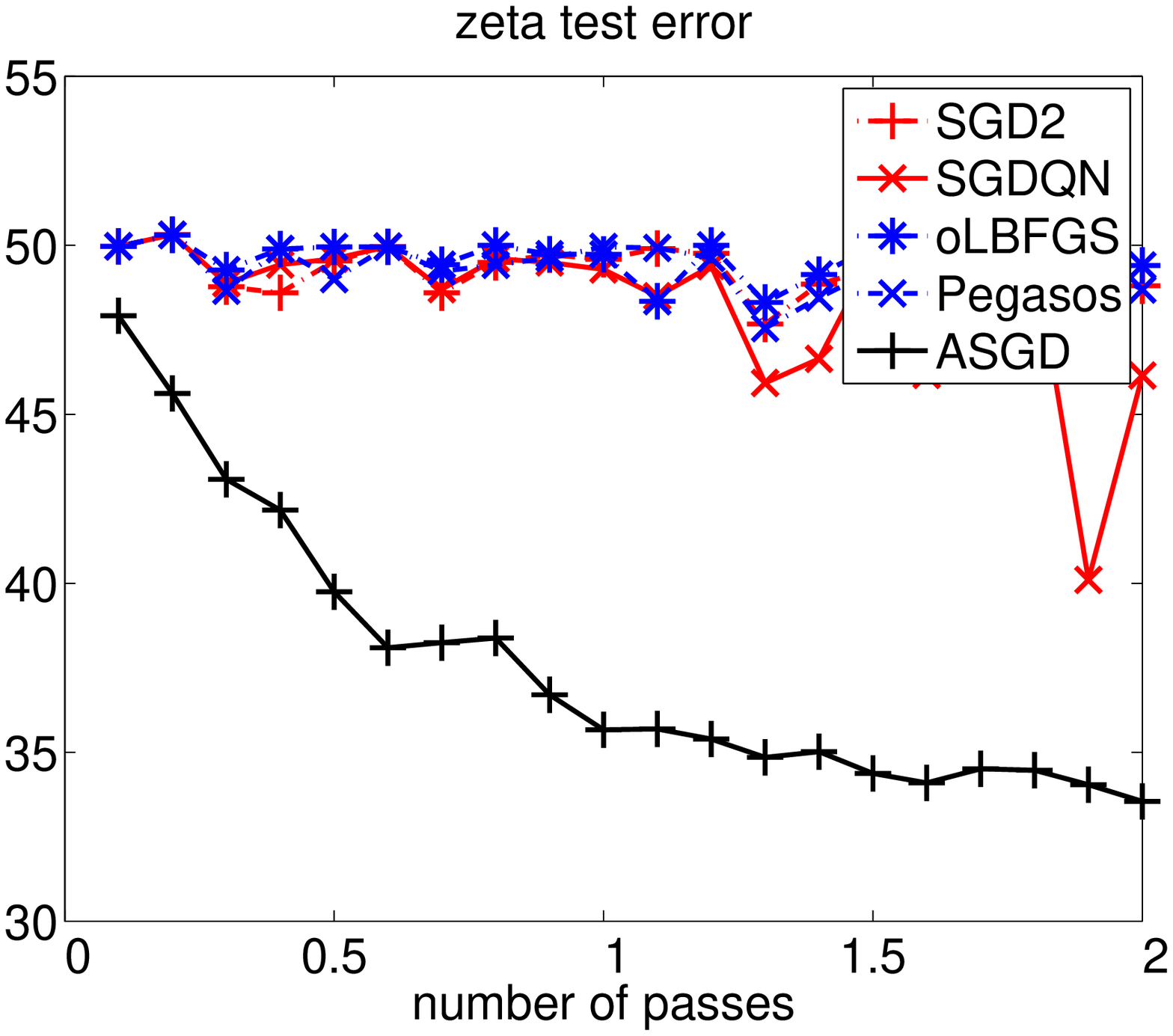} &
\epsfxsize=4.7cm \epsfysize=4.5cm
\epsfbox{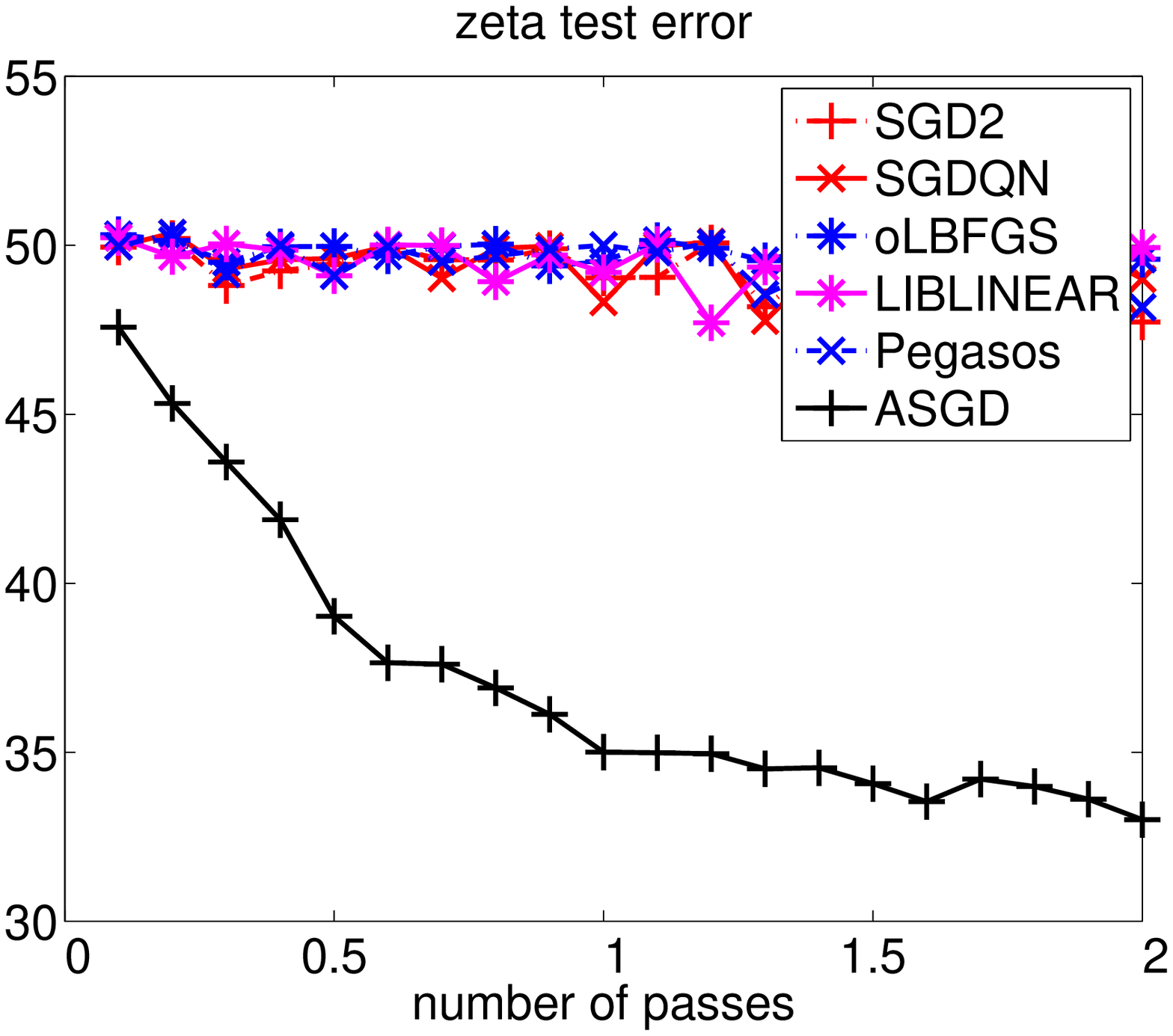} \\

\epsfxsize=4.7cm \epsfysize=4.5cm
\epsfbox{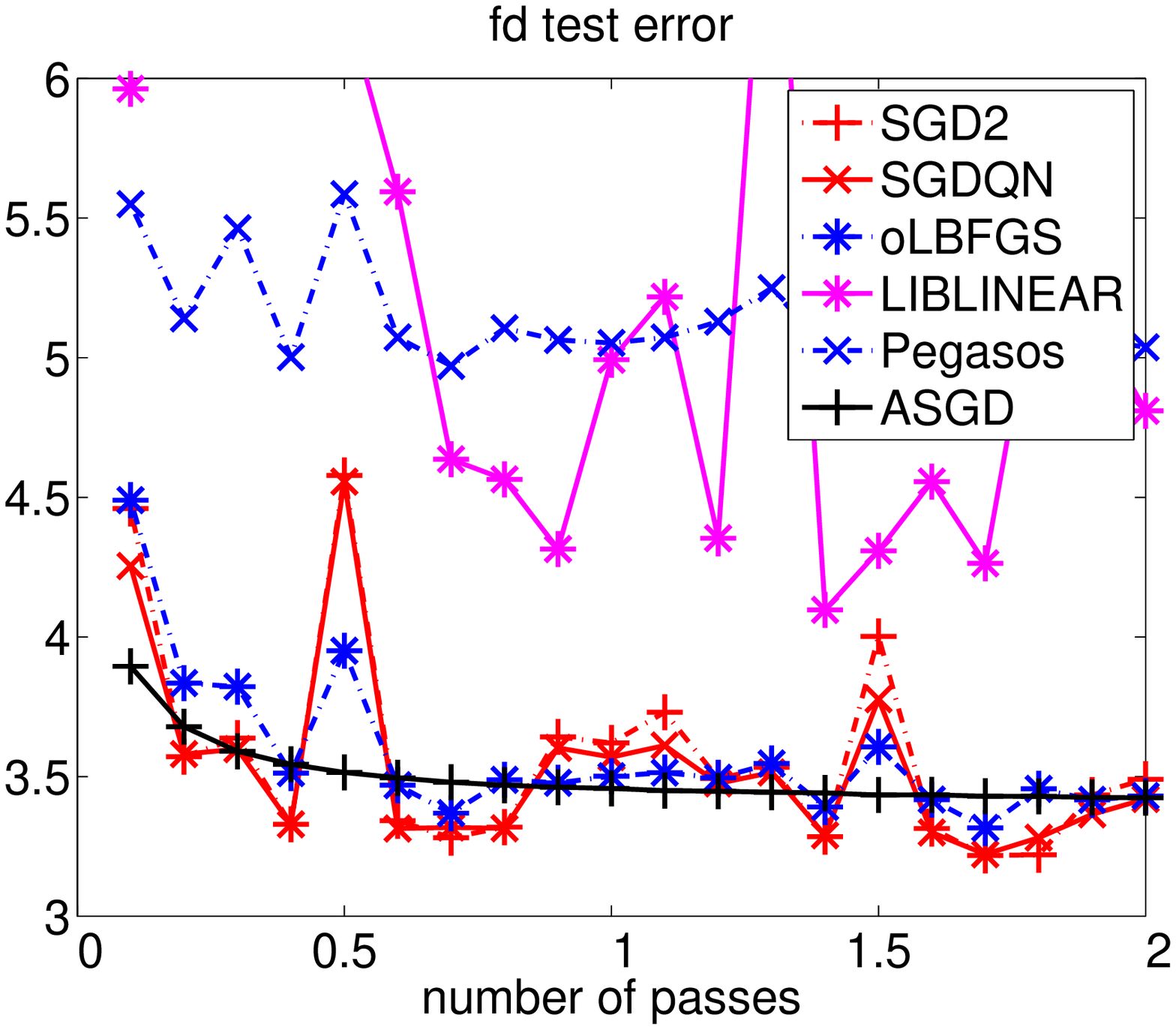} &
\epsfxsize=4.7cm \epsfysize=4.5cm
\epsfbox{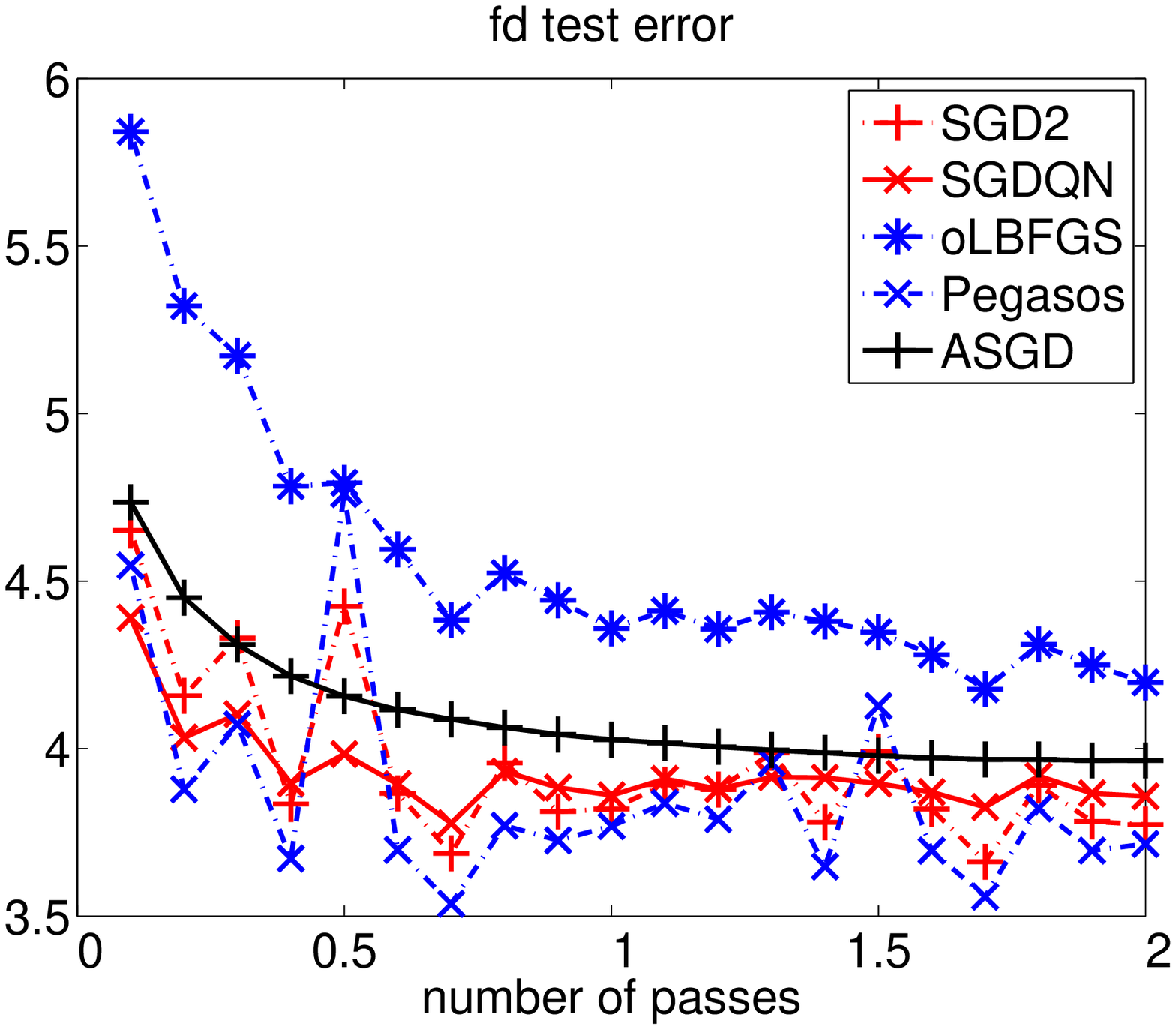} &
\epsfxsize=4.7cm \epsfysize=4.5cm
\epsfbox{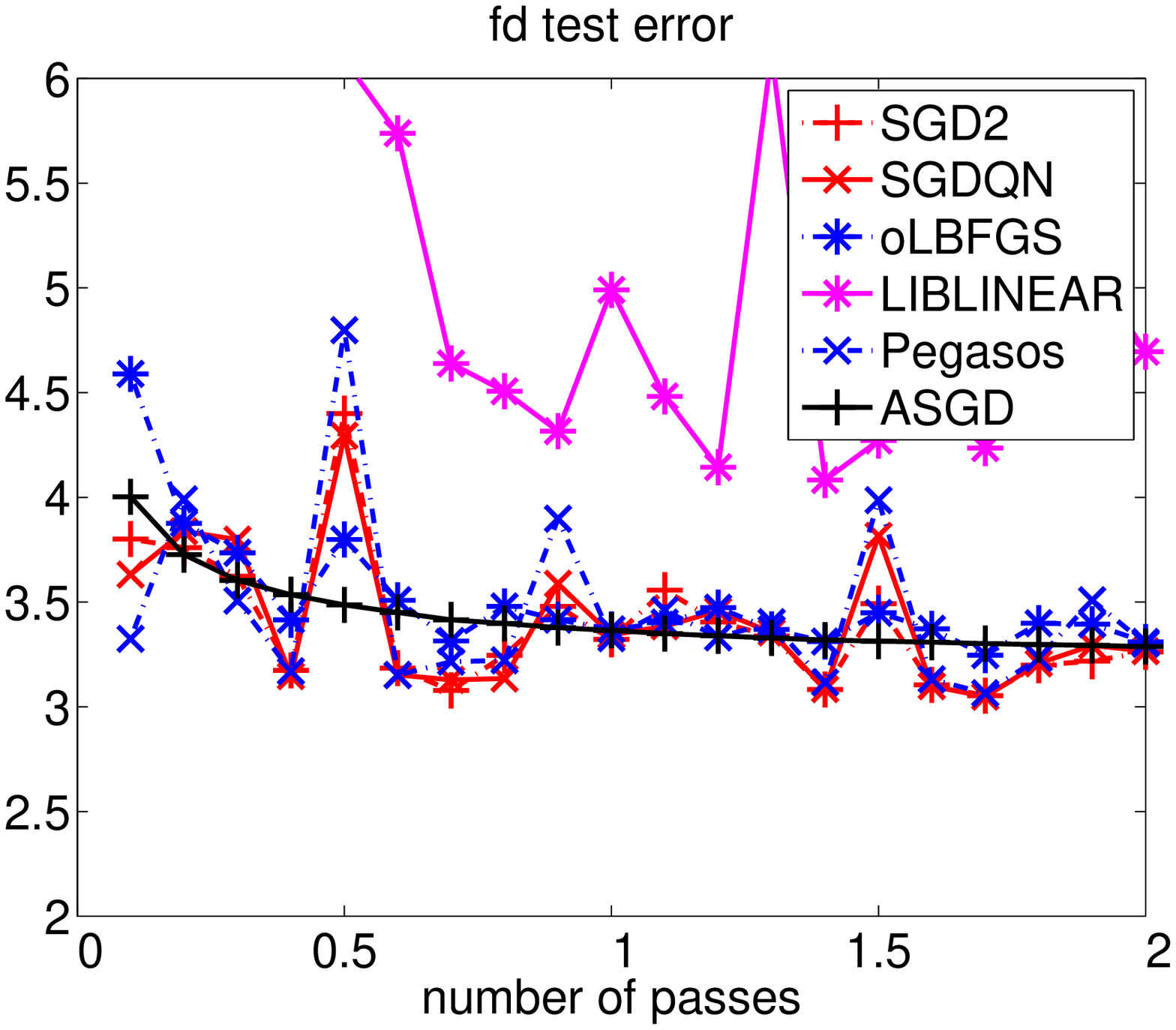} \\

\epsfxsize=4.7cm \epsfysize=4.5cm
\epsfbox{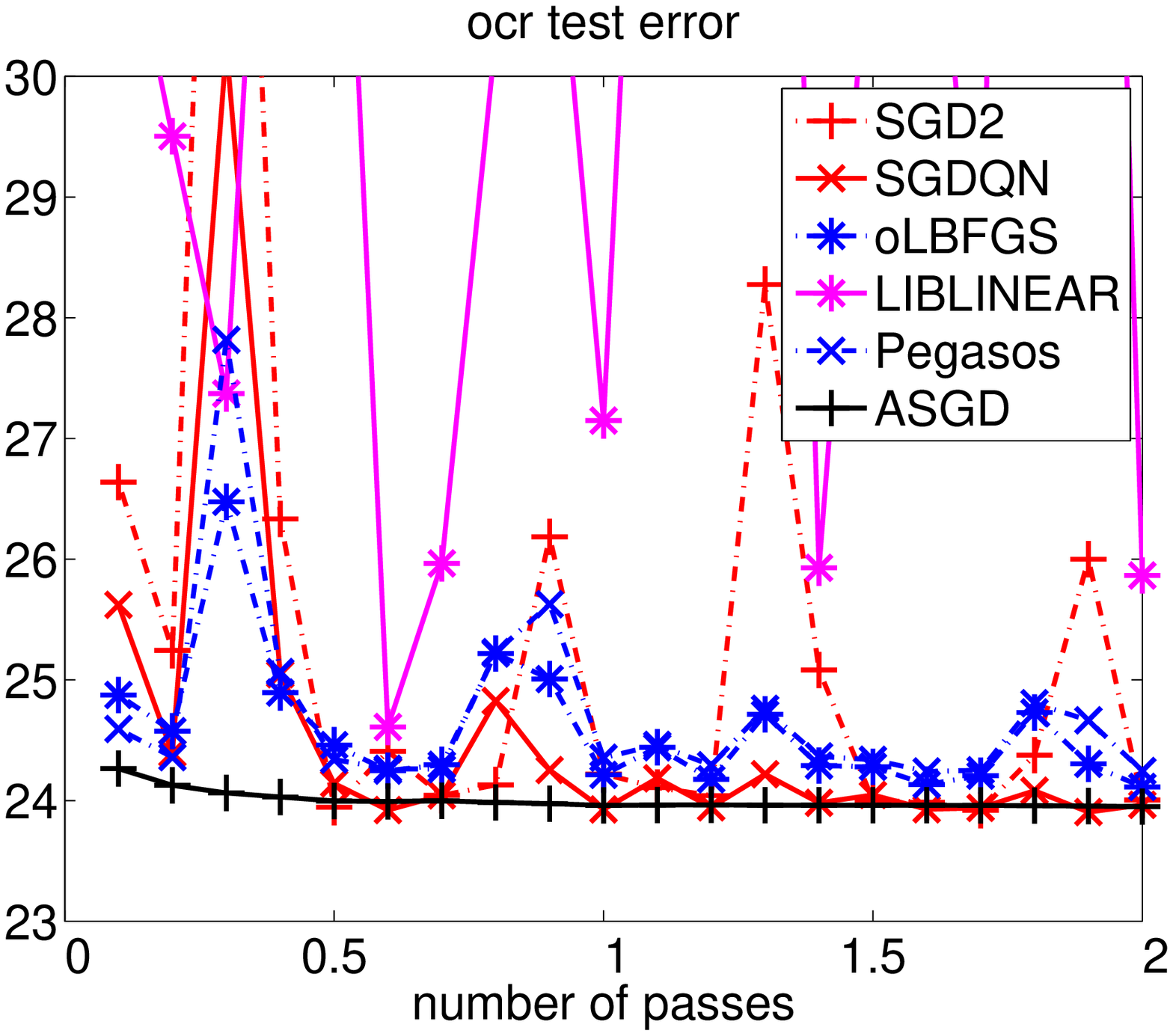} &
\epsfxsize=4.7cm \epsfysize=4.5cm
\epsfbox{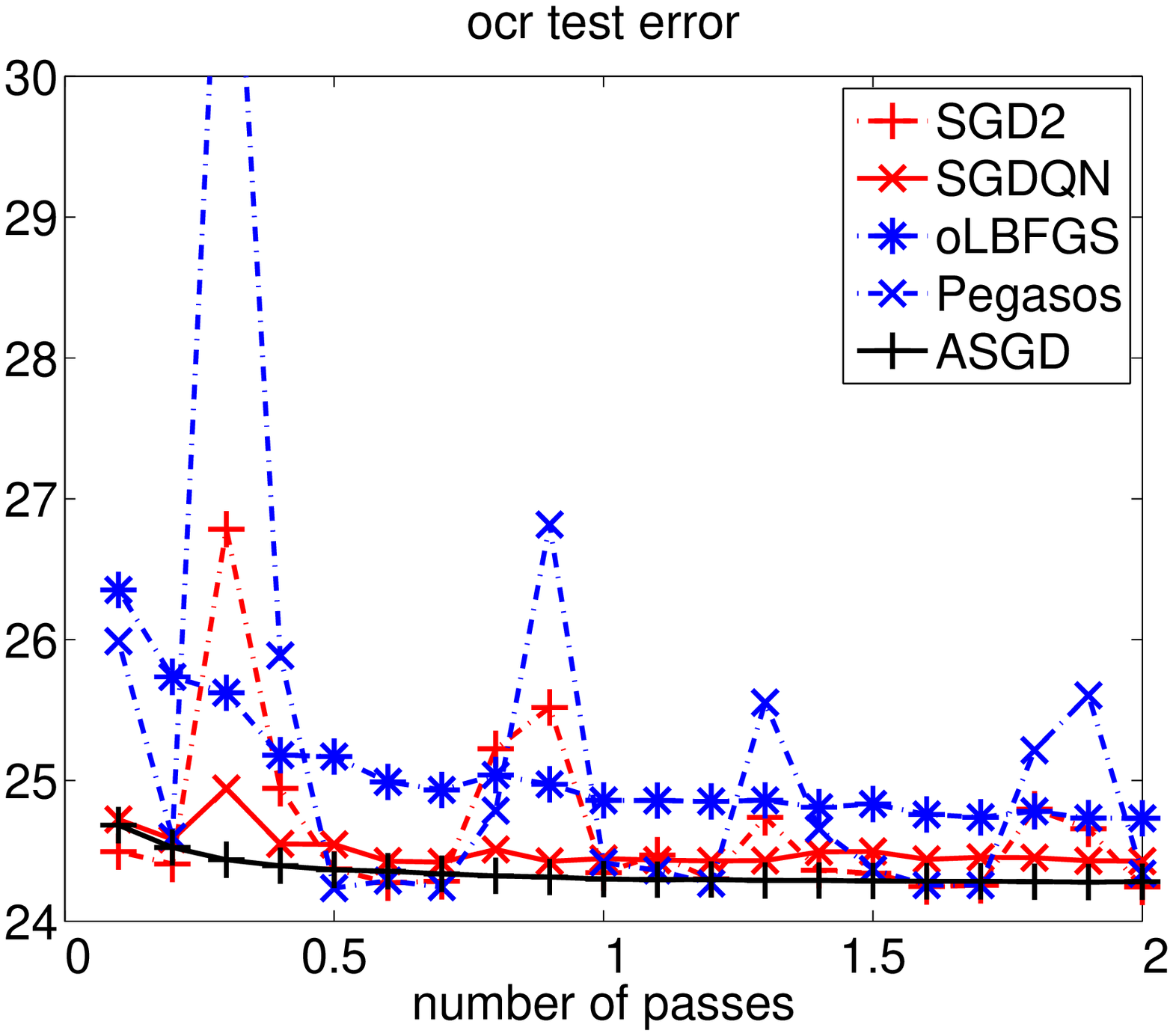} &
\epsfxsize=4.7cm \epsfysize=4.5cm
\epsfbox{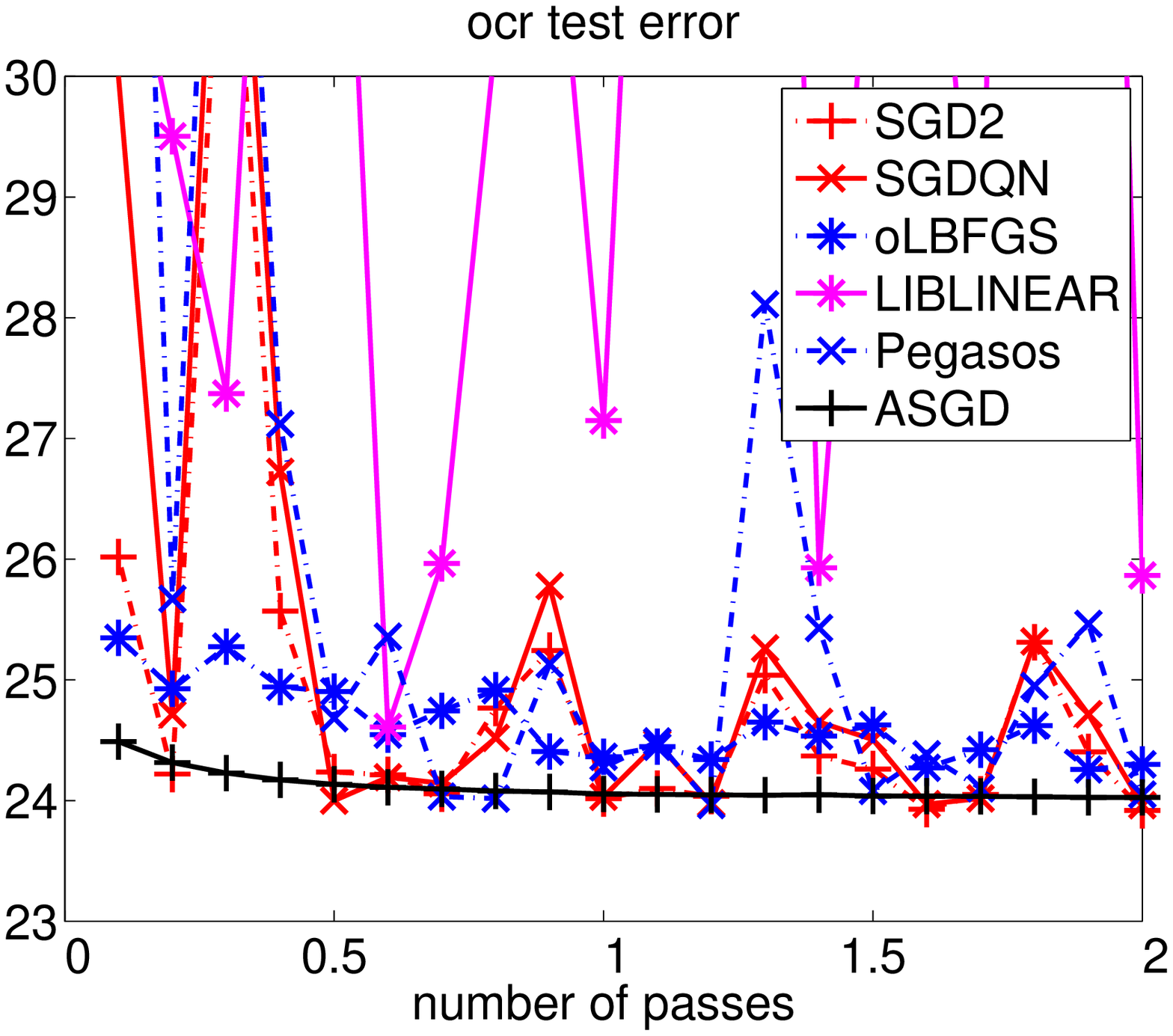} \\

\epsfxsize=4.7cm \epsfysize=4.5cm
\epsfbox{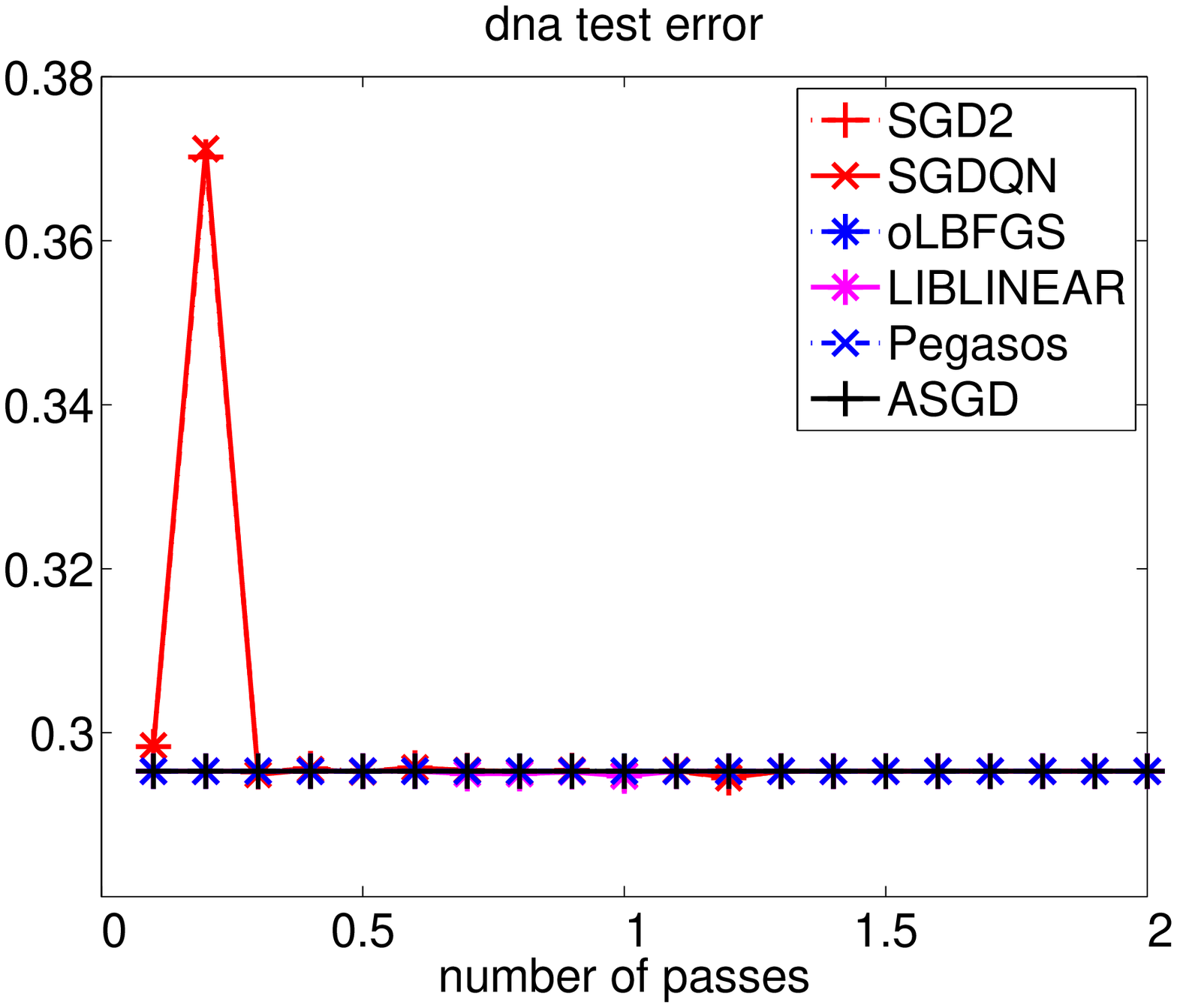} &
\epsfxsize=4.7cm \epsfysize=4.5cm
\epsfbox{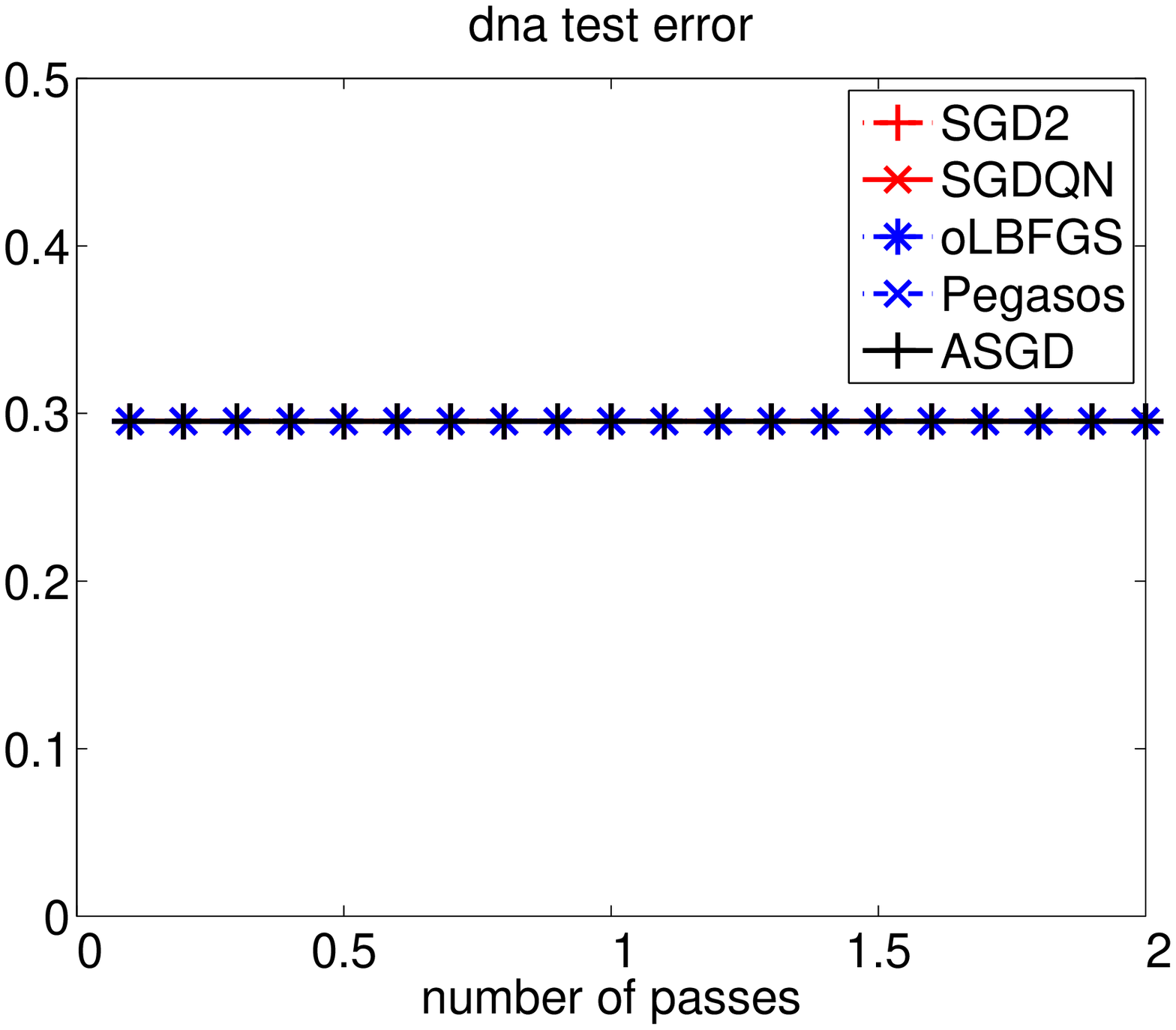} &
\epsfxsize=4.7cm \epsfysize=4.5cm
\epsfbox{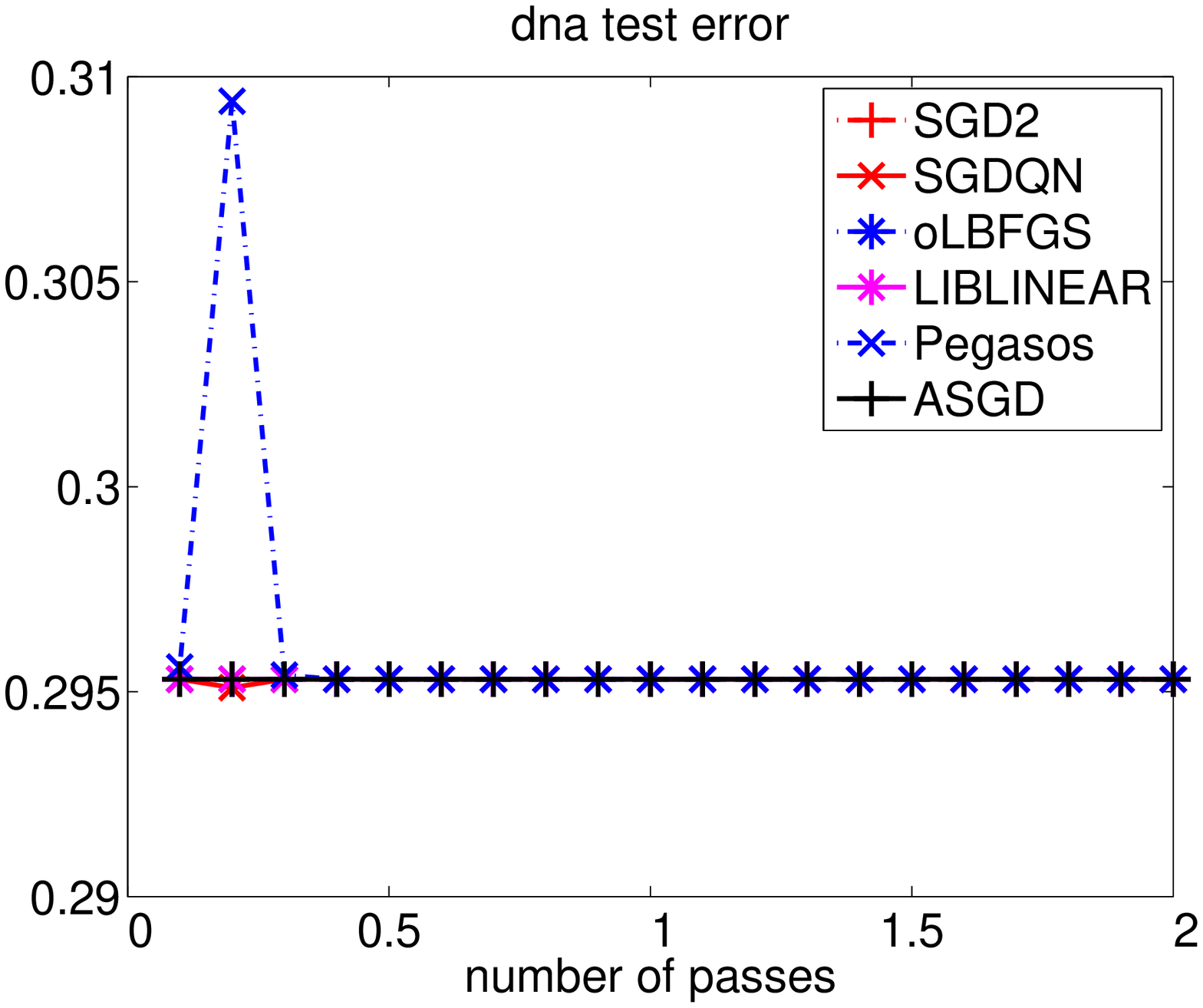} \\
\end{tabular}

\caption{\label{FigSet2} Test error (\%) vs. number of passes.
Left: L2SVM; Middle: logistic regression; Right: SVM.}
\end{figure}

\section{Conclusion} \label{SecConclusion}
ASGD is relatively easy to implement compared to other algorithms.
And as demonstrated on both synthetic and real data sets, with our
proposed learning rate schedule, ASGD performs better than other
more complicated algorithms for large scale learning problems. In
this paper, we only apply ASGD to linear models with convex loss,
which has unique local optimum. It would be more interesting to
see how ASGD can be applied to more complicated models such as
conditional random fields (CRF) or models with multiple local
optimums such as neural networks.

\acks{The author would like to thank Leon Bottou for the
insightful discussions, Antoine Bordes for providing source code
of SGDQN, SGD2 and oLBFGS, and Yi Zhang for the suggestions to
improve the exposition of this paper.}

\bibliography{asgd}

\appendix
\section{Proofs}
\begin{lemma}\label{LemmaGamma} Let $\kappa=1-\max(0,2c-1)\frac{a}{\lambda_0}$. If $\gamma_0 \lambda_1 \le 1$,
then
\[ \left(\frac{1}{\gamma_{k+1}}-\frac{1}{\gamma_k}\right)\frac{1}{\gamma_{k+1}} \le \left(\frac{1}{\gamma_k}-\frac{1}{\gamma_{k-1}}\right)\frac{1}{\gamma_k} (1-\lambda_0\gamma_k)^{\kappa-1} \]
\end{lemma}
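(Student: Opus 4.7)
The plan is to introduce the notation $p(x)=1+a\gamma_0 x$ and $u=a\gamma_0$, so that $\gamma_k^{-1}=\gamma_0^{-1}p(k)^c$. Setting $F(k)=\gamma_k^{-1}$ and $G(k)=(F(k)-F(k-1))\,F(k)$, the lemma reduces to the multiplicative bound
\[
\frac{G(k+1)}{G(k)}\;\le\;(1-\lambda_0\gamma_k)^{\kappa-1}.
\]
The hypothesis $\gamma_0\lambda_1\le 1$ ensures $\lambda_0\gamma_k\le\lambda_0\gamma_0\le 1$, so the right-hand side is well-defined and $\ge 1$ (since $\kappa\le 1$).

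The first substantive step is to bound the increment ratio $\frac{F(k+1)-F(k)}{F(k)-F(k-1)}$. Writing the differences as integrals, $p(k+1)^c-p(k)^c=cu\int_k^{k+1}p(x)^{c-1}dx$ and similarly for $p(k)^c-p(k-1)^c$, and then shifting the index in the numerator by $1$, the ratio becomes $\int_{k-1}^{k}p(y+1)^{c-1}dy\big/\int_{k-1}^{k}p(y)^{c-1}dy$. The integrand ratio is $(1+u/p(y))^{c-1}$, which one checks (by differentiating $\log(p(y+1)/p(y))$) is monotonically increasing in $y$ for $c\le 1$, so its supremum on $[k-1,k]$ is attained at $y=k$. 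This yields
\[
\frac{F(k+1)-F(k)}{F(k)-F(k-1)}\;\le\;\bigl(1+u/p(k)\bigr)^{c-1}.
\]
Since $F(k+1)/F(k)=(1+u/p(k))^{c}$ is exact, multiplying gives the key estimate $G(k+1)/G(k)\le(1+u/p(k))^{2c-1}$.

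It remains to dominate $(1+u/p(k))^{2c-1}$ by $(1-\lambda_0\gamma_k)^{\kappa-1}$. When $c\le 1/2$, the exponent $2c-1$ is nonpositive and $\kappa=1$, so both sides are trivially controlled ($\le 1$ and $=1$ respectively). When $c>1/2$, I would take the $(2c-1)$-th root of the desired inequality to reduce it to $\log(1+u/p(k))\le -(a/\lambda_0)\log(1-\lambda_0\gamma_k)$. Applying the elementary inequalities $\log(1+x)\le x$ on the left and $-\log(1-y)\ge y$ on the right, it suffices to show $u/p(k)\le a\gamma_k$, i.e.\ $\gamma_0/p(k)\le\gamma_0/p(k)^c$, which is immediate from $p(k)\ge 1$ and $c\le 1$.

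The main subtlety is step 2: a direct application of the mean value theorem only delivers the trivial bound $\le 1$ on the increment ratio, which is too weak to absorb the factor $F(k+1)/F(k)>1$ when $c>1/2$. The integral representation together with the monotonicity observation on $(1+u/p(y))^{c-1}$ is what provides the extra factor $(1+u/p(k))^{c-1}$ needed to make the two pieces combine cleanly into $(1+u/p(k))^{2c-1}$, after which matching this to $(1-\lambda_0\gamma_k)^{\kappa-1}$ through the definition of $\kappa$ becomes a routine logarithmic comparison.
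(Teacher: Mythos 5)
Your proof is correct, and it reaches the paper's inequality by a genuinely different route for the key estimate. The paper works with $f(x)=(x^c-(x-1)^c)x^c$ (your $G(k)$ up to a constant, with $x=k+\frac{1}{a\gamma_0}$): for $c\le 1/2$ it shows $f'\le 0$ directly, and for $c>1/2$ it applies the mean value theorem to $\log f$, computes $f'(y)$ explicitly, and closes with the concavity inequality $y^c\le (y-1)^c+c(y-1)^{c-1}$ together with $\log(1-u)\le -u$. You instead factor $G(k+1)/G(k)$ into the increment ratio times $F(k+1)/F(k)$, and control the increment ratio by writing both differences as integrals of $p(\cdot)^{c-1}$ and using that the pointwise ratio $(1+u/p(y))^{c-1}$ is increasing; this yields the clean multiplicative bound $(1+u/p(k))^{2c-1}$ that handles $c\le 1/2$ and $c>1/2$ uniformly (the former case being immediate since the exponent is nonpositive), after which your logarithmic comparison $\log(1+x)\le x$, $-\log(1-y)\ge y$, $p(k)^c\le p(k)$ is essentially the same elementary endgame as the paper's. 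Your remark about why the mean value theorem alone is too weak for the increment ratio is accurate and is precisely the obstruction your integral representation circumvents; what your approach buys is a unified treatment of both cases and the avoidance of the explicit derivative computation for $f'$, at the cost of nothing substantive. The only cosmetic caveats are that the argument implicitly assumes $k\ge 1$ (so that $\gamma_{k-1}$ is defined and $\lambda_0\gamma_k<1$ strictly) and degenerates trivially at $c=0$, but the paper's proof shares both features.
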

\begin{proof}
For $0<c\le 0.5$, let $f(x)=(x^c-(x-1)^c)x^c$, where
$x=k+\frac{1}{a\gamma_0}$. We only need to show $f'(x)\le 0$
\begin{eqnarray*}
f'(x) &=& 2cx^{2c-1}-c(x-1)^{c-1}x^c - c(x-1)^c x^{c-1} \\
&=&2cx^{c-1}(x^c-(x-1)^c-\frac{1}{2}(x-1)^{c-1}) \\
&\le&2cx^{c-1}((x-1)^c+c(x-1)^{c-1}-(x-1)^c-\frac{1}{2}(x-1)^{c-1}) \\
&=&c(2c-1)x^{c-1}(x-1)^{c-1} \le 0
\end{eqnarray*}
where we used the fact $x^c \le (x-1)^c+c(x-1)^{c-1}$ for $0\le c \le 1$.

For $c>0.5$, let $f(x)=\log((x^c-(x-1)^c)x^c)$, where
$x=k+\frac{1}{a\gamma_0}$. We only need to show
\[ f(x+1)-f(x)+\frac{a(2c-1)}{\lambda_0}\log(1- \lambda_0 \gamma_0 (a\gamma_0 x)^{-c}) \le 0 \]
By mean value theorem, there exists some $y: x\le y\le x+1$ s.t.
$f(x+1)-f(x)=f'(y)$. Hence
\begin{eqnarray*}
&& f(x+1)-f(x)+\frac{a(2c-1)}{\lambda_0}\log((1-\lambda_0 \gamma_0 (a\gamma_0 x)^{-c}) \\
&\le& f'(y)-a(2c-1) \gamma_0 (a\gamma_0 x)^{-c} \le f'(y)-(2c-1)(a\gamma_0)^{1-c} y^{-c} \\
&=& \frac{2c(y^c-(y-1)^c-\frac{1}{2}(y-1)^{c-1})}{y(y^c-(y-1)^c)} - \frac{(2c-1)(a\gamma_0 y)^{1-c}}{y} \\
&\le& \frac{2c(y^c-(y-1)^c-\frac{1}{2}(y-1)^{c-1})}{y(y^c-(y-1)^c)} - \frac{2c-1}{y} \\
&=& \frac{y^c-(y-1)^c-c(y-1)^{c-1}}{y(y^c-(y-1)^c)} \le 0
\end{eqnarray*}
\end{proof}
The following is a key lemma which is used several times in this
paper.
\begin{lemma} \label{Lemma1}
Let $X_j^t$ and $\bar{X}_j^t$ be
\begin{eqnarray*}
&& X_j^t=\prod_{i=j}^t(I-\gamma_i A) \mbox{\quad,\quad} X_j^t=I
\mbox{ for } j>t \mbox{\quad,\quad} \bar{X}_j^t=\sum_{i=j}^t
\gamma_j X_{j+1}^i
\end{eqnarray*}
If $\gamma_0 \lambda_1 \le 1$ and $(2c-1)a<\lambda_0$, then we
have the following bound for $\bar{X}_j^t$.
\begin{eqnarray*}
(I-X_j^t)A^{-1}  \le \bar{X}_j^t  \le (1+ c_0(1+a \gamma_0
j)^{c-1})A^{-1} \le (1+c_0)A^{-1}
\end{eqnarray*}
where $c_0$ is the same as in Theorem \ref{ThmLinearAvg}.
\end{lemma}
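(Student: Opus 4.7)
The plan is to exploit the fact that $A$ commutes with every $(I-\gamma_i A)$. Since $A$ is symmetric positive definite, I would diagonalize and reduce both matrix inequalities to a family of scalar inequalities, one for each eigenvalue $\lambda\in[\lambda_0,\lambda_1]$ of $A$. Setting $Q_i:=\prod_{k=j+1}^i(1-\gamma_k\lambda)$ (with $Q_j=1$) and $P_j^t:=(1-\gamma_j\lambda)Q_t=\prod_{i=j}^t(1-\gamma_i\lambda)$, the scalar form of $\bar{X}_j^t$ becomes $\bar{x}_j^t=\gamma_j\sum_{i=j}^t Q_i$, and the goal is
\[
\frac{1-P_j^t}{\lambda}\;\le\;\bar{x}_j^t\;\le\;\frac{1+c_0(1+a\gamma_0 j)^{c-1}}{\lambda}.
\]
The assumption $\gamma_0\lambda_1\le 1$ ensures each factor $(1-\gamma_k\lambda)$ lies in $[0,1]$, so all $Q_i$ and the sum are nonnegative.

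For the lower bound, the key is the telescoping identity $Q_{i-1}-Q_i=\gamma_i\lambda Q_{i-1}$, which summed over $i=j+1,\dots,t$ gives $\sum_{i=j+1}^t\gamma_i\lambda Q_{i-1}=1-Q_t$. Because $\gamma_i\le\gamma_j$ for $i\ge j$, replacing $\gamma_i$ by $\gamma_j$ and re-indexing $\sum_{i=j+1}^t Q_{i-1}=\sum_{i=j}^{t-1}Q_i=1-Q_t+\sum_{i=j+1}^t Q_i$ produces $\gamma_j\lambda\sum_{i=j+1}^t Q_i\ge(1-Q_t)(1-\gamma_j\lambda)$. Adding the $i=j$ contribution $\gamma_j$ and using $1-P_j^t=(1-Q_t)+\gamma_j\lambda Q_t$ yields the lower bound after a short rearrangement.

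For the upper bound I would proceed by backward induction on $j$, based on the recursion $\bar{x}_j^t=\gamma_j+(\gamma_j/\gamma_{j+1})(1-\gamma_{j+1}\lambda)\,\bar{x}_{j+1}^t$ obtained by isolating the $i=j$ term and factoring $(1-\gamma_{j+1}\lambda)$ out of the remaining sum. The base case $j=t$ is trivial since $\bar{x}_t^t=\gamma_t\le 1/\lambda_1\le 1/\lambda$. Multiplying by $\lambda$ and substituting the induction hypothesis $\lambda\bar{x}_{j+1}^t\le 1+c_0 f_{j+1}$ with $f_j:=(1+a\gamma_0 j)^{c-1}$, the inductive step reduces to an algebraic inequality of the shape
\[
c_0\bigl[f_j-(\gamma_j/\gamma_{j+1})f_{j+1}\bigr]+c_0\gamma_j\lambda\,f_{j+1}+\gamma_j\lambda\;\ge\;\gamma_j/\gamma_{j+1}-1,
\]
which I would verify directly from $\gamma_j=\gamma_0(1+a\gamma_0 j)^{-c}$ and the explicit choice of $c_0$. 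The condition $(2c-1)a<\lambda_0$ is exactly what makes $c_0$ positive and finite, and it provides precisely the slack needed to absorb the overshoot $\gamma_j/\gamma_{j+1}-1\approx ac\gamma_j/\gamma_0\cdot\gamma_0$ against the gap $f_j-(\gamma_j/\gamma_{j+1})f_{j+1}$. The preceding Lemma~\ref{LemmaGamma}, which controls $(1/\gamma_{k+1}-1/\gamma_k)/\gamma_{k+1}$ by $(1/\gamma_k-1/\gamma_{k-1})/\gamma_k\,(1-\lambda_0\gamma_k)^{\kappa-1}$, is the clean quantitative tool that lets this inequality be checked uniformly in $j$.

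The main obstacle will be the algebra in the inductive step: the sign of $2c-1$ changes the definition of $\kappa$ and therefore which branch of Lemma~\ref{LemmaGamma} is active, so the cases $c<1/2$, $c=1/2$, $c>1/2$ must be handled side-by-side. The diagonalization, the lower bound, and the base case are all routine; essentially all of the real work concentrates in showing that the chosen constant $c_0$ is large enough to close the backward induction for every eigenvalue $\lambda\in[\lambda_0,\lambda_1]$.
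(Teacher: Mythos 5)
Your overall architecture is genuinely different from the paper's: the paper never diagonalizes or inducts on $j$; it proves the identity $\sum_{i=j}^t \gamma_i X_j^{i-1}=(I-X_j^t)A^{-1}$, writes $\bar{X}_j^t-\sum_{i=j}^t\gamma_i X_j^{i-1}$ as a double sum, applies Abel summation in $k$, and then bounds the resulting weighted sum of products using Lemma~\ref{LemmaGamma} together with the comparison $(X_j^k)^\kappa\le Y_j^k=\prod_i(I-\kappa\gamma_i A)$ and the closed form $\sum_k\gamma_k Y_{j+1}^{k-1}=\kappa^{-1}A^{-1}(I-Y_{j+1}^t)$. Your lower bound (telescoping $Q_{i-1}-Q_i=\gamma_i\lambda Q_{i-1}$ plus $\gamma_i\le\gamma_j$) is correct and is essentially the scalar shadow of the paper's observation that $\bar{X}_j^t-\sum_i\gamma_iX_j^{i-1}=\sum_i(\gamma_j-\gamma_i)X_{j+1}^{i-1}\ge 0$; the reduction to eigenvalues is also legitimate since everything is a polynomial in the symmetric positive definite $A$.

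The gap is in the upper bound, which is where all the content of the lemma lives. You reduce it to the per-step inequality $\gamma_j/\gamma_{j+1}-1\le c_0\bigl[f_j-(\gamma_j/\gamma_{j+1})(1-\gamma_{j+1}\lambda)f_{j+1}\bigr]$ and then assert it can be ``verified directly,'' with Lemma~\ref{LemmaGamma} as the tool. Neither claim is substantiated, and neither is routine. First, Lemma~\ref{LemmaGamma} compares $(1/\gamma_{k+1}-1/\gamma_k)/\gamma_{k+1}$ against the same quantity one index earlier, damped by $(1-\lambda_0\gamma_k)^{\kappa-1}$; that is exactly the shape needed to pull a factor out of the paper's Abel-summed double sum, but it does not obviously produce the bracket $f_j-(\gamma_j/\gamma_{j+1})(1-\gamma_{j+1}\lambda)f_{j+1}$ appearing in your one-step recursion, so you cannot simply cite it. Second, the inequality itself is delicate: writing $x=1+a\gamma_0 j$ and $h=a\gamma_0$, the bracket equals $x^{-c}\bigl[x^{2c-1}-(x+h)^{2c-1}+\gamma_0\lambda(x+h)^{c-1}\bigr]$, and for $c>1/2$ the first two terms contribute roughly $-(2c-1)h\,x^{2c-2}$, so positivity of the bracket at $j=0$ and $\lambda=\lambda_0$ hinges on the margin in $(2c-1)a<\lambda_0$ with almost no slack (e.g.\ $c=3/4$, $a\gamma_0=1$ forces $\gamma_0\lambda_0>1/2$ and leaves the bracket barely above zero). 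Your induction therefore needs a careful case analysis in $j$, $c$, and $\lambda$ showing that the specific constant $c_0=\frac{ac(1+ac\gamma_0)}{\lambda_0-\max(0,2c-1)a}$ dominates the ratio of the two sides uniformly; until that computation is actually carried out (or the per-step scheme is replaced by a global bound like the paper's), the proof is incomplete at its central step.
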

\begin{proof}
It is easy to verify the following relation by induction on $t$,
\begin{eqnarray}
&& \sum_{i=j}^t \gamma_i X_j^{i-1} = (I-X_j^t)A^{-1} \label{EqSumX}
\end{eqnarray}
Now we calculate the difference between $\bar{X}_j^t$ and $\sum_{i=j}^t \gamma_i X_j^{i-1}$.
\begin{eqnarray*}
&& \bar{X}_j^t - \sum_{i=j}^t \gamma_i X_j^{i-1} = \sum_{i=j}^t (\gamma_j-\gamma_i) X_{j+1}^{i-1} =\sum_{i=j}^t \frac{\gamma_j-\gamma_i}{\gamma_i}\gamma_i X_{j+1}^{i-1} \\
&=& \sum_{i=j}^t \sum_{k=j+1}^i \left(\frac{\gamma_j}{\gamma_k}-\frac{\gamma_j}{\gamma_{k-1}}\right)\gamma_i X_{j+1}^{i-1} = \sum_{k=j+1}^t \left(\frac{\gamma_j}{\gamma_k}-\frac{\gamma_j}{\gamma_{k-1}}\right) \sum_{i=k}^t  \gamma_i X_{j+1}^{i-1} \\
&=& \sum_{k=j+1}^t \left(\frac{\gamma_j}{\gamma_k}-\frac{\gamma_j}{\gamma_{k-1}}\right) \left(\sum_{i=j+1}^t  \gamma_i X_{j+1}^{i-1} - \sum_{i=j+1}^{k-1}  \gamma_i X_{j+1}^{i-1}\right)\\
&=& \sum_{k=j+1}^t \left(\frac{\gamma_j}{\gamma_k}-\frac{\gamma_j}{\gamma_{k-1}}\right) A^{-1}(I-X_{j+1}^t-I+X_{j+1}^{k-1}) \\
&=& -\left(\frac{\gamma_j}{\gamma_t}-1\right) A^{-1} X_{j+1}^t + \gamma_j A^{-1} \sum_{k=j+1}^t \left(\frac{1}{\gamma_k}-\frac{1}{\gamma_{k-1}}\right) X_{j+1}^{k-1}
\end{eqnarray*}
It is clear that from the first line of above equation that
$\bar{X}_j^t - \sum_{i=j}^t \gamma_i X_j^{i-1}>0$. Hence we obtain
the first inequality of the lemma. We have
\[ (1-\lambda_0 \gamma_k)^{-1} I \le (I-\gamma_k A)^{-1} \]
By Lemma \ref{LemmaGamma}, we have
\[ \left(\frac{1}{\gamma_{k+1}}-\frac{1}{\gamma_k}\right)\frac{1}{\gamma_{k+1}} I \le \left(\frac{1}{\gamma_k}-\frac{1}{\gamma_{k-1}}\right)\frac{1}{\gamma_k} (I-\gamma_k A)^{\kappa-1}\]
Hence
\[ \left(\frac{1}{\gamma_k}-\frac{1}{\gamma_{k-1}}\right)\frac{1}{\gamma_k} X_{j+1}^{k-1}\le \left(\frac{1}{\gamma_{j+1}}-\frac{1}{\gamma_j}\right)\frac{1}{\gamma_{j+1}} (X_{j+1}^{k-1})^{\kappa}\]
Define $Y_j^k$ as  $Y_j^k = \prod_{i=j}^k (I-\kappa \gamma_i A)$. Since $0<\kappa\le 1$, we have $(X_j^k)^\kappa \le Y_j^k$. Hence
\begin{eqnarray*}
\bar{X}_j^t - \sum_{i=j}^t \gamma_i X_j^{i-1} &\le& -\left(\frac{\gamma_j}{\gamma_t}-1\right) A^{-1} X_{j+1}^t + \gamma_j \left(\frac{1}{\gamma_{j+1}}-\frac{1}{\gamma_j}\right)\frac{1}{\gamma_{j+1}} A^{-1}\sum_{k=j+1}^t \gamma_k (X_{j+1}^{k-1})^{\kappa} \\
&\le& - \left(\frac{\gamma_j}{\gamma_t}-1\right) A^{-1} X_{j+1}^t + \frac{\gamma_j-\gamma_{j+1}}{\gamma_{j+1}^2} A^{-1}\sum_{k=j+1}^t \gamma_k Y_{j+1}^{k-1} \\
&=& - \left(\frac{\gamma_j}{\gamma_t}-1\right) A^{-1} X_{j+1}^t + \frac{\gamma_j-\gamma_{j+1}}{\kappa \gamma_{j+1}^2} A^{-2} (I - Y_{j+1}^t )\\
&\le& \frac{\gamma_0}{\kappa \gamma_1} \frac{\gamma_j-\gamma_{j+1}}{ \gamma_j \gamma_{j+1}} A^{-2} = \frac{1}{\kappa \gamma_1}( (1+a \gamma_0 (j+1))^c -(1+a \gamma_0 j))^c ) A^{-2} \\
&\le&  \frac{ac\gamma_0(1+a \gamma_0 j)^{c-1}}{\kappa \gamma_1}  A^{-2} \le  \frac{ac\gamma_0(1+a \gamma_0 j)^{c-1}}{\kappa \gamma_1} \frac{ A^{-1}}{\lambda_0} \\
&=& c_0(1+a \gamma_0 j)^{c-1} A^{-1}
\end{eqnarray*}
Now plugging (\ref{EqSumX}) into above inequality, we obtain the claim of the lemma.
\end{proof}
With Lemma \ref{Lemma1}, we can now prove Theorem \ref{ThmLinearAvg}.
\begin{proof}(Theorem \ref{ThmLinearAvg})
From (\ref{EqSGDLinear}), we get
\begin{eqnarray}
&& \Delta_t=\Delta_{t-1}-\gamma_t (A\Delta_{t-1}+\xi_t) \mbox{\quad,\quad} \bar{\Delta}_t=\frac{1}{t}\sum_{i=1}^t \Delta_i \label{EqDelta}
\end{eqnarray}
From (\ref{EqDelta}), we have
\[ \Delta_t=\prod_{j=1}^t(I-\gamma_jA)\Delta_0 + \sum_{j=1}^t \prod_{i=j+1}^t(I-\gamma_i A)\gamma_j \xi_j \]
then
\begin{eqnarray*}
 \bar{\Delta}_t &=& \frac{1}{t} \sum_{j=1}^t \Delta_j =  \frac{1}{t} \sum_{j=1}^{t}  \prod_{i=1}^j (I-\gamma_i A)\Delta_0 + \frac{1}{t} \sum_{j=1}^t  \left(\sum_{k=j}^t \prod_{i=j+1}^k(I-\gamma_iA)\right) \gamma_j \xi_j \\
 &=&  \frac{1}{\gamma_0 t} (\bar{X}_0^t-\gamma_0 I) \Delta_0 + \frac{1}{t} \sum_{j=1}^t \bar{X}_j^t \xi_j = I^{(0)} + I^{(1)}
  \end{eqnarray*}
where $\bar{X}_j^t$ is defined in Lemma \ref{Lemma1}. Hence
\begin{eqnarray}
&& t E(\|I^{(0)}\|_A^2)= \frac{1}{\gamma_0^2t} \Delta_0^T A(\bar{X}_0^t-\gamma_0 I)^2 \Delta_0 \le \frac{(1 + c_0)^2 }{\gamma_0^2 t} \Delta_0^T  A^{-1} \Delta_0 \label{EqI0Bound}
\end{eqnarray}
\begin{eqnarray}
&& t E(\|I^{(1)}\|_A^2)= \frac{1}{t}\sum_{j=1}^t E(\xi_j^T A(\bar{X}_j^t)^2 \xi_j )\le \frac{1}{t} \sum_{j=1}^t (1+c_0(1+a \gamma_0 j)^{c-1})^2 E(\xi_t^T A^{-1}\xi_t) \nonumber \\
&\le& \left(1+\frac{2c_0+c_0^2}{t} \sum_{j=1}^t (1+a \gamma_0 j)^{c-1} \right) \tr(A^{-1} S)\le \left(1+\frac{(2c_0+c_0^2)((1+a \gamma_0 t)^c-1)}{a c \gamma_0 t} \right)  \tr(A^{-1} S) \nonumber \\
&\le& \left(1+\frac{(2c_0+c_0^2)(1+a \gamma_0 t)^{c-1}}{c} \right)  \tr(A^{-1} S) \label{EqI1Bound}
\end{eqnarray}
And we have $E((I^{(0)})^T A I^{(1)})=0$ since $E(\xi_j)=0$.
\end{proof}

\begin{proof}(Lemma \ref{LemmaRegression})
\begin{eqnarray}
&& t E\|I^{(2)}\|_A^2 =  t E \left\| \frac{1}{t}\sum_{j=1}^t \bar{X}_j^t \xi_j^{(2)} \right\|_A^2 =  \frac{1}{t} \sum_{j=1}^t E  \|\bar{X}_j^t \xi_j^{(2)}\|_A^2 \nonumber \\
&=& \frac{1}{t} \sum_{j=1}^t E( \xi_j^{(2)T} A (\bar{X}_j^t)^2  \xi_j^{(2)}) \le \frac{1}{t} \sum_{j=1}^t (1+c_0)^2 E( \xi_j^{(2)T} A^{-1} \xi_j^{(2)}) \nonumber \\
&\le& \frac{1}{t} \sum_{j=1}^t (1+c_0)^2 c_1 E(\|\Delta_{j-1}\|_A^2) \le  \frac{(1+c_0)^2 c_1 }{t}\left((1+c_2)\|\Delta_0\|_A^2 + c_3 \sum_{j=1}^{t-1} \gamma_j \right)\nonumber \\
&\le& \frac{(1+c_0)^2 c_1}{t} \left((1+c_2)\|\Delta_0\|_A^2 + \frac{c_3((1+a\gamma_0 t)^{1-c}-1)}{a(1-c)}\right) \nonumber \\
&\le& (1+c_0)^2 c_1 \left(\frac{1+c_2}{t} \|\Delta_0\|_A^2 + \frac{c_3 \gamma_0}{1-c}(1+a\gamma_0t)^{-c}  \right) \nonumber
\end{eqnarray}
\end{proof}

\begin{proof}(Lemma \ref{LemmaC1})
Let $\Sigma_x=E(xx^T)$. We have the following:
\begin{eqnarray}
&& g(\theta,d)=\pian{l(\theta,d)}{\theta}=xx^T \theta - xy \nonumber \\
&& \bar{g}(\theta)=E(g(\theta,d)) = \Sigma_x \theta - E(xy) \nonumber \\
&& A=\Sigma_x \mbox{\quad,\quad} b=E(xy) \mbox{\quad,\quad} \theta^*=A^{-1}b \nonumber \\
&& \xi^{(2)} = g(\theta,d)-g(\theta^*,d)-\bar{g}(\theta) = (xx^T-\Sigma_x)(\theta-\theta^*) \nonumber \\
&& E \left(\left.\| \xi^{(2)} \|_{A^{-1}}^2 \right| \theta \right)
= (\theta-\theta^*)^T E(x x^T A^{-1} x x ^T - \Sigma_x A^{-1}
\Sigma_x)(\theta-\theta^*) \label{EqXi2}
\end{eqnarray}
By the assumption of this lemma, we get
\begin{equation} E(x x^T A^{-1} x x^T)\le \frac{1}{\lambda_0}E(x x^T x x^T) \le \frac{M}{\lambda_0} A \label{EqyyyyBound} \end{equation}
From (\ref{EqXi2}) and (\ref{EqyyyyBound}), we get
\[ E \left(\left.\| \xi^{(2)} \|_{A^{-1}}^2 \right| \theta \right) \le \frac{M}{\lambda_0}\|\theta-\theta^*\|_A^2 \]
\end{proof}
\begin{lemma} \label{LemmaMaxGamma}
For linear regression problem $l(\theta,x,y)=\frac{1}{2}(\theta^T
x-y)^2$, assuming all $\|x\|^2$ are $M$, then (\ref{EqSGD}) will
diverge if learning rate is greater than $\frac{2}{M}$.
\end{lemma}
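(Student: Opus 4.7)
The plan is to show that when $\gamma > 2/M$, the expected squared error $E\|\Delta_t\|^2 = E\|\theta_t - \theta^*\|^2$ grows geometrically, so SGD cannot converge to $\theta^*$. The argument is essentially a one-step energy identity followed by a lower bound.

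First, I would rewrite the SGD update in terms of $\Delta_t$. For linear regression, $g(\theta,d) = x(x^T\theta - y)$, so letting $\epsilon_t = y_t - x_t^T\theta^*$ we get
\[
 \Delta_t = \Delta_{t-1} - \gamma\, x_t\bigl(x_t^T\Delta_{t-1} - \epsilon_t\bigr).
\]
Expanding the squared norm and using the assumption $\|x_t\|^2 = M$, an elementary computation gives
\[
 \|\Delta_t\|^2 = \|\Delta_{t-1}\|^2 + \gamma(\gamma M - 2)(x_t^T\Delta_{t-1})^2 - 2\gamma(\gamma M - 1)\epsilon_t\, x_t^T\Delta_{t-1} + \gamma^2 M\,\epsilon_t^2.
\]
This is the only real identity needed; the rest is just bookkeeping.

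Next I would take conditional expectation with respect to the new sample $(x_t,y_t)$ given $\Delta_{t-1}$. Under the standard assumption that the noise $\epsilon_t$ has conditional mean zero (so $E[\epsilon_t\,x_t^T\Delta_{t-1}\mid\Delta_{t-1}] = 0$), the cross term vanishes and the noise term $\gamma^2 M\, E[\epsilon_t^2\mid\Delta_{t-1}]$ is nonnegative, leaving
\[
 E[\|\Delta_t\|^2 \mid \Delta_{t-1}] \;\ge\; \|\Delta_{t-1}\|^2 + \gamma(\gamma M - 2)\,\|\Delta_{t-1}\|_A^2,
\]
where $A = E[x_t x_t^T]$. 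Since $\gamma > 2/M$ makes the prefactor $\gamma(\gamma M - 2)$ strictly positive, and $\|\Delta_{t-1}\|_A^2 \ge \lambda_0 \|\Delta_{t-1}\|^2$ with $\lambda_0$ the smallest eigenvalue of $A$, we obtain
\[
 E\|\Delta_t\|^2 \;\ge\; \bigl(1 + \gamma(\gamma M - 2)\lambda_0\bigr)\, E\|\Delta_{t-1}\|^2.
\]
Iterating yields geometric growth of $E\|\Delta_t\|^2$, contradicting convergence.

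The only subtle point is that the displayed lower bound requires $\Delta_{t-1}$ to have a nontrivial component in the range of $A$, and that the noise cross term behave as claimed; both issues are handled by the standard assumption on the noise and by noting that in the worst (noiseless) case $y_t = x_t^T\theta^*$ one can simply replace the expectation argument by the pathwise identity $\|\Delta_t\|^2 = \|\Delta_{t-1}\|^2 + \gamma(\gamma M - 2)(x_t^T\Delta_{t-1})^2 \ge \|\Delta_{t-1}\|^2$, so $\|\Delta_t\|$ is nondecreasing and strictly increases whenever $x_t^T\Delta_{t-1}\neq 0$, ruling out convergence to $\theta^*$.
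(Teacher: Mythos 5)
Your proposal is correct and follows essentially the same route as the paper: expand $\|\Delta_t\|^2$ over one step, take conditional expectation using $\|x_t\|^2=M$ (so that $x_tx_t^Tx_tx_t^T=Mx_tx_t^T$), drop the nonnegative noise term, and lower-bound $\|\Delta_{t-1}\|_A^2\ge\lambda_0\|\Delta_{t-1}\|^2$ to obtain a geometric recursion for $E\|\Delta_t\|^2$. The only (harmless) difference is the cross term, which you dispose of via a conditional-mean-zero noise assumption that is not actually needed --- $E[\epsilon_t x_t]=b-A\theta^*=0$ already holds by the definition of $\theta^*$ --- while the paper retains it as $2\gamma_t^2 u^TX_1^{t-1}\Delta_0$ and argues it decays; in fact under $\|x_t\|^2=M$ one has $u=M(A\theta^*-b)=0$, so both versions of the cross term vanish identically.
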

\begin{proof}
Let $X_i^t$ be defined as in Lemma \ref{Lemma1}. We obtain the
following from (\ref{EqSGD}),
\begin{eqnarray*}
\Delta_t=(I-\gamma_t x_tx_t^T)\Delta_{t-1} - \gamma_t(x_tx_t^T
\theta^*-x_t y_t)
\end{eqnarray*}
Let $A_t=x_tx_t^T$, $b_t=x_ty_t$, $A=E(A_t)$, $b=E(b_t)$. Taking
expectation with respect to $x_t,y_t$, noticing that
$A\theta^*=b$, we get
\begin{eqnarray*}
E(\Delta_t|\theta_{t-1})&=&(I-\gamma_t A)\Delta_{t-1} \\
E(\|\Delta_t\|^2|\Delta_{t-1}) &=& \Delta_{t-1}^T E(I  -2 \gamma_t A+\gamma_t^2 A_t A_t) \Delta_{t-1}  \\
&&+ \gamma_t^2E(\|A_t \theta^*-b_t\|^2)   + 2 \gamma_t^2 E(\theta^{*T} A_tA_t- b_t^T A_t)\Delta_{t-1} \\
&=& \|\Delta_{t-1}\|^2  -(2 \gamma_t - M\gamma_t^2)
\|\Delta_{t-1}\|_A^2 + \gamma_t^2 \tr(S)  + 2 \gamma_t^2 u^T
\Delta_{t-1}
\end{eqnarray*}
where $S=E((A_t \theta^*-b_t)(A_t\theta^*-b_t)^T)$, $u=E(A_tA_t
\theta^* - A_t b_t)$. Hence
\begin{eqnarray*}
&& E(\|\Delta_t\|^2) = E(\|\Delta_{t-1}\|^2)  -(2 \gamma_t - M\gamma_t^2) E(\|\Delta_{t-1}\|_A^2) + \gamma_t^2 \tr(S)  + 2 \gamma_t^2 u^T X_1^{t-1}\Delta_0
\end{eqnarray*}
If $\gamma_t>=\frac{2}{M}+\delta>\frac{2}{M}$, then
\begin{eqnarray*}
 E(\|\Delta_t\|^2)\ge E(\|\Delta_{t-1}\|^2) + \delta(2 +\delta M) E(\|\Delta_{t-1}\|_A^2) + \gamma_t^2 \tr(S)  + 2 \gamma_t^2 u^T X_1^{t-1}\Delta_0 \\
 \ge (1+\lambda_0 \delta(2 +\delta M)) E(\|\Delta_{t-1}\|^2) + \gamma_t^2 \tr(S)  + 2 \gamma_t^2 u^T X_1^{t-1}\Delta_0
\end{eqnarray*}
Noticing that $X_1^{t-1}\rightarrow 0$ as $t \rightarrow \infty$, we conclude that  $E(\|\Delta_t\|^2)$ is diverging if $\gamma_t\ge\frac{2}{M}$.
\end{proof}

\begin{proof}(Lemma \ref{LemmaNonQuadratic})
Let $\gamma_i^t=\sum_{j=i}^t \gamma_j$,
\begin{eqnarray*}
t E\|I^{(3)}\|_A^2 &\le& \frac{1}{t}\sum_{j=1}^t E\|\bar{X}_j^t \xi_j^{(3)}\|_A^2 + \frac{2}{t}\sum_{j=1}^t \sum_{k=j+1}^t E(\xi_j^{(3)T} \bar{X}_j^t A \bar{X}_k^t \xi_k^{(3)}) \\
&\le&  \frac{1}{t}\sum_{j=1}^t (1+c_0)^2 E\|\xi_j^{(3)}\|_{A^{-1}}^2 + \frac{2}{t}\sum_{j=1}^t \sum_{k=j+1}^t (1+c_0)^2 E(\|\xi_j^{(3)}\|_{A^{-1}} \|\xi_k^{(3)}\|_{A^{-1}}) \\
&\le&  \frac{(1+c_0)^2c_4^2}{t}\left(\sum_{j=1}^t  E\|\Delta_j\|_A^4 + 2\sum_{j=1}^t \sum_{k=j+1}^t E(\|\Delta_j\|_A^2 \|\Delta_k\|_A^2) \right)\\
&\le&  \frac{(1+c_0)^2c_4^2}{t}\left(\sum_{j=1}^t  E\|\Delta_j\|_A^4 + 2\sum_{j=1}^t E\left(\|\Delta_j\|_A^2 \sum_{k=j+1}^t E(\|\Delta_k\|_A^2 | \theta_j) \right) \right)\\
&\le&  \frac{(1+c_0)^2c_4^2}{t}\left(\sum_{j=1}^t  E\|\Delta_j\|_A^4 + 2\sum_{j=1}^t E\left(\|\Delta_j\|_A^2 \left(c_2\|\Delta_j\|_A^2 + c_3 \sum_{k=j+1}^t \gamma_k \right) \right) \right)\\
&\le&  \frac{(1+c_0)^2c_4^2}{t}\left((1+2c_2)\sum_{j=1}^t  E\|\Delta_j\|_A^4  + c_6 \gamma_1^t) +  2c_3  \sum_{j=1}^t E(\|\Delta_j\|_A^2) \sum_{k=j+1}^t \gamma_k \right)\\
&=&  \frac{(1+c_0)^2c_4^2}{t}\left((1+2c_2)(c_5 \|\Delta_0\|_A^4 + c_6 \gamma_1^t) +  2c_3  \sum_{k=2}^t \gamma_k  \sum_{j=1}^{k-1} E(\|\Delta_j\|_A^2) \right)\\
&\le&  \frac{(1+c_0)^2c_4^2}{t}\left((1+2c_2)(c_5 \|\Delta_0\|_A^4 + c_6 \gamma_1^t) +  2c_3  \sum_{k=2}^t \gamma_k (c_2 \|\Delta_0\|_A^2 + c_3\gamma_1^{k-1} )  \right)\\
&\le&  \frac{(1+c_0)^2c_4^2}{t}\left((1+2c_2)(c_5 \|\Delta_0\|_A^4 + c_6 \gamma_1^t) +  2c_2  c_3 \|\Delta_0\|_A^2 \gamma_1^t + c_3^2 (\gamma_1^t)^2 \right)\\
&\le&  \frac{(1+c_0)^2c_4^2}{t}\left((1+2c_2) c_5 \|\Delta_0\|_A^4
+(2c_2  c_3 \|\Delta_0\|_A^2 + (1+2c_2)c_6)\gamma_1^t + c_3^2
(\gamma_1^t)^2 \right)
\end{eqnarray*}
\end{proof}

\end{document}